\documentclass{article}

% if you need to pass options to natbib, use, e.g.:
%\PassOptionsToPackage{numbers}{natbib}
% before loading neurips_2023

% ready for submission
% \usepackage{neurips_2023}
%\usepackage[preprint]{neurips_2023}
% \usepackage{neurips_2024}

\usepackage[numbers]{natbib}
\usepackage{authblk}
\usepackage{hyperref}       % hyperlinks

\usepackage[utf8]{inputenc} % allow utf-8 input
\usepackage[T1]{fontenc}    % use 8-bit T1 fonts

\usepackage{url}            % simple URL typesetting
\usepackage{booktabs}       % professional-quality tables
\usepackage{amsfonts}       % blackboard math symbols
\usepackage{nicefrac}       % compact symbols for 1/2, etc.
\usepackage{microtype}      % microtypography
\usepackage{xcolor}         % colors 

\usepackage{comment}
\usepackage{graphicx}
\usepackage{epstopdf}
\usepackage{bm} 
\usepackage{rotating}
\usepackage{color}
\usepackage{amsfonts}
\usepackage{amsmath}
\usepackage{amssymb}
\usepackage{amsthm}
\usepackage{graphicx}
\usepackage{listings}
\usepackage{float}
\usepackage{makecell}
\usepackage{algorithm}
\usepackage{algpseudocode}
\usepackage{soul}
\usepackage[justification=centering]{caption}
\usepackage{subcaption}
\usepackage[normalem]{ulem}

\usepackage[textsize=tiny]{todonotes}
\usepackage{cleveref}
\usepackage{wrapfig}

\textwidth = 6.5 in
\textheight = 9 in
\oddsidemargin = 0.0 in
\evensidemargin = 0.0 in
\topmargin = 0.0 in
\headheight = 0.0 in
\headsep = 0.0 in
\setlength\parindent{24pt}

\newtheorem{thm}{Theorem}[section]
\newtheorem{lemma}[thm]{Lemma}
\newtheorem{defn}{Definition}
\newtheorem{proposition}{Proposition}
\newtheorem{cor}[thm]{Corollary}

\def\E{\mathbb{E}}

%New colors defined below
\definecolor{codegreen}{rgb}{0,0.6,0}
\definecolor{codegray}{rgb}{0.5,0.5,0.5}
\definecolor{codepurple}{RGB}{50,157,168}
\definecolor{mycolor}{RGB}{51,177,255}
\definecolor{backcolour}{gray}{0.95}%{rgb}{0.95,0.95,0.92}

%%%%%%%%%% Start TeXmacs macros

\newcommand*\dd{\mathop{}\!\mathrm{d}}

%%%%%%%%%% End TeXmacs macros

%Code listing style named "mystyle"
\lstdefinestyle{mystyle}{
  backgroundcolor=\color{backcolour},   commentstyle=\color{codegreen},
  keywordstyle=\color{mycolor},%magenta},
  numberstyle=\tiny\color{codegray},
  stringstyle=\color{codepurple},
  basicstyle=\ttfamily\footnotesize,
  breakatwhitespace=false,         
  breaklines=true,                 
  captionpos=b,                    
  keepspaces=true,                 
  numbers=left,                    
  numbersep=5pt,                  
  showspaces=false,                
  showstringspaces=false,
  showtabs=false,                  
  tabsize=2
}

%"mystyle" code listing set
\lstset{style=mystyle}

\newcommand{\ab}[1]{\textcolor{red}{AB:  #1}}
\newcommand{\jz}[1]{\textcolor{blue}{JZ:  #1}}

% to compile a preprint version, e.g., for submission to arXiv, add add the
% [preprint] option:
%     \usepackage[preprint]{neurips_2023}

% to compile a camera-ready version, add the [final] option, e.g.:
%     \usepackage[final]{neurips_2023}

% to avoid loading the natbib package, add option nonatbib:
%    \usepackage[nonatbib]{neurips_2023}

\title{On the design of scalable, high-precision \\spherical-radial Fourier features}

% The \author macro works with any number of authors. There are two commands
% used to separate the names and addresses of multiple authors: \And and \AND.
%
% Using \And between authors leaves it to LaTeX to determine where to break the
% lines. Using \AND forces a line break at that point. So, if LaTeX puts 3 of 4
% authors names on the first line, and the last on the second line, try using
% \AND instead of \And before the third author name.

\author{Ayoub Belhadji$^{\dagger}$\footnote{Corresponding author: abelhadj@mit.edu}}
\author{Qianyu Julie Zhu$^{\dagger}$}
\author{Youssef Marzouk$^{\dagger}$}

\affil{$\phantom{a}^{\dagger}$ Massachusetts Institute of Technology}
% \author{%
%   Ayoub Belhadji
%   \thanks{Use footnote for providing further information
%     about author (webpage, alternative address)---\emph{not} for acknowledging
%     funding agencies.} \\
%   Department of Computational Science\\
%   Massachusetts Institute of Technology\\
%   Pittsburgh, PA 15213 \\
%   \texttt{hippo@cs.cranberry-lemon.edu} \\
%   % examples of more authors
%   \And
%   Qianyu (Julie) Zhu \\
%   Massachusetts Institute of Technology\\
%   Address \\
%   \texttt{email} \\
%   \AND
%   Coauthor \\
%   Massachusetts Institute of Technology\\
%   Address \\
%   \texttt{email} \\
  % \And
  % Coauthor \\
  % Affiliation \\
  % Address \\
  % \texttt{email} \\
  % \And
  % Coauthor \\
  % Affiliation \\
  % Address \\
  % \texttt{email} \\

\begin{document}

\maketitle

\begin{abstract}
Approximation using Fourier features is a popular technique for scaling kernel methods to large-scale problems, with myriad applications in machine learning and statistics. This method replaces the integral representation of a shift-invariant kernel with a sum using a quadrature rule. The design of the latter is meant to reduce the number of features required for high-precision approximation. Specifically, for the squared exponential kernel, one must design a quadrature rule that approximates the Gaussian measure on $\mathbb{R}^d$. Previous efforts in this line of research have faced difficulties in higher dimensions. We introduce a new family of quadrature rules that accurately approximate the Gaussian measure in higher dimensions by exploiting its isotropy. These rules are constructed as a tensor product of a radial quadrature rule and a spherical quadrature rule. Compared to previous work, our approach leverages a thorough analysis of the approximation error, which suggests natural choices for both the radial and spherical components. We demonstrate that this family of Fourier features yields improved approximation bounds.

\end{abstract}

\section{Introduction}
Fourier features provide a powerful technique to overcome the scalability issues inherent in traditional kernel methods.
The very first instance of this approach was proposed by Rahimi and Recht \cite{Rahimi2007RFF}, which consists in approximating the integral that appears in the Bochner representation of a shift-invariant positive definite kernel $
\kappa: \mathbb{R}^{d} \times \mathbb{R}^{d} \rightarrow \mathbb{R}$:
\begin{equation}\label{eq:bochner_approx}
\kappa(x,y) = \int_{\mathbb{R}^{d}} \cos(\langle \omega,x-y\rangle) \mathrm{d} \Lambda(\omega) \approx \frac{1}{M}\sum_{j=1}^M \cos(\langle \omega_j,x-y\rangle),
\end{equation}

where $\Lambda$ is a probability measure supported on $\mathbb{R}^{d}$ and $\omega_{1}, \dots, \omega_{M}$ are i.i.d.~samples from $\Lambda$. In particular, in the case of the squared exponential kernel, the distribution $\Lambda$ corresponds to a Gaussian distribution on $\mathbb{R}^{d}$. 
% For this specific kernel, the study of the  gave birth to a rich literature .
% For instance, it is well-known that the entry-wise approximation error has variance $\mathcal{O}(1/m)$ and expected maximum error $\mathcal{O}(1/\sqrt{m})$ . 
% The generalization properties of
% random features with ridge regression was also well-studied. In \cite{RuRo17} it is showed that $\mathcal{O}(\sqrt{n}\log n)$ random features is needed to guarantee $\mathcal{O}(1/\sqrt{n})$ learning error.}
% Computation pertaining to the kernel matrix often encounters scalability challenges when dealing with vast datasets. Specifically, the computation of the kernel Gram matrix requires repeated pairwise evaluations of the kernel function, resulting in computational complexity scaling as $\mathcal{O}(n^2)$ in both time and space. Moreover, a significant portion of the operations performed on the kernel matrix, such as kernel ridge regression, requires  $\mathcal{O}(n^3)$ time and $\mathcal{O}(n^2)$ space for $n$ data points in original d-dimensional space \cite{AvKaMuMuVeZa17}.
% it was proved in \cite{AvKaMuMuVeZa17}, that the spectral equivalence of kernel matrices does not hold using the Monte Carlo approximation, unless $m = \Omega(n)$ \footnote{\ab{what is $n$?}}. In another vein,
However, several works have pointed out that the vanilla Monte Carlo approximation in~\eqref{eq:bochner_approx} is sub-optimal. For instance, quasi-Monte Carlo rules were shown to improve upon the Monte Carlo method in low-dimensional domains \cite{YaSiAvMa14,AvSiYaMa16}. Alternatively,  Gaussian quadrature was shown to significantly improve upon Monte Carlo methods in \cite{DaDeRe17,ShAv22}. However, attempting to adapt this approach to high dimensions through tensorization has yielded  poor empirical results. In contrast, \emph{spherical-radial quadrature rules} appear to offer better empirical performance \cite{MuKaBuOs18}. This construction makes use of the isotropy of the Gaussian distribution and is based on the transformation
% \ab{check the constants one last time}
\begin{equation}\label{eq:spherical_radial_decomposition_gaussian}
  \Big(\frac{\sigma^2}{2 \pi}\Big)^{d/2}  \int_{\mathbb{R}^{d}} \varphi(\omega) e^{-\frac{\sigma^2 \|\omega\|^2}{2}} \mathrm{d}\omega = \Big(\frac{\sigma^2}{2 \pi}\Big)^{d/2}\int_{\mathbb{R}_{+}} \int_{\mathbb{S}^{d-1}}  \varphi(r n) r^{d-1}e^{-\frac{\sigma^2 r^2}{2}} \mathrm{d} \pi_{\mathbb{S}^{d-1}}(n) \mathrm{d} r
\end{equation}
where $\phi(\omega) = \cos(\langle \omega,x-y\rangle)$. The r.h.s.\ of~\eqref{eq:spherical_radial_decomposition_gaussian} is then approximated by a tensor product of two quadrature rules: a radial quadrature rule and a spherical quadrature rule.
% \begin{equation}
%       \approx \sum\limits_{i_r=1}^{M_{R}} \sum\limits_{i_s=1}^{M_{S}}  w_{i_r} w_{i_s} \varphi(r_{i_r} n_{i_s})
% \end{equation}
While this approach yields strong empirical results, there has been % relatively 
little systematic study of how to design the two quadrature rules. 
% \todo{changed this sentence a bit; see old version in comments} 
% the systematic study of how to design the two quadrature rules was somewhat under explored. 
Indeed, the analysis given in~\cite{MuKaBuOs18} was restricted to radial quadrature rules of order $3$ and the spherical quadrature rules were restricted to a very specific family of constructions. Moreover, the importance of the bandwidth $\sigma$ was overlooked.

% In both 

% This approach consists in replacing 

% the use of quadrature rules was considered in the approximation of the Bochner's integral representation. More precisely, the approximation~\eqref{eq:bochner_approx} is replaced by
% \begin{equation} \label{eq:integral_approximation}
% \int_{\mathbb{R}^d} e^{ \textbf{i} \langle \omega, x-y \rangle} \mathrm{d} \Lambda(\omega) \approx \sum\limits_{j=1}^{M} \alpha_{j} e^{\textbf{i} \langle n_{j}, x-y \rangle},
% \end{equation}
% where the $\alpha_{j}$ and the $n_{j}$ are respectively the weights and the nodes of a quadrature rule. This approach was initiated in , where the Quasi-Monte Carlo method was shown to improve upon the Monte Carlo method in lower dimensions. Recently, a theoretical analysis developed in \cite{BB} demonstrated this improvement for the Halton rule. 

% While this approach yield strong empirical results, the systematic study of how to design the two quadrature rules was overlooked.
% \begin{equation}
%       \approx \sum\limits_{i_r=1}^{M_{R}} \sum\limits_{i_s=1}^{M_{S}}  w_{i_r} w_{i_s} \varphi(r_{i_r} n_{i_s})
% \end{equation}

In this work, we propose a refined analysis of spherical-radial Fourier features when used in approximating the squared exponential kernel. Our approach makes use of a decomposition of the approximation error that shows the error of each component (spherical or radial). Moreover, we use a change of variable $\xi = \sigma^2 r^2 /2$ in \eqref{eq:spherical_radial_decomposition_gaussian} to obtain a slightly different transformation
\begin{equation}\label{eq:spherical_radial_decomposition_gaussian_2}
  \Big(\frac{\sigma^2}{2 \pi}\Big)^{d/2}  \int_{\mathbb{R}^{d}} \varphi(\omega) e^{-\frac{\sigma^2 \|\omega\|^2}{2} }\mathrm{d}\omega = \frac{1}{2}\Big(\frac{1}{2 \pi}\Big)^{d/2}\int_{\mathbb{R}_{+}} \int_{\mathbb{S}^{d-1}}  \varphi\Big(\frac{\sqrt{2\xi}}{\sigma} n \Big) \xi^{d/2-1}e^{-\xi} \mathrm{d} \pi_{\mathbb{S}^{d-1}}(n) \mathrm{d} \xi .
\end{equation}
Using this modification, we show that we can take advantage of the Gaussian quadrature associated to the weight function 
\begin{equation}\label{eq:radial_density}
p_{\Xi}(\xi) =\frac{1}{\Gamma(d/2)} \xi^{d/2 -1} e^{-\xi} \chi_{[0,+\infty[}(\xi),
\end{equation}
for which the algorithmic construction is classical. Moreover, this approach allows us to study the importance of the number of nodes in the radial quadrature rule by studying the coefficients of the function $\xi \mapsto \int_{\mathbb{S}^{d-1}} \cos\big( \langle \sqrt{2 \xi} n, x-y \rangle /\sigma \big) \mathrm{d} \pi_{\mathbb{S}^{d-1}}(n)$ in the orthonormal basis formed by the generalized Laguerre polynomials, which is a well-studied family of orthogonal polynomials. Similarly, we study the coefficients of the functions $n \mapsto \cos\big( \langle \sqrt{2 \xi} n, x-y \rangle /\sigma \big)$ in the basis of the spherical harmonics defined on $\mathbb{S}^{d-1}$. This refined analysis of the coefficients enables us to highlight the difference in approximation error between two families of spherical quadrature rules: one based on Monte Carlo (SR-MC) on $\mathbb{S}^{d-1}$ and the second based on i.i.d.\ orthogonal matrices (SR-OMC). Combining the two analyses allows us to derive upper bounds for the expected value of the squared approximation error that show the importance of the number of both the radial quadrature rule and the spherical quadrature rule, the bandwidth $\sigma$, and the dimension $d$. Contrary to \cite{MuKaBuOs18}, our analysis gives insight on how these parameters influence the approximation error.

Besides these theoretical findings, our numerical simulations show that SR-OMC delivers strong empirical results, especially  when the number of features $M$ is close to the dimension $d$. Moreover, the numerical simulations show how some interpolative quadrature rules on $\mathbb{S}^{d-1}$ could yield even better quadrature rules that are worth studying in the future.

\paragraph{Notation}
Denote by $\kappa_{\sigma}$ the Gaussian kernel on $\mathbb{R}^{d} \times \mathbb{R}^{d}$ defined by 
\begin{equation}
 \kappa_{\sigma}(x,y) := e^{-\frac{\|x-y\|^2}{2 \sigma^2}}.   
\end{equation}
Moreover, denote by $\Lambda_{\sigma}$ the Gaussian density corresponding to the Fourier transform of $\kappa_{\sigma}$, given by
\begin{equation}
\Lambda_{\sigma}(\omega) := \Big(\frac{\sigma^2}{2 \pi}\Big)^{d/2} e^{- \frac{\sigma^2 \|\omega \|^2}{2}}.
\end{equation}
%
% \footnote{$Q$ is defined on functions defined on $\mathcal{X}$.}
%
We denote a quadrature rule $Q$ defined on the space of continuous function on some domain  $\mathcal{X}$ as a measure: $Q = \sum_{i=1}^{M} w_{i} \delta_{n_i}$, where the $w_i$ are the weights and the $n_{i}$ are the nodes. Given a quadrature rule of non-negative weights, defined on $\mathbb{R}^{d}$, define the
approximate kernel $\hat{\kappa} : \mathbb{R}^d \times \mathbb{R}^d \rightarrow \mathbb{R}$ associated to $Q$ to be 
\begin{equation}\label{eq:kappa_hat}
\hat{\kappa}(x,y) := \sum\limits_{i=1}^{M} w_{i} \cos \big( \langle n_{i}, x-y \rangle \big).
\end{equation}

This article is structured as follows. \Cref{sec:maiM_{R}esults} is dedicated to the details of our constructions along with the main theoretical results. \Cref{sec:related_work} contains an overview of existing Fourier feature approximations built using quadrature rules. Numerical simulations illustrating the theoretical results are gathered in~\Cref{sec:num_sim}.

\section{Main results}\label{sec:maiM_{R}esults}
% In this section, we detail our theoretical contributions. In \Cref{sec:the_construction}, we give the definition of the family of the quadrature rules that we adopt for the construction of Fourier features, then we give the main theoretical results. In \Cref{sec:on_the_design}, we justify our choices of the radial quadrature rule and the spherical quadrature rule, along with their respective approximation errors.

In this section, we detail our theoretical contributions. In \Cref{sec:the_construction}, we define the family of quadrature rules that we adopt for constructing Fourier features and present the main theoretical results. In \Cref{sec:on_the_design}, we justify our choices of the radial and spherical quadrature rules, along with their respective approximation errors. 
% \subsection{A family of spherical-radial quadrature rules}\label{sec:the_construction}
% Below, we define the family of spherical-radial quadrature rules that we will study later.

\subsection{A family of spherical-radial quadrature rules}\label{sec:the_construction}
We start with the construction of the approximate kernel $\hat{\kappa}$, for which a study of the approximation error will be given later.
% that we study latter. 
% The approximate kernel $\hat{k}$ that we study is defined as follows. 
% We give in the following the definition of the family of the spherical-radial quadrature rules that we study later.
% \ab{check the following: the importance of the bandwidth? is it sqrt(r) or not?}
Let $Q^{R}= \sum_{i=1}^{M_R} a_{i} \delta_{\xi_i}$ be the Gaussian quadrature associated to \eqref{eq:radial_density}.
% \jz{shall we change $r$ into $\xi$ here?}
Given a \emph{spherical quadrature rule} $Q^{S} = \sum_{j=1}^{M_S} b_{j} \delta_{\theta_j}$, where $b_j \in \mathbb{R}_{+}$ and $\theta_{j} \in \mathbb{S}^{d-1}$, we consider the approximate kernel $\hat{\kappa}$ associated to the 
% \emph{spherical-radial Fourier feature map} to be the Fourier feature map of bandwidth $\sigma$ associated 
 quadrature rule 
\begin{equation}
\sum\limits_{i = 1}^{M_R} \sum\limits_{j = 1}^{M_S} a_{i}b_{j} \delta_{r_{i}\theta_{j}}; \quad \quad  r_{i} := \frac{\sqrt{2\xi_i}}{\sigma}.
\end{equation}
% \begin{defn}\label{sec:the_construction}

% \end{defn}
% The calculation of the Gaussian quadrature associated to the radial density defined by~\eqref{eq:radial_density} can be done by the eigendecomposition of the tridiagonal Jacobi matrix associated to the orthogonal polynomials with respect to $p_R$, which coincide with generalized Laguerre polynomials. This approach is classical and will be reminded in Appendix \ab{??}. We discuss the choice of the spherical quadrature rule in~\Cref{sec:spherical_quadrature_rule}. 

The calculation of the weights $a_i$ and the nodes $\xi_i$ can be performed classically through the eigendecomposition of the tridiagonal Jacobi matrix associated with the orthogonal polynomials with respect to $p_{\Xi}$, which coincide with generalized Laguerre polynomials. This classical approach is briefly reviewed in \Cref{sec:app_laguerre}.
% The calculation the weights $a_i$ and the nodes $\xi_i$ can be done classically by can be performed through the eigendecomposition of the tridiagonal Jacobi matrix associated with the orthogonal polynomials with respect to $p_{\Xi}$, which coincide with generalized Laguerre polynomials.
% of the Gaussian quadrature associated with the radial density defined by~\eqref{eq:radial_density}  
One could argue for the use of the Gauss-Hermite quadrature rule in the initial spherical-radial transformation~\eqref{eq:spherical_radial_decomposition_gaussian}. 
% \begin{equation}
%     \sum\limits_{i=1}^{M_R} a_{i} \bar{f}_{x-y}(r_i) \approx \int_{\mathbb{R}_{+}} \bar{f}_{x-y}(r) p_R(r) \mathrm{d}r.
% \end{equation}
However, this approach becomes highly unstable in higher dimensions due to the calculation of the products $r_{i}^{d-1}$, where the $r_{i}$ are the non-negative nodes of the Gauss-Hermite quadrature, which can take extremely large values as the dimension $d$ increases. Otherwise, we could use the Gaussian quadrature associated to the density $\propto r^{d-1}e^{-r^2/2}$ in the initial transformation \eqref{eq:spherical_radial_decomposition_gaussian}. However, this entails the manipulation of polynomials which are orthogonal with respect to this density and for which few properties are known.

% We discuss the choice of the spherical quadrature rule in~\Cref{sec:on_the_design}.

% \ab{a discussion}
\subsection{On the design of the spherical-radial quadrature rule}\label{sec:on_the_design}
As was shown in \cite{MuKaBuOs18}, there are multiple ways to design a radial-spherical quadrature rule. In this section, we discuss a principled way to design such a quadrature rule, which justifies the choice adopted in~\Cref{sec:the_construction}. An important ingredient in the analysis is the following definition.
\begin{defn}
Let $x,y \in \mathbb{R}^{d}$. Define $f_{x-y}$ on $\mathbb{R}^d$ by
\begin{equation}
    f_{x-y}(\omega) := \cos \big( \langle \omega, x-y \rangle \big),
\end{equation}
and its spherical average $\bar{f}_{x-y}$ defined on $\mathbb{R}_{+}$ by
\begin{equation}\label{eq:spherical_average_def}
    \bar{f}_{x-y}(\xi):= \int_{\mathbb{S}^{d-1}} f_{x-y}\Big(\frac{\sqrt{2\xi}}{\sigma} n\Big) \mathrm{d} \pi_{\mathbb{S}^{d-1}}(n). 
\end{equation}
\end{defn}
% In the following, we denote for $x,y \in \mathbb{R}^{d}$ the following function

 % when the weights of the radial quadrature rule sum to $1$, as is the case with the Gaussian quadrature \footnote{Which is a consequence of the exactness on the constant function.},

The crux of our approach is the observation that the squared error
\begin{equation*}
    \big|\kappa(x,y) - \hat{\kappa}(x,y) \big|^2 = \Big|\int_{\mathbb{R}^{d}} f_{x-y}(\omega) \mathrm{d} \Lambda_{\sigma}(\omega) - \sum\limits_{i=1}^{M_R}\sum\limits_{j=1}^{M_S} a_{i} b_{j} f_{x-y}\big(r_i \theta_{j} \big)\Big|^2
\end{equation*}
is upper bounded by the sum of two terms
% we have the following decomposition of the error in the approximation of the squared exponential kernel: 
% \jz{checked, although we need to think whether to use $r$ or $\xi$ here}
% \jz{adding constant here $\Big(\frac{\sigma^2}{2\pi}\Big)^{d/2}$, and scale with $\sigma$; also we need to define f.}
\begin{equation*}
    2\Big(\underbrace{\Big| \int_{\mathbb{R}_{+}} \bar{f}_{x-y}(\xi) p_{\Xi}(\xi) \mathrm{d}\xi - \sum\limits_{i=1}^{M_R} a_{i} \bar{f}_{x-y}(\xi_i) \Big|^2}_{\textbf{radial error}} + \underbrace{\sum\limits_{i \in [M_R]}a_i\Big|\bar{f}_{x-y}(\xi_i) - \sum\limits_{j=1}^{M_S} b_{j} f_{x-y}\Big(\frac{\sqrt{2\xi_i}}{\sigma} \theta_{j} \Big) \Big|^2 }_{\textbf{spherical error}} \Big).
\end{equation*}
% \begin{align*}
% \big|\kappa(x,y) - \hat{\kappa}(x,y) \big|^2 & = \Big|\int_{\mathbb{R}^{d}} f_{x-y}(\omega) \mathrm{d} \Lambda_{\sigma}(\omega) - \sum\limits_{i=1}^{M_R}\sum\limits_{j=1}^{M_S} a_{i} b_{j} f_{x-y}\big(r_i \theta_{j} \big)\Big|^2 \\ 
% & \leq 2\Big(\underbrace{\Big| \int_{\mathbb{R}_{+}} \bar{f}_{x-y}(\xi) p_{\Xi}(\xi) \mathrm{d}\xi - \sum\limits_{i=1}^{M_R} a_{i} \bar{f}_{x-y}(\xi_i) \Big|^2}_{\textbf{radial error}} + \sup\limits_{i \in [M_R]}\underbrace{\Big|\bar{f}_{x-y}(\xi_i) - \sum\limits_{j=1}^{M_S} b_{j} f_{x-y}\Big(\frac{\sqrt{2\xi_i}}{\sigma} \theta_{j} \Big) \Big|^2 }_{\textbf{spherical error}} \Big). 
% \end{align*}
% uniformly \ab{but you don't give an explicit bound on the uniform error} on $\{(x,y) \in \mathbb{R}^{d}, \:\: \|x-y\| \leq B \}$. 

In other words, to upper bound the approximation error $|\kappa(x,y) - \hat{\kappa}(x,y)|^2$, it is sufficient to establish upper bounds for the approximation errors of both the radial and spherical quadrature rules. The design of an efficient quadrature rules boils down to understanding the structure and the smoothness of the integrands. We discuss this point in detail next.

% This provides insight into how to design both types of quadrature rules. Indeed,

% In other words, in order to upper bound the approximation error $|\kappa(x,y) - \hat{\kappa}(x,y)|$, it is enough to prove upper bounds for the approximation errors for both the radial quadrature rule and the spherical quadrature rule. This gives a hint on how to design both quadrature rules.  We discuss this point in details in the following.

% for both quadratures \eqref{eq:radial_quadrature_maiM_{R}esults_intro} and \eqref{eq:spherical_quadrature_maiM_{R}esults_intro} in the following.

% is upper bounded by the sum of the error of the radial quadrature rule

% we end up with two numerical integration problems:
% approximating the integral 
% \begin{equation}\label{eq:radial_quadrature_maiM_{R}esults_intro}
% \int_{0}^{+\infty} \bar{f}_{x-y}(r) w(r) \mathrm{d}r \approx \sum\limits_{i_{R} = 1}^{M_{R}} w_{i_R} \bar{f}_{x-y}(r_{i_R}),
% \end{equation}
% and 
% \begin{equation}\label{eq:spherical_quadrature_maiM_{R}esults_intro}
% \int_{\mathbb{S}^{d-1}} f_{x-y}(r n) \mathrm{d} \sigma(n) \approx \sum\limits_{i_{S}=1} w_{i_{S}}  f_{x-y}(r n_{i_{S}}).
% \end{equation}

% in order to design efficient quadrature rules, it is crucial to understand the structure and the smoothness of the 

% The design of a quadrature rule should be adapted to accommodate the specific structure of the integrands. We discuss 

% the design of the radial quadrature rule before moving to the design of the spherical quadrature rule. 

% \begin{equation}
% .... 
% \end{equation}
\subsubsection{The radial quadrature rule}\label{sec:radial_quadrature_design}
The radial quadrature rule has to be tailored to the functions $\bar{f}_{x-y}$, whose expression is given in the following result.
% simplified as follows. 
% \begin{proposition}
% For $x,y \in \mathbb{R}^{d}$, we have 
% \begin{equation}
%     \bar{f}_{x-y}(\xi) =\sum_{n=0}^{\infty}\beta_n(d, x-y)\xi^n,
% \end{equation}
% where
% \begin{equation}
%     \beta_n(d,x-y) := \frac{\mathcal{V}_{d-2}}{\mathcal{V}_{d-1}} \frac{\sqrt{\pi}(-\|x-y\|^2/2)^n\Gamma\left(\frac{d-1}{2}\right)}{\Gamma(n+1)\Gamma\left(\frac{d}{2}+n\right)}.
% \end{equation}
% \end{proposition}

\begin{proposition}\label{prop:f_bar_representation}
For $x,y \in \mathbb{R}^{d}$, we have 
\begin{equation}
    \bar{f}_{x-y}(\xi) =\sum_{n=0}^{+\infty}\beta_n(d, x-y)\xi^n,
\end{equation}
where
\begin{equation}\label{eq:def_beta}
    \beta_n(d,x-y) := \frac{(-\|x-y\|^2/(2\sigma^{2}))^n\Gamma\left(\frac{d}{2}\right)}{\Gamma(n+1)\Gamma\left(d/2+n\right)}.
\end{equation}
\end{proposition}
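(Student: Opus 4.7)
The plan is to expand the cosine as a Taylor series in $\xi^{1/2}$, exchange sum and integral, and reduce the problem to computing the even moments of a linear form on the sphere.

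First, I would expand
\[
\cos\Big(\langle \tfrac{\sqrt{2\xi}}{\sigma} n, x-y\rangle\Big) = \sum_{k=0}^{+\infty} \frac{(-1)^k}{(2k)!}\Big(\frac{2\xi}{\sigma^{2}}\Big)^{k} \langle n, x-y\rangle^{2k}.
\]
Since the series converges absolutely and uniformly in $n \in \mathbb{S}^{d-1}$ for each fixed $\xi$ (the summand is bounded in absolute value by $(2\xi\|x-y\|^2/\sigma^2)^k/(2k)!$), Fubini/Tonelli lets me swap the sum and the spherical integral, giving
\[
\bar{f}_{x-y}(\xi) = \sum_{k=0}^{+\infty} \frac{(-1)^k}{(2k)!}\Big(\frac{2\xi}{\sigma^{2}}\Big)^{k} I_k, \quad \text{where } I_k := \int_{\mathbb{S}^{d-1}} \langle n, x-y\rangle^{2k} \mathrm{d}\pi_{\mathbb{S}^{d-1}}(n).
\]

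Second, I would compute $I_k$ by exploiting the rotational invariance of $\pi_{\mathbb{S}^{d-1}}$: for any unit vector $e_1$,
\[
I_k = \|x-y\|^{2k} \int_{\mathbb{S}^{d-1}} \langle n, e_1\rangle^{2k} \mathrm{d}\pi_{\mathbb{S}^{d-1}}(n).
\]
The remaining scalar integral is classical; expressing it via the Beta integral (push forward by $n \mapsto \langle n,e_1\rangle$, which has density $\propto (1-t^2)^{(d-3)/2}$ on $[-1,1]$) yields
\[
\int_{\mathbb{S}^{d-1}} \langle n, e_1\rangle^{2k} \mathrm{d}\pi_{\mathbb{S}^{d-1}}(n) = \frac{\Gamma(k+1/2)\,\Gamma(d/2)}{\sqrt{\pi}\,\Gamma(k+d/2)}.
\]
Using the Legendre duplication formula $\Gamma(k+1/2)=\sqrt{\pi}(2k)!/(4^k k!)$, this simplifies to
\[
I_k = \|x-y\|^{2k}\,\frac{(2k)!\,\Gamma(d/2)}{4^{k}\,k!\,\Gamma(k+d/2)}.
\]

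Third, I would substitute this expression back into the series. The $(2k)!$ factors cancel and a factor $2^k/4^k = 1/2^k$ combines with $\|x-y\|^{2k}/\sigma^{2k}$ to produce $(\|x-y\|^2/(2\sigma^2))^k$, leaving
\[
\bar{f}_{x-y}(\xi) = \sum_{k=0}^{+\infty} \frac{\Gamma(d/2)}{\Gamma(k+1)\,\Gamma(k+d/2)} \Big(\!-\frac{\|x-y\|^2}{2\sigma^2}\Big)^{k} \xi^{k},
\]
which is exactly the stated formula with coefficient $\beta_k(d,x-y)$.

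The only genuine work is the moment computation $I_k$; everything else is bookkeeping. The moment identity can alternatively be recognized as a consequence of the Funk--Hecke formula, or derived by observing that $\bar{f}_{x-y}$ coincides with the classical spherical average $\Gamma(d/2)(2/z)^{(d-2)/2} J_{(d-2)/2}(z)$ at $z=\sqrt{2\xi}\|x-y\|/\sigma$, whose Taylor expansion directly matches the claim. I expect the Beta-integral route to be the cleanest, and the main minor subtlety is the correct use of the duplication formula to convert $\Gamma(k+1/2)$ into a form in which the $(2k)!$ cancels.
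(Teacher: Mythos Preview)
Your proposal is correct and follows essentially the same approach as the paper: expand the cosine in its Taylor series, justify the interchange of sum and integral, reduce the spherical average to a one-dimensional Beta integral via the pushforward $n\mapsto\langle n,e_1\rangle$ (the paper phrases this step as the Hecke--Funk formula), and simplify with the duplication identity $\Gamma(k+1/2)=\sqrt{\pi}\,(2k)!/(4^k k!)$. The only cosmetic difference is the order of operations---the paper first collapses to the one-dimensional integral and then expands, whereas you expand first and then compute the spherical moment---but the computations are identical.
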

% with 
% \begin{equation}
% \mathcal{V}_{q}:= \frac{2 \pi^{(q+1)/2}}{\Gamma\big((q+1)/2\big)} ; \:\: q \in \mathbb{N}^{*}.
% \end{equation}

% in such a way that the error incurred in the approximation \eqref{eq:radial_quadrature_maiM_{R}esults_intro} is minimized for the functions $\bar{f}_{x-y}$. The latter is characterized as follows.

%  Specifically, the radial quadrature rule must be engineered to minimize the approximation error, as detailed in equation \eqref{eq:radial_quadrature_maiM_{R}esults_intro}, particularly  These functions are characterized as follows.
In other words, $\bar{f}_{x-y}$ is an analytic function despite the fact that the expression \eqref{eq:spherical_average_def} involves a dependency on the square root of $\xi$.
The Gaussian quadrature is an effective method for approximating uni-dimensional integrals that involve `simple' weight functions and smooth integrands. In our case, we have to deal with the weight function \eqref{eq:radial_density},
% \footnote{\jz{i added explanation here since the exponential weight never appears above this line (instead we have gaussian weight in the initial equation, maybe it would be better to move some analysis here to show we do the change of variable step}}
% \begin{equation}
%     w(r) \propto r^{d-1}e^{-r^2/2}.
% \end{equation}
% \jz{consider bandwidth, this is}
% \begin{equation}
%     p_{\Xi}(\xi) \propto \xi^{\frac{d-2}{2}}e^{-\xi},
% \end{equation}
% \jz{consider bandwidth, $\xi = \sigma^2r^2/2, r = \sqrt{2\xi}/\sigma$ and we get the same weight function!}
% \begin{equation}
%     w(\xi) \propto \xi^{\frac{d-2}{2}}e^{-\xi}.
% \end{equation}
for which the generalized Gauss-Laguerre quadrature rule offers the most natural quadrature rule.
% \begin{equation}
% w(r) = r^{(d-1)/2}e^{-r/2},
% \end{equation}
% and the generalized Gauss-Laguerre quadrature rule is the most natural one to use. 
We show in the following, that the functions $\bar{f}_{x-y}$ are well approximated in the orthonormal basis of the generalized Laguerre polynomials $(\ell^{\alpha}_m)_{m \in \mathbb{N}}$, where $\alpha := d/2 -1$, which is orthogonal with respect to the scalar product $\langle . , . \rangle_{p_{\Xi}}$. A definition of these polynomials is given in \Cref{sec:app_laguerre}. 
\begin{proposition}\label{prop:radial_quadrature_error}
Let $x,y \in \mathbb{R}^{d}$. We have 
\begin{equation}
\big| \langle \bar{f}_{x-y}, \ell_{m}^{\alpha} \rangle_{p_{\Xi}} \big| \leq \sqrt{\frac{1}{m!\,\Gamma(m+d/2)}} c^{2m}e^{-c^2}; \:\: c:= \frac{\|x-y\|}{\sqrt{2} \sigma}.
\end{equation}
In particular, we have 
\begin{equation}
\forall M \in \mathbb{N}, \:\:  \sum\limits_{m = 2M+1}^{+\infty} m |\langle \bar{f}_{x-y}, \ell_{m}^{\alpha} \rangle_{p_{\Xi}}| \leq \frac{c^2}{\sqrt{\Gamma(d/2)}} \Big(\frac{c^{2}}{2M-1}\Big)^{2M-1} ,
\end{equation}
%
% e^{2(c^{4}/d-c^2)}
% \begin{equation}
% \forall M \in \mathbb{N}, \:\:  \sum\limits_{m = M+1}^{+\infty} \langle \bar{f}_{x-y}, \ell_{m}^{\alpha} \rangle_{p_{\Xi}}^2 \leq  \Big(\frac{2 c^{4}}{Md}\Big)^{M} e^{2(c^{4}/d-c^2)},
% \end{equation}
and $\exists L>0$ which is independent of the choice of $M_R$, $x$ and $y$ such that
\begin{equation}\label{eq:laguerre_quad_err_bound}
    \Big|\int_{\mathbb{R}_{+}} \bar{f}_{x-y}(\xi) p_{\Xi}(\xi) \mathrm{d}r - \sum\limits_{i=1}^{M_R} a_{i} \bar{f}_{x-y}(\xi_{i}) \Big| \leq L \frac{c^2}{\sqrt{\Gamma(d/2)}}  \Big(\frac{c^{2}}{2M_R-1}\Big)^{2M_R-1}.
    % L \Big(\frac{2 c^{4}}{M_{R}d}\Big)^{M_R} e^{2(c^{4}/d-c^2)}.
\end{equation}
% \begin{equation}
% M_{R} \geq 2c^2 \implies \Big|\int_{\mathbb{R}_{+}} \bar{f}_{x-y}(r) w(r) \mathrm{d}r - \sum\limits_{i=1}^{M_{R}} w_{i}^{\mathrm{GQ}} \bar{f}_{x-y}(r_{i}^{\mathrm{GQ}}) \Big| \leq \frac{2\Gamma(d/2)e^{-c^2}c^{2M_{R}+1}}{(M_{R}+\lfloor(d+1)/4\rfloor)!}.
% \end{equation}

% \ab{the precise depencency on $c^2$?}
\end{proposition}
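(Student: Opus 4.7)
The three claims are linked: the tail estimate is essentially an elementary consequence of the coefficient bound, and the quadrature error follows from Gauss exactness plus the tail estimate. I would prove them in this order.

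For the coefficient bound, I would start from the Rodrigues representation $L_m^\alpha(\xi)\xi^\alpha e^{-\xi} = \frac{1}{m!}\frac{d^m}{d\xi^m}(\xi^{m+\alpha}e^{-\xi})$ and integrate by parts $m$ times against $\bar{f}_{x-y}$; the boundary terms vanish because $|\bar{f}_{x-y}|\leq 1$ and $\xi^{k+\alpha}e^{-\xi}$ vanishes at both endpoints for every $k\geq 1$ and $\alpha\geq 0$. This reduces the inner product to
\[
\int_0^\infty \bar{f}_{x-y}(\xi) L_m^\alpha(\xi)\,\xi^\alpha e^{-\xi}\,d\xi \;=\; \frac{(-1)^m}{m!}\int_0^\infty \bar{f}_{x-y}^{(m)}(\xi)\,\xi^{m+\alpha}e^{-\xi}\,d\xi.
\]
Differentiating the series of Proposition~\ref{prop:f_bar_representation} termwise yields $\bar{f}_{x-y}^{(m)}(\xi)=\sum_{k\geq 0}\beta_{k+m}(d,x-y)\frac{(k+m)!}{k!}\xi^k$, and each $\xi^k\cdot \xi^{m+\alpha}e^{-\xi}$ integrates to $\Gamma(k+m+d/2)$, which cancels the denominator of $\beta_{k+m}$ exactly. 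What remains is $\Gamma(d/2)\sum_{k\geq 0}(-c^2)^{k+m}/k! = \Gamma(d/2)(-c^2)^m e^{-c^2}$, divided by $m!$. Re-normalising by the $L^2(p_\Xi)$-normaliser $\sqrt{m!\Gamma(d/2)/\Gamma(m+d/2)}$ of $L_m^\alpha$ and by the $\Gamma(d/2)$ in $p_\Xi$ delivers the claimed magnitude bound on $|\langle \bar{f}_{x-y},\ell_m^\alpha\rangle_{p_\Xi}|$.

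For the tail-sum estimate, I would substitute the coefficient bound and exploit $\Gamma(m+d/2)=\Gamma(d/2)\prod_{k=0}^{m-1}(k+d/2)\geq \Gamma(d/2)\,m!$ (valid for $d\geq 2$) to deduce $\sqrt{1/(m!\Gamma(m+d/2))}\leq 1/(m!\sqrt{\Gamma(d/2)})$. After a telescoping index shift $m\mapsto m-1$ that pulls the factor $c^2$ out of the sum, the task reduces to controlling $e^{-c^2}\sum_{k\geq 2M}c^{2k}/k!$. A ratio argument (successive ratios are $c^2/(k+1)\leq c^2/(2M+1)$) combined with Stirling extracts the dominant factor $(c^2/(2M-1))^{2M-1}$, with the residual geometric sum absorbed into $e^{-c^2}$.

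For the Gauss-Laguerre quadrature error, I would exploit that $Q^R$ is exact on polynomials of degree $\leq 2M_R-1$: decomposing $\bar{f}_{x-y}=P_{2M_R-1}\bar{f}_{x-y}+r_{M_R}$ in the $\ell_m^\alpha$-basis makes the error equal to $\int r_{M_R}\,p_\Xi\,d\xi - Q^R(r_{M_R})$. The first term is bounded by $\|r_{M_R}\|_{L^1(p_\Xi)}\leq \sum_{m\geq 2M_R}|c_m|$, and positivity and unit mass of $Q^R$ give $|Q^R(r_{M_R})|\leq \sum_{m\geq 2M_R}|c_m|\max_i|\ell_m^\alpha(\xi_i)|$. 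Classical pointwise estimates for generalised Laguerre polynomials (e.g.\ of Erd\'elyi- or Krasikov-type) give $\max_i|\ell_m^\alpha(\xi_i)|\leq C_d\sqrt{m}$ uniformly in $M_R$, so the total error is dominated by a constant times $\sum_{m\geq 2M_R}m|c_m|$, which the previous paragraph bounds by $Lc^2/\sqrt{\Gamma(d/2)}\,(c^2/(2M_R-1))^{2M_R-1}$ after an elementary index shift. The hardest step is this last pointwise Laguerre estimate: the Gauss nodes depend on $M_R$, and uniform-in-$M_R$ control of $|\ell_m^\alpha(\xi_i)|$ for $m\geq 2M_R$ with only polynomial growth in $m$ is essential, since any exponential-in-$m$ factor would overwhelm the factorial decay of the coefficients and spoil the advertised rate.
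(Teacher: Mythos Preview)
Your arguments for the first two claims are sound. For the coefficient bound the paper takes a slightly different route: rather than Rodrigues' formula and integration by parts, it expands $\bar f_{x-y}$ in its power series (Proposition~\ref{prop:f_bar_representation}), applies Fubini, and evaluates the moments $\int_0^\infty \xi^n \ell_m^\alpha(\xi)p_\Xi(\xi)\,d\xi$ via an explicit generalized-binomial identity for Laguerre polynomials. Both computations collapse the sum to $\sqrt{1/(m!\,\Gamma(m+d/2))}\,c^{2m}e^{-c^2}$ in the same way, so this is a stylistic difference. Your tail-sum argument is essentially the paper's.

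The third part, however, has a real gap, and you correctly flag it as the hardest step. The pointwise estimate $\max_i|\ell_m^\alpha(\xi_i)|\leq C_d\sqrt{m}$ uniformly in $M_R$ is not delivered by Erd\'elyi- or Krasikov-type bounds, and in fact fails. Those inequalities control quantities like $x^{\alpha+1/2}e^{-x}\,\ell_m^\alpha(x)^2$, so they leave a factor $e^{x/2}$ in $|\ell_m^\alpha(x)|$. Since the largest Gauss node satisfies $\xi_{M_R}\sim 4M_R+d$, this gives $|\ell_m^\alpha(\xi_{M_R})|$ of order $e^{2M_R}$ times a polynomial correction, for $m\approx 2M_R$. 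Passing to the maximum after using positivity and unit mass of the weights therefore destroys the bound: the exponential in $M_R$ cannot be absorbed into a constant $L$ independent of $M_R$.

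The paper avoids this by never bounding the pointwise values. It bounds the \emph{weighted sum} $|\sum_i a_i \ell_m^\alpha(\xi_i)|$ directly, invoking a Gauss--Laguerre error estimate (their reference [CaCr], their Lemma~\ref{lem:quad_err_for_Laguerre}) of the form
\[
\Big|\int_0^\infty \varphi\,p_\Xi - \sum_i a_i\varphi(\xi_i)\Big|\;\leq\; L'\int_0^\infty |\varphi'(\xi)|\sqrt{\xi}\,p_\Xi(\xi)\,d\xi,
\]
applied to $\varphi=\ell_m^\alpha$. Using $(\ell_m^\alpha)'=-\sqrt{m}\,\ell_{m-1}^{\alpha+1}$ and Cauchy--Schwarz, this yields $|\sum_i a_i \ell_m^\alpha(\xi_i)|\leq L'\sqrt{(\alpha+1)m}$, uniformly in $M_R$. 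The point is that the Gauss weights $a_i$ carry a factor comparable to $p_\Xi(\xi_i)\sim \xi_i^\alpha e^{-\xi_i}$, which exactly cancels the growth of $|\ell_m^\alpha(\xi_i)|$ at the large nodes; going to the maximum throws this cancellation away. If you want to salvage your line of argument, replace the step ``positivity and unit mass give $\leq \sum|c_m|\max_i|\ell_m^\alpha(\xi_i)|$'' by the linearity step $|\sum_m c_m \sum_i a_i\ell_m^\alpha(\xi_i)|\leq \sum_m|c_m|\,|\sum_i a_i\ell_m^\alpha(\xi_i)|$ and then invoke such a weighted bound.
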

The proof of \Cref{prop:radial_quadrature_error} is given in \Cref{proof:radial_quadrature_error}. 
% \footnote{; see Appendix \ab{??} for a numerical visualization\ab{if we have time to add it in the appendix}}.
% One common choice is applying Gauss-Hermite quadrature (is it utimized in other SR design? we can cite them here). However, this will lead to a high-order integrand $\bar{f}_{x-y}(r)r^{d-1}$ and requires extra $d-1$ degrees of accuracy in the quadrature rule to compensate for it, which can be problematic when $d$ is large.
% To overcome this difficulty, we use  the change of variables $\xi = \sigma^{2} r^2/2$, so that the new weight function has the following form: 

% \begin{equation}
%     p_{R}(r) \propto r^{d-1}e^{-\sigma^2 r^2/2}.
% \end{equation}

% Yet, in order to make use of it, we still need to study the smoothness of the functions $\bar{f}_{x-y}$, which involve a problematic dependency on the square root of $r$.

\subsubsection{The spherical quadrature rule}\label{sec:spherical_quadrature_rule}
For the design of the spherical quadrature rule, we adopt a strategy similar to that in \Cref{sec:radial_quadrature_design}, and we study how well the functions $f_{r_{i}(x-y)}$, seen as functions defined on \(\mathbb{S}^{d-1}\), are approximated in the orthonormal basis of the spherical harmonics. The latter is the most natural basis for representing functions on $\mathbb{S}^{d-1}$. We start with the following result.
\begin{proposition}\label{prop:harmonic_decomposition_f}
Let $x,y \in \mathbb{R}^{d}$, and $r \in \mathbb{R}_{+}$. We have
    \begin{equation}\label{eq:harmonic_decompose_f}
    f_{r(x-y)}(n) = \sum_{k=0}^{+\infty} N(d,k) \lambda_{k} P_{k}(\langle v, n \rangle),\:\: v:= \frac{x-y}{\|x-y\|}
    % \sum_{i=1}^{N(d,k)} \langle f_{r(x-y)}, Y_{k,i} \rangle_{\pi_{\mathbb{S}^{d-1}}}Y_{k,i}(n).
\end{equation}
where the $P_{k}$ are the Gegenbauer polynomials defined in~\Cref{sec:spherical_harmonics}, and the $\lambda_{k}$ are defined as
\begin{equation}
    \lambda_{k} := \frac{\Gamma\left(d/2\right)}{\sqrt{\pi}\Gamma\left(d/2-1/2\right)} \int_{-1}^{1} \cos(r\|x-y\|t) P_{k}(t) (1-t^2)^{(d-3)/2} \mathrm{d} t,
\end{equation}
and satisfies
\begin{equation}\label{eq:lambda_k_bound}
\forall k \in \mathbb{N}, \:\: |\lambda_{k}| \leq 
\frac{\Gamma((d-1)/2)}{\Gamma(k+(d-1)/2)}\bigg(\frac{r \|x-y\|}{2}\bigg)^{k}.
\end{equation}
% \begin{equation}
%     \lambda_{k}:= ,
% \end{equation}
\end{proposition}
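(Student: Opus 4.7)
The plan is to prove the expansion via the zonal structure of $f_{r(x-y)}$, then bound $|\lambda_k|$ via the Bessel--Gegenbauer integral identity combined with the standard uniform Bessel bound.

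First, I would observe that $f_{r(x-y)}(n)=\cos(r\|x-y\|\langle v,n\rangle)$ depends on $n\in\mathbb{S}^{d-1}$ only through the scalar $t=\langle v,n\rangle\in[-1,1]$, so it is a zonal function with axis $v$. By the classical theory of zonal spherical harmonics (equivalently, the Funk--Hecke formula), any such function admits the expansion $\sum_{k\geq 0}N(d,k)\lambda_k P_k(\langle v,n\rangle)$, where the $\lambda_k$ are the $L^2$-coefficients of the one-variable function $t\mapsto\cos(r\|x-y\|t)$ with respect to the pushforward of $\pi_{\mathbb{S}^{d-1}}$ along $n\mapsto\langle v,n\rangle$. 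This pushforward has density $\frac{\Gamma(d/2)}{\sqrt{\pi}\,\Gamma((d-1)/2)}(1-t^2)^{(d-3)/2}$ on $[-1,1]$, giving exactly the prefactor in the definition of $\lambda_k$, while the multiplicity $N(d,k)$ comes from the addition formula for spherical harmonics of degree $k$.

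Next, for the bound \eqref{eq:lambda_k_bound}, I would apply the classical Bessel--Gegenbauer (Poisson-type) identity
\begin{equation*}
\int_{-1}^{1} e^{izt}(1-t^2)^{\nu-1/2} C_k^{\nu}(t)\,\mathrm{d}t \;=\; \sqrt{\pi}\,\Gamma(\nu+1/2)\left(\frac{2}{z}\right)^{\nu}\frac{i^k\,\Gamma(k+2\nu)}{k!\,\Gamma(2\nu)}\,J_{k+\nu}(z),
\end{equation*}
with $\nu=(d-2)/2$ and $z=r\|x-y\|$. Converting from the classical Gegenbauer polynomials $C_k^{\nu}$ to the normalized $P_k=C_k^{\nu}/C_k^{\nu}(1)$ via $C_k^{\nu}(1)=\Gamma(k+2\nu)/(k!\,\Gamma(2\nu))$, the combinatorial prefactors cancel; taking real parts (which automatically kills $\lambda_k$ for odd $k$) and applying the uniform Bessel bound $|J_{\mu}(z)|\leq (z/2)^{\mu}/\Gamma(\mu+1)$ with $\mu=k+(d-2)/2\geq -1/2$ yields the tighter bound $|\lambda_k|\leq \Gamma(d/2)(z/2)^k/\Gamma(k+d/2)$. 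The stated form \eqref{eq:lambda_k_bound} then follows from the monotonicity of $x\mapsto\Gamma(x+1/2)/\Gamma(x)$ on $(0,\infty)$, itself a consequence of the log-convexity of $\Gamma$ (equivalently, the positivity of $\psi(x+1/2)-\psi(x)$).

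The main obstacle is essentially bookkeeping: tracking the normalization conventions for the Gegenbauer polynomials (in particular the ratio $P_k/C_k^{\nu}$) and the product of Gamma-function factors that appear after applying the Bessel identity, then simplifying them into the compact form displayed in \eqref{eq:lambda_k_bound}. A minor secondary check is the case $d=2$, where the weight $(1-t^2)^{-1/2}$ is integrable but singular at the endpoints; the identities above still hold as all integrals remain convergent, and the zonal-harmonic framework reduces to the standard Fourier expansion on the circle.
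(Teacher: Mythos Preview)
Your proposal is correct. The expansion argument via the zonal structure and the Funk--Hecke formula is exactly how the paper proceeds. For the bound \eqref{eq:lambda_k_bound} you take a genuinely different route: the paper invokes a Rodrigues-type differentiation identity (Lemma~11 in \cite{Mul06}),
\[
\int_{-1}^{1}\varphi(t)\,P_k(t)\,(1-t^2)^{(d-3)/2}\,\mathrm{d}t
=\Big(\tfrac{1}{2}\Big)^{k}\frac{\Gamma((d-1)/2)}{\Gamma(k+(d-1)/2)}\int_{-1}^{1}\varphi^{(k)}(t)\,(1-t^2)^{k+(d-3)/2}\,\mathrm{d}t,
\]
applies it with $\varphi=\cos(\beta\,\cdot)$, and bounds the remaining integral crudely by the Beta value $\sqrt{\pi}\,\Gamma((d-1)/2)/\Gamma(d/2)$. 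You instead express $\lambda_k$ in closed form through the Bessel--Gegenbauer identity and apply the uniform bound $|J_{\mu}(z)|\le (z/2)^{\mu}/\Gamma(\mu+1)$, which is itself nothing more than the Poisson integral for $J_{\mu}$ followed by $|\cos|\le 1$. Both methods actually deliver the sharper estimate $|\lambda_k|\le \Gamma(d/2)\,(z/2)^{k}/\Gamma(k+d/2)$ before relaxing to the stated form via the same gamma-ratio monotonicity; the paper just discards a factor one step earlier. Your route is a touch slicker and immediately yields a closed-form for $\lambda_k$, at the cost of invoking the Bessel identity; the paper's route is more self-contained and, since it is written directly in terms of the normalized $P_k$, survives the $d=2$ limit without the degeneration of $C_k^{\nu}$ at $\nu=0$ that your argument has to handle by passing to Chebyshev polynomials.
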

The proof of~\Cref{prop:harmonic_decomposition_f} is given in~\Cref{sec:proof_prop:harmonic_decomposition_f}. In the following, we study the choice of the spherical quadrature rule.

% compare four families of quadrature rules on $\mathbb{S}^{d-1}$.
\paragraph{Monte Carlo on $\mathbb{S}^{d-1}$} Taking the spherical weights $b_j = 1/M_{S}$ and the spherical nodes $\theta_{1}, \dots, \theta_{M_{S}}$ to be i.i.d.\ samples from the uniform distribution on $\mathbb{S}^{d-1}$, we get the following result.

% \footnote{\ab{check the proof one last time}}.

% yields the following upper bound \ab{do we need a proof for this???}

\begin{proposition}\label{prop:iid_spherical_quadrature}
% \begin{equation}
%      \mathbb{E} \mathcal{E}_{S}(f_{r(x-y)})^2 \leq \frac{d}{M_{S}} \sum\limits_{k=4}^{+\infty} \lambda_{k}^2.
% \end{equation}
Let $x,y \in \mathbb{R}^{d}$, and let $r \in \mathbb{R}_{+}$. The expected squared error under Monte Carlo sampling on $\mathbb{S}^{d-1}$ applied to the function $f_{r(x-y)}$ is bounded as follows

\begin{equation}\label{eq:spherical_MC_bound}
   \mathbb{E} \Big| \int_{\mathbb{S}^{d-1}} f_{r(x-y)}(n) \mathrm{d}\pi_{\mathbb{S}^{d-1}}(n) - \frac{1}{M_S} \sum\limits_{j=1}^{M_S} f_{r(x-y)}(\theta_j) \Big|^2 = \frac{1}{M_{S}}   
     \sum\limits_{k = 2}^{+\infty} N(d,k) \lambda_{k}^2.
\end{equation}
In particular, we have 
% \jz{I think the $\sigma$ is inside $r$ because $r:=\frac{\sqrt{3\xi}}{\sigma}$. and the $1/(d-1)$ factor is absorbed when deriving an upper bound.}
\begin{equation}\label{eq:spherical_MC_bound__2}
   \mathbb{E} \Big| \int_{\mathbb{S}^{d-1}} f_{r(x-y)}(n) \mathrm{d}\pi_{\mathbb{S}^{d-1}}(n) - \frac{1}{M_S} \sum\limits_{j=1}^{M_S} f_{r(x-y)}(\theta_j) \Big|^2 \leq  \frac{1}{2M_S}\Big(\frac{r^2 \|x-y\|^2}{d-1} \Big)^2 e^{\frac{r^2 \|x-y\|^2}{d-1}} .
    % \sum\limits_{k=2}^{+\infty} N(d,k)\lambda_{k}^2 \leq  \frac{1}{2}\Big(\frac{r^2 \|x-y\|^2}{d-1} \Big)^2 e^{\frac{r^2 \|x-y\|^2}{d-1}}.
   % .\leq
    % \frac{1}{M_S} \frac{d}{d-1} r^2 \|x-y\|^2 e^{\frac{r^2 \|x-y\|^2}{d-1}} \leq \frac{2}{M_S}  r^2 \|x-y\|^2 e^{\frac{r^2 \|x-y\|^2}{d-1}}.
\end{equation} 
% \ab{change this to take into account the changes in the proof}
% \begin{equation}
%     \frac{d-1}{2 M_{S}}  \Big(\frac{2r \|x-y\|}{d-1}\Big)^2 e^{\frac{2 r \|x-y\|}{d-1}}
% \end{equation}

\end{proposition}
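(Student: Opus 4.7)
My plan has two parts, corresponding to the two displayed relations in the proposition.

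For the equality \eqref{eq:spherical_MC_bound}, I would start from the standard i.i.d.\ Monte Carlo variance formula: since $\theta_1,\dots,\theta_{M_S}$ are i.i.d.\ uniform on $\mathbb{S}^{d-1}$ and their common mean equals the target integral,
\[
\mathbb{E}\Big|\int_{\mathbb{S}^{d-1}} f_{r(x-y)}\,\mathrm{d}\pi_{\mathbb{S}^{d-1}} - \frac{1}{M_S}\sum_{j=1}^{M_S} f_{r(x-y)}(\theta_j)\Big|^2 = \frac{1}{M_S}\,\mathrm{Var}_{\pi_{\mathbb{S}^{d-1}}}\!\big(f_{r(x-y)}\big).
\]
I would then evaluate this variance via Parseval applied to the zonal expansion of \Cref{prop:harmonic_decomposition_f}, using the addition-theorem orthogonality $\int_{\mathbb{S}^{d-1}} P_k(\langle v,n\rangle)P_{k'}(\langle v,n\rangle)\,\mathrm{d}\pi_{\mathbb{S}^{d-1}}(n) = \delta_{kk'}/N(d,k)$. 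This gives $\|f_{r(x-y)}\|_{L^2(\pi)}^2 = \sum_{k\geq 0}N(d,k)\lambda_k^2$ and $\int f_{r(x-y)}\,\mathrm{d}\pi_{\mathbb{S}^{d-1}} = \lambda_0$, so $\mathrm{Var}_\pi(f_{r(x-y)}) = \sum_{k\geq 1}N(d,k)\lambda_k^2$. Finally, the odd-index terms vanish by parity in the integral defining $\lambda_k$ (cosine even, weight $(1-t^2)^{(d-3)/2}$ even, $P_k(-t) = (-1)^k P_k(t)$), so $\lambda_k = 0$ for every odd $k$ and the sum collapses to $k \geq 2$.

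For the upper bound \eqref{eq:spherical_MC_bound__2}, I would substitute the estimate \eqref{eq:lambda_k_bound} into the exact expression from Part~1 to obtain
\[
\sum_{k\geq 2}N(d,k)\lambda_k^2 \;\leq\; \sum_{k\geq 2}\frac{N(d,k)\,\Gamma((d-1)/2)^2}{\Gamma(k+(d-1)/2)^2}\Big(\frac{r\|x-y\|}{2}\Big)^{2k}.
\]
I would then apply Legendre's duplication formula $\Gamma(k+(d-1)/2)\Gamma(k+d/2) = 2^{2-2k-d}\sqrt{\pi}\,\Gamma(2k+d-1)$, together with the analogous identity for $\Gamma(d-1)$, to cancel the $\Gamma(k+d-2)$ factor hidden inside $N(d,k)$ against one of the two $\Gamma(k+(d-1)/2)$ factors in the denominator. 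After this algebra the coefficient of $(r^2\|x-y\|^2)^k$ reduces to a rational function of $d$, which I would then bound by $1/(2(k-2)!\,(d-1)^k)$; summing the resulting series recovers the Taylor expansion of $\tfrac{1}{2}z^2 e^z$ at $z = r^2\|x-y\|^2/(d-1)$.

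The hardest step is the Gamma-function bookkeeping in Part~2: a naive term-by-term comparison against the coefficients of $\tfrac12 z^2 e^z$ already fails at $k=2$ for large $d$ unless the duplication formula is carefully applied, so this simplification is the crux. If that direct route proves too delicate, a cleaner backup is to identify the series as $\mathrm{Var}(\cos(s\langle v,\theta\rangle))$ with $s = r\|x-y\|$ and proceed by a contraction argument: writing $\cos(sX) = 1 - 2\sin^2(sX/2)$ gives $\mathrm{Var}(\cos(sX)) = 4\,\mathrm{Var}(\sin^2(sX/2))$, and since $\big|\tfrac{d}{dy}\sin^2(sy/2)\big| = (s/2)|\sin(sy)| \leq s^2 y/2 = \tfrac{d}{dy}(sy/2)^2$ on $y \geq 0$ while both functions are even, one gets $\mathrm{Var}(\sin^2(sX/2)) \leq \mathrm{Var}((sX/2)^2) = (s/2)^4\,\mathrm{Var}(X^2)$. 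The explicit identity $\mathrm{Var}(\langle v,\theta\rangle^2) = 2(d-1)/(d^2(d+2))$ combined with the elementary inequality $(d-1)^3 \leq d^2(d+2)$ then yields the claimed bound after the trivial weakening $e^{s^2/(d-1)}\geq 1$.
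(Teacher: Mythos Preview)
Your argument for the equality \eqref{eq:spherical_MC_bound} is correct and coincides with the paper's: both expand $f_{r(x-y)}$ in the Gegenbauer/zonal basis, identify $\lambda_0$ with the integral, kill the $k=1$ term by parity, and read off the variance via orthogonality. Your Parseval phrasing is slightly more compact than the paper's explicit double-sum computation, but the content is the same.

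For \eqref{eq:spherical_MC_bound__2}, your primary route is also essentially the paper's route: insert \eqref{eq:lambda_k_bound} and bound the combination $N(d,k)\,\Gamma((d-1)/2)^2/\Gamma(k+(d-1)/2)^2$. The paper does this by writing out the Gamma ratios as explicit products and telescoping, obtaining the sharper coefficient $2^{2k}/(k!\,(d-1)^k)$ (which implies your $2^{2k}/(2(k-2)!\,(d-1)^k)$), and then invokes a separate ``truncated exponential'' lemma $\sum_{k\ge 2} z^k/k! \le \tfrac12 z^2 e^z$. Your duplication-formula variant and the paper's telescoping are two ways of doing the same bookkeeping.

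Your backup route, by contrast, is genuinely different from anything in the paper and is the more interesting part of your proposal. Recognising the series as $\operatorname{Var}\!\big(\cos(s\langle v,\theta\rangle)\big)$ and bounding it directly via $\cos(sX)=1-2\sin^2(sX/2)$ together with the $1$-Lipschitz dependence of $\sin^2(\sqrt{u})$ on $u$ (your derivative inequality amounts to $|\sin(2\sqrt{u})/(2\sqrt{u})|\le 1$) avoids all of the Gamma-function algebra and the truncated-exponential lemma. One small point to make explicit: the step from the derivative comparison to the variance comparison needs a sentence---either note that $\sin^2(sX/2)$ is a $1$-Lipschitz function of $(sX/2)^2$, so $\operatorname{Var}(\phi(Y))\le \operatorname{Var}(Y)$ applies, or use the i.i.d.-copy identity $2\operatorname{Var}(h(X))=\mathbb{E}[(h(X)-h(X'))^2]$ and bound the increment pointwise. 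With that filled in, the backup gives a cleaner and more self-contained proof of \eqref{eq:spherical_MC_bound__2} than the paper's, at the cost of discarding the exponential factor entirely (you only use $e^{s^2/(d-1)}\ge 1$); the paper's approach, on the other hand, tracks the whole tail $\sum_{k\ge 2}$ and is what is reused verbatim in the later orthogonal-Monte-Carlo proposition, where the sum starts at $k\ge 4$.
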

The proof of \Cref{prop:iid_spherical_quadrature} is given in \Cref{proof:prop_iid_spherical_quadrature}.
The upper bound \eqref{eq:spherical_MC_bound__2} converges to $0$ at the typical Monte Carlo rate, and the constant depends on the ratio $r \|x-y\|/\sqrt{d-1}$. Now, remember that in our construction, $r \leq \sqrt{2\xi_{M_R}}/\sigma$, where $\xi_{M_R}$ is the largest node of the Gaussian quadrature associated to the weight function defined by \eqref{eq:radial_density}. By 
Theorem 6.31.2. in \cite{Sze39} we have
\begin{equation}\label{eq:szego_bound}
\forall i \in [M_{R}],   \xi_i \leq 4 M_{R} +2 \frac{d-2}{2} + 2 = 4M_{R} +d,
\end{equation}
and we get the following result.
% We will see in the following other constructions for which the rate of convergence im

% In particular, when the quadrature rule should satisfies $1-\sum\limits_{j=1}^{M_{S}} b_{j} = 0$, the error $\mathcal{E}_{S}(f_{r(x-y)})$ is governed by the 

%  For instance, for $k =2$....

% we study the approximation error of Fourier features based on a family of radial-spherical  quadrature rules. 
% Now, we are ready to present our main result.
\begin{thm}\label{thm:main_theorem_MC}
Let $\kappa$ be the squared exponential kernel of bandwidth $\sigma$, and let $\hat{\kappa}$ be the empirical kernel of the spherical-radial Fourier feature map, as defined in~\Cref{sec:the_construction}, associated to the Monte Carlo approximation on $\mathbb{S}^{d-1}$.
% spherical quadrature rule $Q^{S} = \sum_{i=1}^{M_{S}} (1/M_{S}) \delta_{\theta_i}$, where $\theta_{1}, \dots, \theta_{M_{S}}$ are i.i.d.\ samples from the uniform distribution of $\mathbb{S}^{d-1}$. 
Then, we have
\begin{equation}\label{eq:upper_bound_error_thm_1}
    \mathbb{E} \big|\kappa(x,y) - \hat{\kappa}(x,y) \big|^2 \leq 2 \Bigg( L \frac{c^2}{\sqrt{\Gamma(d/2)}}  \Big(\frac{c^{2}}{2M_R-1}\Big)^{2M_R-1} +  \frac{8}{M_{S}}  \left(\frac{4M_{R} + d}{d-1}\right)^2c^4 e^{\frac{4 (4M_{R} + d)  }{d-1} c^2} \Bigg),
\end{equation}
% \jz{I think this bound is 
% \begin{equation}
%     \mathbb{E} \big|\kappa(x,y) - \hat{\kappa}(x,y) \big|^2 \leq 2 \Bigg( L \frac{c^2}{\sqrt{\Gamma(d/2)}}  \Big(\frac{c^{2}}{2M_R-1}\Big)^{2M_R-1} +  \frac{8}{M_{S}}  \left(\frac{4M_{R} + d}{d-1}\right)^2c^4 e^{\frac{4 (4M_{R} + d)  }{d-1} c^2} \Bigg),
% \end{equation}
% }
where $c = \|x-y\|/(\sqrt{2}\sigma)$, and $L$ is the same constant as in \Cref{prop:radial_quadrature_error}.

% e^{2(c^{4}/d-c^2)}

% \begin{itemize}
%     \item $Q^{S} =$ Monte Carlo on $\mathbb{S}^{d-1} \implies  
%     \item 
%     \item 
    
% \end{itemize}

% be a  .

% \begin{equation}
% M_{R} \geq 2 \frac{\|x-y\|^2}{\sigma^2} \implies \Big| \kappa(x,y) - \hat{\kappa}(x,y) \Big| \leq C' C(x,y) \Big(d^{d/2} (1/2)^{M_{R}} +  (d-2) e^{\sigma_{S}^{2}/2+ 2\sigma_{S}^{-1}} \sigma_{S}^{-M_{S}} \Big)
% \end{equation}
\end{thm}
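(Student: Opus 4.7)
The plan is to combine the error decomposition introduced in Section 2.2 with the two ingredient bounds (\Cref{prop:radial_quadrature_error} for the radial component and \Cref{prop:iid_spherical_quadrature} for the spherical component), and then uniformize the spherical bound over the Gauss--Laguerre nodes using the Szegő estimate~\eqref{eq:szego_bound}. Concretely, I would start from the pointwise bound
\begin{equation*}
|\kappa(x,y) - \hat{\kappa}(x,y)|^2 \leq 2\bigl(E_R(x,y)^2 + E_S(x,y)\bigr),
\end{equation*}
where $E_R$ is the radial quadrature error and $E_S$ is the weighted sum of squared per-node spherical errors. Taking expectation over the i.i.d.\ uniform samples $\theta_1,\dots,\theta_{M_S}$ on $\mathbb{S}^{d-1}$ leaves the radial contribution untouched (since $Q^R$ is deterministic), so it suffices to bound $E_R$ via~\eqref{eq:laguerre_quad_err_bound} and $\mathbb{E}[E_S]$ via the sum over $i$ of the Monte Carlo bound in~\eqref{eq:spherical_MC_bound__2} applied to each $r_i = \sqrt{2\xi_i}/\sigma$.

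The key reduction for the spherical piece is to rewrite the scale factor in~\eqref{eq:spherical_MC_bound__2} in the notation $c = \|x-y\|/(\sqrt{2}\sigma)$: since $r_i^2\|x-y\|^2 = (2\xi_i/\sigma^2)\|x-y\|^2 = 4\xi_i c^2$, the per-node bound becomes
\begin{equation*}
\mathbb{E}\Bigl|\bar f_{x-y}(\xi_i) - \tfrac{1}{M_S}\sum_j f_{x-y}(r_i\theta_j)\Bigr|^2 \leq \frac{1}{2M_S}\Bigl(\frac{4\xi_i c^2}{d-1}\Bigr)^2 \exp\Bigl(\frac{4\xi_i c^2}{d-1}\Bigr).
\end{equation*}
Here the right-hand side is monotone in $\xi_i$, so one can replace $\xi_i$ by its uniform upper bound $\xi_i \leq 4M_R + d$ from~\eqref{eq:szego_bound}. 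Since the Gauss--Laguerre weights satisfy $\sum_i a_i = \int p_\Xi \,d\xi = 1$, summing over $i$ with weights $a_i$ gives precisely the spherical term $\frac{8}{M_S}\bigl(\frac{4M_R+d}{d-1}\bigr)^2 c^4\,\exp\!\bigl(\frac{4(4M_R+d)}{d-1}c^2\bigr)$ that appears in~\eqref{eq:upper_bound_error_thm_1}. Combined with the radial bound from \Cref{prop:radial_quadrature_error} and the factor of $2$ from the decomposition, the theorem follows.

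The main obstacle, and the only place where any real care is needed, is justifying the uniformization step: one must check that the per-node spherical bound can be dominated by its value at the extremal node without paying any extra multiplicative constants, and that the Szegő estimate~\eqref{eq:szego_bound} is in fact sharp enough to absorb the factor $r_i^2$ cleanly. Once that monotonicity argument is set up, everything else is bookkeeping---substitution of $c$, expansion of the constants $16/2 = 8$, and invoking $\sum_i a_i = 1$ to eliminate the $i$-sum. No new analytic input is required beyond Propositions~\ref{prop:radial_quadrature_error} and~\ref{prop:iid_spherical_quadrature} and the classical node bound~\eqref{eq:szego_bound}.
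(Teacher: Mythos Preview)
Your proposal is correct and follows exactly the route the paper takes: the error decomposition of Section~2.2, the radial bound from \Cref{prop:radial_quadrature_error}, the per-node Monte Carlo bound~\eqref{eq:spherical_MC_bound__2} rewritten via $r_i^2\|x-y\|^2 = 4\xi_i c^2$, uniformization over the nodes by monotonicity together with the Szeg\H{o} estimate~\eqref{eq:szego_bound}, and $\sum_i a_i = 1$. The paper does not give a separate proof section for this theorem---it simply states it after assembling these ingredients---so your outline is precisely the intended argument.
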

% \Cref{thm:main_theorem} shows that ....
The upper bound in~\eqref{eq:upper_bound_error_thm_1} underscores the importance of the parameters $M_R$, $M_S$ and $\sigma$. Increasing the number of nodes in the radial quadrature, $M_R$, reduces the radial error. However, to prevent a significant rise in spherical error, $M_R$ should be kept as small as possible. Furthermore, the spherical part of the approximation error depends on the ratio $(4 (4M_{R} + d) c^2 )/(d-1)$, which scales as $c^2$ for a fixed value of $M_R$ and large value of $d$. 
% maintaining a low spherical error is crucial, which can be ensured by satisfying the condition \(c^{2} \leq \sqrt{d}/2\) and using large values of \(\sigma\).
% The upper bound in~\eqref{eq:upper_bound_error_thm_1} highlights the importance of the parameters $M_R$ and $\sigma$. The radial error decreases as the number of nodes in the radial quadrature $M_R$ increases. On the other hand, $M_R$ should be minimized to prevent the spherical error from increasing significantly. Second, the condition $c^{2} \leq \sqrt{d}/2$ is also crucial for maintaining a low spherical error, which can be achieved with large values of $\sigma$.
In the following, we study how we can improve the design of the spherical quadrature rule.

\begin{figure}
    \centering
    \includegraphics[width=\textwidth]{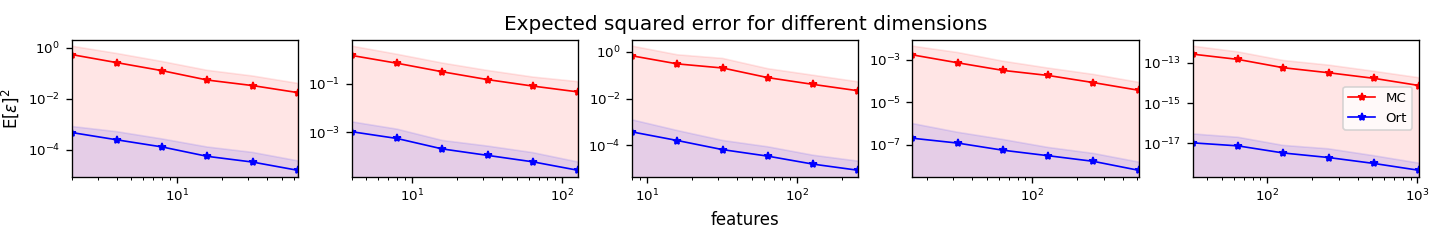}
    \vspace{-15pt}
    \caption{
    Monte Carlo on $\mathbb{S}^{d-1}$ versus Orthogonal Monte Carlo on $\mathbb{S}^{d-1}$. \\
    Dimensions from left to right: 2, 4, 8, 16, 32.}
    \label{fig:MC_vs_OMC} 
\end{figure}

% Quadrature rules based on orthogonal matrices
\paragraph{The orthogonal Monte Carlo quadrature on $\mathbb{S}^{d-1}$}
In this section, we assume that $M_{S}$ is a multiple of $d$, and we take $\theta_{1}, \dots, \theta_{M_{S}}$ to be the columns of the matrix obtained by the concatenation of $M_{S}/d$ orthonormal i.i.d.\ random matrices $\bm{B}_{1}, \dots, \bm{B}_{M_{S}/d}$ from the Haar distribution of $\mathbb{O}_{d}(\mathbb{R})$, so that the columns of each $\bm{B}_{\ell}$ follows the distribution of the uniform distribution on $\mathbb{S}^{d-1}$. By taking uniform weights equal to $1/M_{S}$, we define a quadrature rule that we call the \emph{orthogonal Monte Carlo quadrature on $\mathbb{S}^{d-1}$}. For such a construction, we have the following result.

\begin{proposition}\label{prop:block_iid_spherical_quadrature}
% \begin{equation}
%      \mathbb{E} \mathcal{E}_{S}(f_{r(x-y)})^2 \leq \frac{d}{M_{S}} \sum\limits_{k=4}^{+\infty} \lambda_{k}^2.
% \end{equation}
Let $x,y \in \mathbb{R}^{d}$, and let $r \in \mathbb{R}_{+}$. The expected squared error under the orthogonal Monte Carlo quadrature on $\mathbb{S}^{d-1}$ applied to the function $f_{r(x-y)}$ is bounded as follows 
\begin{equation}\label{eq:block_iid_spherical_quadrature}
   \mathbb{E} \Big| \int_{\mathbb{S}^{d-1}} f_{r(x-y)}(n) \mathrm{d}\pi_{\mathbb{S}^{d-1}}(n) - \frac{1}{M_S} \sum\limits_{j=1}^{M_S} f_{r(x-y)}(\theta_j) \Big|^2  \leq \frac{3}{M_{S}}   
     \sum\limits_{k = 4}^{+\infty} N(d,k) \lambda_{k}^2.
\end{equation}
In particular, we have
\begin{equation}\label{eq:block_iid_spherical_quadrature_bis}
   \mathbb{E} \Big| \int_{\mathbb{S}^{d-1}} f_{r(x-y)}(n) \mathrm{d}\pi_{\mathbb{S}^{d-1}}(n) - \frac{1}{M_S} \sum\limits_{j=1}^{M_S} f_{r(x-y)}(\theta_j) \Big|^2  \leq \frac{1}{16M_S}(\frac{r^2\|x-y\|^2}{d-1})^4 e^{\frac{r^2\|x-y\|^2}{d-1}}.
\end{equation}

\end{proposition}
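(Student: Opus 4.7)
The plan is to (i) reduce the squared error to a single-block computation via block independence, (ii) kill odd-degree Gegenbauer modes by parity, (iii) exploit the orthonormality of within-block columns to cancel the degree-two mode deterministically, and (iv) bound the surviving within-block cross-correlation using the conditional uniformity of Haar columns.

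By \Cref{prop:harmonic_decomposition_f}, and since $\cos$ is even while $P_k$ has parity $k$, $\lambda_k=0$ for all odd $k$. Set $g(n) := f_{r(x-y)}(n) - \lambda_0 = \sum_{k\ge 2,\,k\text{ even}} N(d,k)\lambda_k P_k(\langle v,n\rangle)$, so the error is $-(1/M_S)\sum_j g(\theta_j)$. The $M_S/d$ blocks $\bm{B}_1,\dots,\bm{B}_{M_S/d}$ are i.i.d.\ and each column is marginally uniform on $\mathbb{S}^{d-1}$, hence $\mathbb{E}[g(\theta_j^{(\ell)})]=0$; cross-block covariances vanish and
\begin{equation*}
\mathbb{E}\Big(\tfrac{1}{M_S}\sum_{j=1}^{M_S}g(\theta_j)\Big)^{2} = \frac{1}{dM_S}\,\mathbb{E}\Big(\sum_{j=1}^{d}g(\theta_j^{(1)})\Big)^{2}.
\end{equation*}
Writing $P_2(t)=At^2+B$, the identities $\int_{\mathbb{S}^{d-1}}P_2(\langle v,\cdot\rangle)\,d\pi=0$ and $\sum_{j=1}^d\langle v,\theta_j^{(1)}\rangle^2=\|v\|^2=1$ (which holds for any orthonormal basis) force $A+Bd=0$, giving the \emph{pointwise} cancellation $\sum_{j=1}^d P_2(\langle v,\theta_j^{(1)}\rangle)=0$. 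Combined with parity, only even $k\ge 4$ terms survive: $\sum_j g(\theta_j^{(1)}) = \sum_j \tilde g(\theta_j^{(1)})$ with $\tilde g := \sum_{k\ge 4,\,k\text{ even}} N(d,k)\lambda_k P_k(\langle v,\cdot\rangle)$.

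Expanding,
\begin{equation*}
\mathbb{E}\Big(\sum_{j=1}^d\tilde g(\theta_j^{(1)})\Big)^{2} = d\,\mathbb{E}[\tilde g(\theta)^2] + d(d-1)\,\mathbb{E}\bigl[\tilde g(\theta_1^{(1)})\tilde g(\theta_2^{(1)})\bigr].
\end{equation*}
For an orthonormal pair $(\theta_1^{(1)},\theta_2^{(1)})$ from one block, the joint law of $(u_1,u_2):=(\langle v,\theta_1^{(1)}\rangle,\langle v,\theta_2^{(1)}\rangle)$ matches that of the first two coordinates of a uniform vector on $\mathbb{S}^{d-1}$; conditionally on $u_1$, $u_2=\sqrt{1-u_1^2}\,w$ with $w$ independent of $u_1$ and distributed as the first coordinate of uniform on $\mathbb{S}^{d-2}$. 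Hence $\mathbb{E}_w[P_{k'}(\sqrt{1-u_1^2}\,w)]$ is an even polynomial in $u_1$ of degree $k'$, and by orthogonality of $P_k$ to polynomials of lower degree under the marginal of $u_1$, $\mathbb{E}[P_k(u_1)P_{k'}(u_2)]=0$ whenever $k\neq k'$, while matching leading coefficients yields $\mathbb{E}[P_k(u_1)P_k(u_2)] = (-1)^{k/2}\mathbb{E}[w^k]/N(d,k)$. Since $|w|\le 1$ makes $k\mapsto\mathbb{E}[w^k]$ non-increasing, for $k\ge 4$ it is bounded by $\mathbb{E}[w^4]=3/[(d-1)(d+1)]$, and the triangle inequality gives $d(d-1)\,|\mathbb{E}[\tilde g(\theta_1^{(1)})\tilde g(\theta_2^{(1)})]|\le \tfrac{3d}{d+1}\,\mathbb{E}[\tilde g(\theta)^2]\le 3\,\mathbb{E}[\tilde g(\theta)^2]$. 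Combined with the diagonal term, $\mathbb{E}(\sum_j\tilde g(\theta_j^{(1)}))^2\le (d+3)\,\mathbb{E}[\tilde g(\theta)^2]\le 3d\,\mathbb{E}[\tilde g(\theta)^2]$ for $d\ge 2$; substitution produces \eqref{eq:block_iid_spherical_quadrature}.

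Inequality \eqref{eq:block_iid_spherical_quadrature_bis} then follows from \eqref{eq:block_iid_spherical_quadrature} by substituting the estimate \eqref{eq:lambda_k_bound} on $|\lambda_k|$ together with a standard polynomial bound on $N(d,k)$, simplifying Gamma ratios, and recognizing the resulting sum over even $k\ge 4$ as a quartic-prefixed tail of the exponential series in $z:=r^2\|x-y\|^2/(d-1)$, which produces the leading factor $z^4/16$ multiplied by $e^z$. The main obstacle is step (iv): the off-diagonal vanishing and the explicit diagonal formula for $\mathbb{E}[P_k(u_1)P_k(u_2)]$ are precisely what convert the naive $\mathcal{O}(d^2)$ within-block cross-term count into the dimension-free constant $3$, and this requires both the degree-two cancellation and the polynomial structure of conditional averages on $\mathbb{S}^{d-2}$.
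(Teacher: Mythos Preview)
Your proof is correct and takes a genuinely different, more elementary route than the paper's. Both arguments share steps (i)--(iii): block independence reduces to a single Haar block, parity kills odd modes, and the identity $\sum_{j=1}^d P_2(\langle v,\theta_j\rangle)=0$ (from $\sum_j\langle v,\theta_j\rangle^2=\|v\|^2$) removes the degree-two contribution deterministically. The divergence is in step (iv). The paper packages the within-block cross-correlation into the operator $T\psi(\theta)=\int_{\mathbb{S}^{d-1}\cap\theta^\perp}\psi\,\mathrm{d}\pi$ and invokes an external spectral result (Theorem~3 in \cite{GoLeTuZa17}) asserting that $T$ acts as a scalar on each harmonic subspace with eigenvalue at most $2/(d-1)$; this gives $\mathbb{E}[P_k(\langle v,\theta_i\rangle)P_{k'}(\langle v,\theta_j\rangle)]=0$ for $k\neq k'$ and $\le\tfrac{2}{d-1}\int P_k^2\,\mathrm{d}\pi$ for $k=k'$, and summing $d$ diagonal plus $d(d-1)$ off-diagonal pairs per block produces the constant $3$.

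You instead observe that $(\langle v,\theta_1\rangle,\langle v,\theta_2\rangle)$ is distributed as the first two coordinates of a uniform vector on $\mathbb{S}^{d-1}$ (because $B^\top v$ is uniform by Haar invariance), condition on $u_1$, and exploit that the Gegenbauer polynomials are orthogonal with respect to precisely the marginal weight $(1-t^2)^{(d-3)/2}$. This yields not only the vanishing of $k\neq k'$ terms but the \emph{explicit} diagonal value $\mathbb{E}[P_k(u_1)P_k(u_2)]=(-1)^{k/2}\mathbb{E}[w^k]/N(d,k)$, from which $\mathbb{E}[w^4]=3/(d^2-1)$ gives the sharper off-diagonal bound $\tfrac{3d}{d+1}\mathbb{E}[\tilde g^2]$ before you relax it to $3$. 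Your approach is self-contained and exposes the exact mechanism (and a slightly better constant); the paper's approach treats $T$ as a reusable black box once its spectrum is known. For \eqref{eq:block_iid_spherical_quadrature_bis} both proofs defer to the same ingredients, namely the estimates $N(d,k)\lambda_k^2\le z^k/k!$ with $z=r^2\|x-y\|^2/(d-1)$ from the proof of \Cref{prop:iid_spherical_quadrature} together with \Cref{lemma:exp_truncated}.
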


The upper bound in \Cref{prop:block_iid_spherical_quadrature} converges to $0$ at the same rate as in~\eqref{eq:spherical_MC_bound}, which is $1/M_{S}$. However, the involved constant is lower, which is crucial in our context where $M_{S} = m_{S} d$, and $m_{S}$ is not very large. Indeed, the ratio between the r.h.s. of \eqref{eq:block_iid_spherical_quadrature_bis} and the r.h.s of \eqref{eq:spherical_MC_bound__2} is equal to 
 \begin{equation}
      \frac{1}{8} \Big(\frac{r^{2} \|x-y\|^2}{d-1}\Big)^2,
\end{equation}

% \jz{\frac{\xi\|x-y\|^2}{\sigma^2(d-1)}}

and we get an improvement over \Cref{thm:main_theorem_MC} (the Monte Carlo approximation) when $\|x-y\|^2 \leq 2(d-1) r^{-2}$.
\Cref{fig:MC_vs_OMC} illustrates this improvement for $\|x-y\| = 1$ and for $d \in \{2,4,8,16,32\}$. 
\Cref{prop:block_iid_spherical_quadrature} leads to the following result.

%  \ab{Stopped here} This is particularly the case when the constraint $c^2 \leq \sqrt{d}/2$ holds; observe  that, by Theorem 6.31.3 in \cite{Sze39}, we have
% \begin{equation}
%     \forall i \in [M_R], \:\: \xi_{i} \geq \frac{j_1 ^2 }{4 M_R +d},
% \end{equation}
% where $j_1$ is the first positive zero of the Bessel function $J_{\alpha}$.

\begin{thm}\label{thm:main_theorem_OMC}
Let $\kappa$ be the squared exponential kernel of bandwidth $\sigma$, and let $\hat{\kappa}$ be the empirical kernel of the spherical-radial Fourier feature map, as defined in~\Cref{sec:the_construction}, associated to the orthogonal Monte Carlo quadrature on $\mathbb{S}^{d-1}$.
% spherical quadrature rule $Q^{S} = \sum_{i=1}^{M_{S}} (1/M_{S}) \delta_{\theta_i}$, where $\theta_{1}, \dots, \theta_{M_{S}}$ are i.i.d.\ samples from the uniform distribution of $\mathbb{S}^{d-1}$. 
Then, we have
\begin{equation}\label{eq:main_theorem_OMC}
    \mathbb{E} \big|\kappa(x,y) - \hat{\kappa}(x,y) \big|^2 \leq 2 \Bigg(  L \frac{c^2}{\sqrt{\Gamma(d/2)}}  \Big(\frac{c^{2}}{2M_R-1}\Big)^{2M_R-1} + \frac{2}{M_{S}}  \Big(\frac{4M_{R} + d}{d-1}\Big)^4 c^8 e^{\frac{4 (4M_{R} + d)  }{d-1} c^2}  \Bigg).
\end{equation}
% \jz{I think this bound is
% \begin{equation}
%     \mathbb{E} \big|\kappa(x,y) - \hat{\kappa}(x,y) \big|^2 \leq 2 \Bigg(  L \frac{c^2}{\sqrt{\Gamma(d/2)}}  \Big(\frac{c^{2}}{2M_R-1}\Big)^{2M_R-1} +  \frac{2}{M_{S}}  \Big(\frac{4M_{R} + d}{d-1}\Big)^4 c^8 e^{\frac{4 (4M_{R} + d)  }{d-1} c^2}  \Bigg).
% \end{equation}
% }
% +  \frac{d(d-1)}{24 M_{S}}  \Big(\frac{4 \sqrt{4M_{R} + d} c}{d-1}\Big)^4 e^{\frac{4 \sqrt{4M_{R} + d} c }{d-1}}

% \begin{itemize}
%     \item $Q^{S} =$ Monte Carlo on $\mathbb{S}^{d-1} \implies  
%     \item 
%     \item 
    
% \end{itemize}

% be a  .

% \begin{equation}
% M_{R} \geq 2 \frac{\|x-y\|^2}{\sigma^2} \implies \Big| \kappa(x,y) - \hat{\kappa}(x,y) \Big| \leq C' C(x,y) \Big(d^{d/2} (1/2)^{M_{R}} +  (d-2) e^{\sigma_{S}^{2}/2+ 2\sigma_{S}^{-1}} \sigma_{S}^{-M_{S}} \Big)
% \end{equation}
\end{thm}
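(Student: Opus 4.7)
\bigskip

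\noindent\textbf{Proof plan for Theorem \ref{thm:main_theorem_OMC}.} The argument closely parallels that of \Cref{thm:main_theorem_MC}, so the plan is to replay the same three-step structure but substitute the Orthogonal Monte Carlo bound of \Cref{prop:block_iid_spherical_quadrature} in place of the plain Monte Carlo bound of \Cref{prop:iid_spherical_quadrature}. I would begin by invoking the deterministic inequality from \Cref{sec:on_the_design}, namely
\begin{equation*}
\bigl|\kappa(x,y) - \hat{\kappa}(x,y)\bigr|^2 \leq 2\Bigl(\underbrace{\bigl|\textstyle\int \bar{f}_{x-y}\, p_{\Xi}\dd\xi - \sum_i a_i \bar{f}_{x-y}(\xi_i)\bigr|^2}_{\text{radial error}} + \underbrace{\textstyle\sum_i a_i \bigl|\bar{f}_{x-y}(\xi_i) - \sum_j b_j f_{x-y}(r_i\theta_j)\bigr|^2}_{\text{spherical error}}\Bigr),
\end{equation*}
and then take expectations over the randomness of the spherical rule, which only affects the second term since the Gauss-Laguerre nodes are deterministic.

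For the radial piece I would apply \Cref{prop:radial_quadrature_error} verbatim, obtaining the first summand in \eqref{eq:main_theorem_OMC} with no further work. For the spherical piece, the key observation is that, conditional on the deterministic radial nodes, each summand is the squared error of the Orthogonal Monte Carlo rule on $\mathbb{S}^{d-1}$ applied to $f_{r_i(x-y)}$ with $r_i = \sqrt{2\xi_i}/\sigma$. I would therefore apply \Cref{prop:block_iid_spherical_quadrature} termwise, yielding
\begin{equation*}
\mathbb{E}\Bigl|\bar{f}_{x-y}(\xi_i) - \textstyle\sum_j b_j f_{x-y}(r_i\theta_j)\Bigr|^2 \leq \frac{1}{16 M_S}\Bigl(\frac{r_i^2 \|x-y\|^2}{d-1}\Bigr)^4 e^{\frac{r_i^2 \|x-y\|^2}{d-1}}.
\end{equation*}

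The main quantitative step is to make the bound uniform in $i$. Using $r_i^2 = 2\xi_i/\sigma^2$ together with the Szegő bound \eqref{eq:szego_bound}, which gives $\xi_i \leq 4M_R + d$ for all $i \in [M_R]$, and recalling $\|x-y\|^2 = 2\sigma^2 c^2$, I obtain $r_i^2 \|x-y\|^2 \leq 4(4M_R+d)c^2$, so that the parenthesized ratio is bounded by $4(4M_R+d)c^2/(d-1)$ uniformly in $i$. Substituting this uniform bound and then using $\sum_i a_i = 1$ (the Gauss-Laguerre rule integrates constants exactly against the probability density $p_{\Xi}$) eliminates the $i$ sum and produces the second summand in \eqref{eq:main_theorem_OMC}, up to the constant factor coming from $4^4/16$.

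The only delicate point is the harmless one of checking that one may use the tightest available uniform radial bound: since the exponential factor is monotone in $r_i^2$, replacing $\xi_i$ by its maximum possible value inside both the polynomial prefactor and the exponent is legitimate and gives a valid upper bound. Everything else is bookkeeping of constants, and combining the radial contribution with the uniform spherical contribution yields \eqref{eq:main_theorem_OMC}.
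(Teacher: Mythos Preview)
Your proposal is correct and follows precisely the route the paper takes: the paper does not give a separate proof of \Cref{thm:main_theorem_OMC} but simply states that it follows from \Cref{prop:block_iid_spherical_quadrature} in the same way \Cref{thm:main_theorem_MC} follows from \Cref{prop:iid_spherical_quadrature}, i.e.\ via the radial/spherical error decomposition of \Cref{sec:on_the_design}, the radial bound of \Cref{prop:radial_quadrature_error}, the uniform control of the radial nodes by Szeg\H{o}'s bound~\eqref{eq:szego_bound}, and the fact that $\sum_i a_i = 1$. Your write-up reproduces exactly this structure.
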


Despite the fact that the approximation error of this method converges to zero at a rate similar to the Monte Carlo rate, the numerical simulations presented in~\Cref{sec:num_sim} demonstrate the superiority of this method. Compared to \Cref{thm:main_theorem_MC}, the error term depends on the ratio $c$ raised to the power of 4, which yields a significant improvement when $c \leq 1$. The proof of \Cref{prop:block_iid_spherical_quadrature} is given in \Cref{proof:block_iid_spherical_quadrature}.

% Despite the fact that the approximation error of this method converges to $0$ at a rate similar to the Monte Carlo rate, the numerical simulations presented in~\Cref{sec:num_sim} shows the supperiortiy of this method. Compared to \Cref{thm:main_theorem_MC}, the error term depends on the ratio $c/\sqrt{d}$ via power 4.

%  leading term in the r.h.s of \eqref{eq:block_iid_spherical_quadrature} is $(d/M_{S})\lambda_{4}^2 $, which is bounded as follows
% \begin{equation}
%     \frac{d}{M_{S}}\lambda_{4}^2 \leq  \frac{d}{M_{S}} 4\pi \frac{\Gamma((d-1)/2)^2}{\Gamma(4+\frac{d-1}{2})^2} \Big(\frac{\alpha}{2}\Big)^{4} \leq 4\pi \frac{d}{M_{S}} \Big(\frac{r\|x-y\|}{(d-1)^2} \Big)^4,
% \end{equation}
% while the leading term in the r.h.s of \eqref{eq:spherical_MC_bound} is $(1/M_{S}) \lambda_{2}^2$, which is bounded as follows
% \begin{equation}
%     \frac{1}{M_{S}} \lambda_{2}^2 \leq \frac{4\pi}{M_{S}} \frac{d}{M_{S}} \Big(\frac{r\|x-y\|}{(d-1)^2} \Big)^2
% \end{equation}

% Instead, we study a more tractable variant of $2$-spherical designs in the following section.
\paragraph{The optimal kernel quadrature}
The previous constructions, which make use of uniform weights, can be combined with a procedure that optimizes the weights in some sense. Given a configuration of nodes, one could seek the weights that minimizes the worst-case approximation error (WCE) of the quadrature on the unit ball of an RKHS $\mathcal{H}_{S}$ associated to a positive definite kernel $\kappa_{\mathbb{S}^{d-1}}: \mathbb{S}^{d-1} \times \mathbb{S}^{d-1} \rightarrow \mathbb{R}$. This quantity is equal to 
\begin{equation}\label{eq:WCE}
    \Big\| \mu - \sum\limits_{i=1}^{M_{S}}w_{i}\kappa_{\mathbb{S}^{d-1}}(\theta_{i},.) \Big\|_{\mathcal{H}_S}
\end{equation}
where $\mu \in \mathcal{H}$ is defined by $\mu(.) := \int_{\mathbb{S}^{d-1}} \kappa_{\mathbb{S}^{d-1}}(.,\theta) \mathrm{d}\pi_{\mathbb{S}^{d-1}}(\theta)$. The vector $\hat{\bm{w}}$ of the weights $w_i$ that minimizes \eqref{eq:WCE} is given by $\hat{\bm{w}} = \bm{K}_{S}(\bm{\theta})^{-1} \mu(\bm{\theta})$, where $\bm{K}_{S}(\bm{\theta}):= (\kappa_{\mathbb{S}^{d-1}}(\theta_{i},\theta_j))_{i,j \in [M_{S}]} \in \mathbb{R}^{M_{S} \times M_{S}}$, and $\mu(\bm{\theta}) := (\mu(\theta_i))_{i \in [M_{S}]} \in \mathbb{R}^{M_{S}}$. When $\kappa_{\mathbb{S}^{d-1}}$ is rotation-invariant, $\mu$ is a constant function with an explicit formula. 
% The crux of this approach is the following upper bound\cite{MuFuSrSc17}
% \begin{equation}
% \forall \varphi \in \mathcal{H}_{S}, \:\: \Bigg|\int_{\mathbb{S}^{d-1}} \varphi(\theta) \mathrm{d} \sigma_{\mathbb{S}^{d-1}}(\theta) - Q^{S}(\varphi) \Bigg| \leq \|\varphi\|_{\mathcal{H}_S} \Big\| \mu - \sum\limits_{i=1}^{M_{S}}w_{i}\kappa_{\mathbb{S}^{d-1}}(\theta_{i},.) \Big\|_{\mathcal{H}_S}. 
% \end{equation}
% \ab{In other words, the WCE given by \eqref{eq:WCE} is a uniform upper bound on the integration error for functions that satisfy }
The study of the convergence of the resulting quadrature rule, also called \emph{the optimal kernel-based quadrature}, was conducted for random configurations of points \cite{Bac17,EhGrOa19, BeBaCh19, BeBaCh20,Bel21}. 
% \ab{finish this...}
In particular, if the kernel $\kappa_{\mathbb{S}^{d-1}}$ is the squared exponential kernel, the defined quadrature rule is guaranteed to converge at an exponential rate. However, the design of this quadrature rule requires a sharp theoretical analysis that allows tuning the bandwidth of the kernel $\kappa_{\mathbb{S}^{d-1}}$, and the resulting weights are not guaranteed to be positive.

\section{Related work}\label{sec:related_work}
%the poor scalability of kernel make approximation important
In this section we compare our constructions with existing work. 

\paragraph{Vanilla random Fourier features (RFF)}
The frequencies used in the initial form of Fourier features are $M$ i.i.d.\ samples from the distribution $\Lambda_{\sigma}$. The study of the approximation error of this method gave birth to an abundant literature \cite{SrSz15,SuSc15}; see \cite{LiHuChSuJo21} for a survey. In particular, these works are concerned with the study of the uniform error bound 
\begin{equation}
    \sup\limits_{\|x-y\| \leq B} |\kappa(x,y)- \hat{\kappa}(x,y)|,
\end{equation}
for a given $B>0$. The uniform error bound was proved when $M = \Omega(d/\sigma)$ with constants that depends on $\log (B)$.  
In practice, when $M$ gets closer to the dimension $d$, the approximation deteriorates strongly, as we will show in \Cref{sec:num_sim}.

% Compared to our method, the rate of convergence of the approximation error 

% In particular, in ... \ab{finish this....}
% In this case, the approximation error in~\eqref{eq:bochner_approx} converges to $0$ at the rate $\mathcal{O}(1/\sqrt{m})$ \cite{SuSc15} \ab{more refs?}. 

\paragraph{QMC Fourier features}
Methods based on quasi-Monte Carlo rules make use of the fact that the density $\Lambda_{\sigma}$ can be factorized $\Lambda_{\sigma} = \prod_{i=1}^d \Lambda_{\sigma,i}$, and take the $i$-th component of the frequency $\omega_{m} \in \mathbb{R}^{d}$ to be equal to $\Phi^{-1}(v_{m,i})$, where $(v_{m,.})_{m \in \mathbb{N}^*}$ is a low-discrepancy sequence, and $\Phi$ is the cumulative distribution function of the Gaussian \cite{YaSiAvMa14,AvSiYaMa16,HuSuHu2024}.  As shown in \cite{HuSuHu2024}, when $(v_{m,.})_{m \in \mathbb{N}^*}$ corresponds to the Halton sequence, we get a uniform bound of the form
\begin{equation}
    \sup\limits_{\|x-y\| \leq B} |\kappa(x,y)- \hat{\kappa}(x,y)| = O\Big( \frac{\log(M)^{d}}{M} \Big),
\end{equation}
which is a typical rate of convergence for QMC rules: the improvement over the Monte Carlo only happens when $M$ is exponential on the dimension $d$. This is corroborated by numerical simulations that show that there is no advantage in using QMC Fourier features in high dimensional settings. 

\paragraph{Stochastic spherical rules and orthogonal Fourier features}

% \paragraph{}

% Alternatively, methods rooted in random or quadrature Fourier features offer a more universally applicable solution, and our spherical-radial design falls into this category. In \cite{Rahimi2007RFF} Rahimi and Recht presented random Fourier feature for the first time, and the idea is to select Monte-Carlo points and universal weights to approximate this integral as in Eq.\eqref{eq:integral_approximation}. Subsequently, its numerical property and theoretical gaurantee was studied extensively \cite{YaLiMaJiZh12,RuRo17}. The convergence of vanilla random Fourier feature depends on Monte-Carlo sampling, which suffers from large estimation variance and slow convergence. 
% %quasi-monte-carlo FF
%quadrature-based FF

Another class of closely-related methods are stochastic spherical rules (SSR) introduced by Munkhoeva et al.\ in \cite{MuKaBuOs18}, which are based on the spherical-radial decomposition \cite{GeMo99}. The corresponding quadrature rule writes, for a continuous function $\varphi :\mathbb{R}^{d} \rightarrow \mathbb{R}$, as follows
\begin{equation}\label{eq:ssr_estimator}
  \frac{1}{\tilde{M}} \sum\limits_{i_t = 1}^{\tilde{M}}  \sum\limits_{i_r=1}^{M_R} \sum\limits_{i_s=1}^{M_S}a_{i_r}b_{i_s} \frac{\varphi(r_{i_t}Q_{i_t}z_{i_s})+ \varphi(-r_{i_t}Q_{i_t}z_{i_s})}{2},
\end{equation}
where the $z_{i_s} \in \mathbb{S}^{d-1}$, and the $r_{i_t}$ are i.i.d.\  samples from the distribution supported on the real line, and the $Q_{i_t}$ are i.i.d.\ samples from the Haar distribution of $\mathbb{O}_{d}(\mathbb{R})$. This construction ~\eqref{eq:ssr_estimator} was shown to extend orthogonal Fourier features (ORF), which is a popular sampling scheme that reduces the approximation error by enforcing orthogonal structure on the features\cite{YuSuChHoKu16,ChRoWe17}. There are multiple ways of choosing the weights $a_{i_r}$ and $b_{i_s}$. The adopted strategy in \cite{MuKaBuOs18} was to calculate the weights in such a way that the quadrature rule is exact for low-degree polynomials, so that the quadrature~\eqref{eq:ssr_estimator} yield an unbiased estimator of the integral $\int_{\mathbb{R}^{d}} \varphi(\omega) \Lambda(\omega) \mathrm{d}\omega $ \cite{MuKaBuOs18}. While they showed this scheme performs better than state-of-the-art methods, no comprehensive analysis has been done to derive the error bound when the number of radial nodes is larger than $3$. Moreover, the design of the $z_i$ was restricted: the $z_i$ were taken to belong to a regular $d$-simplex, and the sensitivity of the method with respect to this choice is not clear.

\section{Numerical simulations}\label{sec:num_sim}
% In this section we present numerical simulations that corroborate our theoretical findings. First, in \Cref{sec:num_sim_1}, we study the influence of the choice of the spherical quadrature rule along with the influence of the order of the radial quadrature rule on a real dataset. Then, in \Cref{sec:num_sim_2}, we compare our constructions with exiting methods for kernel matrix approximation. Finally, in \Cref{sec:num_sim_3}, we compare our constructions with existing methods for two learning tasks. 

In this section, we present numerical simulations that corroborate our theoretical findings. First, in \Cref{sec:num_sim_1}, we study the influence of the choice of the spherical quadrature rule along with the influence of the order of the radial quadrature rule. Then, in \Cref{sec:num_sim_2}, we compare our constructions with existing methods for entry-wise and spectral kernel matrix approximation. Finally, in \Cref{sec:num_sim_3}, we compare our constructions with existing methods for two learning tasks. The code for this paper is available at \href{https://github.com/qianyu-zhu/SRFF}{\texttt{https://github.com/qianyu-zhu/SRFF}}.

\subsection{The importance of the spherical and radial designs}\label{sec:num_sim_1}

In this section, we study the influence of the design of the spherical quadrature rule and the number of nodes of the radial quadrature rule. For this purpose, we conduct two experiments on the dataset Powerplant $(d=4)$. 
The first experiment compares vanilla RFF and stochastic spherical-radial quadratures (SSR) as defined in \cite{MuKaBuOs18}, to six spherical-radial (SR) quadrature rules as defined in \Cref{sec:the_construction}: (1) SR-MC takes $Q^S$ to be Monte Carlo on $\mathbb{S}^{d-1}$; (2) SR-OMC takes $Q^S$ to be Orthogonal Monte Carlo on $\mathbb{S}^{d-1}$ as defined in \Cref{sec:spherical_quadrature_rule}; (3) SR-SOMC takes $Q^S$ to be symmetrized Orthogonal Monte Carlo on $\mathbb{S}^{d-1}$, where $M_S$ is an even integer and the $\theta_{i}$ are the columns of the matrix obtained by the concatenation of $\bm{B}_{1}, -\bm{B}_{1}, \dots, \bm{B}_{M_S/(2d)}, -\bm{B}_{M_S/(2d)}$ and the $\bm{B}_{i}$ are i.i.d.\ matrices from the Haar distribution of $\mathbb{O}_{d}(\mathbb{R})$; (4) SR-OKQ-MC is the optimal kernel quadrature (OKQ) associated with the nodes of SR-MC, where the kernel $\kappa_{\mathbb{S}^{d-1}}$ is taken to be the Gaussian kernel on $\mathbb{S}^{d-1}$; (5) SR-OKQ-OMC is the optimal kernel quadrature associated with the nodes of SR-OMC; (6) SR-OKQ-SOMC is the optimal kernel quadrature associated with the nodes of SR-SOMC. For this experiment, we randomly select $5000$ samples from the dataset and report the relative error $\|\bm{K}-\hat{\bm{K}}\|_{\mathrm{Fr}}/\|\bm{K}\|_{\mathrm{Fr}}$ averaged over $20$ runs. The kernel bandwidth is chosen to be $\sigma = 2.83$.

% and SR-OKQ-SOMC 

% ( optimizing the weights in  SR-OMC by taking $\kappa_{\mathbb{S}^{d-1}}$ to be the Gaussian kernel on $\mathbb{S}^{d-1}$, 

% an orthogonal matrix $B_i$ is taken as defined in 
% % : 1) , 2) , 3) SR as defined in with the spherical 

% First we analyse three types of strategies for generating uniform samples on the sphere: I. Monte-Carlo (MC); II. Orthonormal Monte-Carlo (OMC); III. Symmetrized Orthonormal Monte-Carlo (SOMC). The symmetrized orthogonal Monte-Carlo takes the same number of features as OMC, but contains $\{\theta_i\}_i$ and $\{-\theta_i\}_i$ at the same time. We draw $N_d = 5000$ data independently from the Powerplant ($d=4$) and LETTER ($d=16$) datasets to form the approximated kernel matrix $\hat{K}$ and take average of the error over multiple runs. The kernel bandwidth is chosen to be $\sigma =\mathcal{M}d^{-1/4}$ according to the guidance of Theorem~\ref{thm:main_theorem_MC}, where $\mathcal{M}$ is the diameter of the dataset. The relative error is in Frobenius norm $\frac{\|K-\hat{K}\|_F}{\|K\|_F}$, which is important in many downstream tasks including support vector machine and kernel Ridge regression. The number of features we used for comparison is integer multiples of dataset dimension.

\begin{wrapfigure}{r}{0.5\textwidth}
  % \begin{minipage}[b]{0.33\linewidth}
  % \centering
    \includegraphics[width=\linewidth]{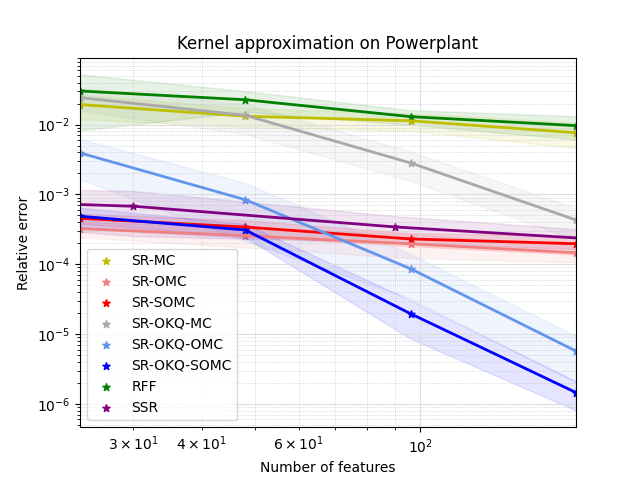}
\caption{Relative error of different kernel approximation schemes for the dataset Powerplant. Shaded regions indicate sample standard deviation of the relative error, computed over 20 independent runs of each method. Radial nodes $M_R$ are fixed and the number of spherical nodes $M_S$ changes.} 
\label{fig:sim_1}
\end{wrapfigure}

\Cref{fig:sim_1} shows the results for the different quadrature methods as the number of features increases. We observe that SR-MC has a similar performance compared to RFF, while optimizing the weights in SR-OKQ-MC gives slightly better results. On the other hand, SR-OMC yields significantly better results compared to RFF, and slightly better than SSR. The symmetrization of the orthogonal matrix in SR-SOMC yields slightly worse performance than SR-OMC. Finally, optimizing the weights in SR-OKQ-OMC leads to better results when the number of features is large, and combining the optimization of weights and the symmetrization of the nodes in SR-OKQ-SOMC yields the best performance compared to every other method. The experiment shows that enforcing orthogonality in the spherical quadrature rule notably reduces the approximation error, which is in line with our theoretical analysis in \Cref{sec:spherical_quadrature_rule}. Moreover, it shows that optimizing the weights through OKQ significantly improves the rates of convergence and yields exponential convergence, as predicted by existing theoretical results in the literature.

\begin{figure}[h]
    \begin{subfigure}[b]{.33\linewidth}
  \centering
    \includegraphics[width=.99\linewidth]{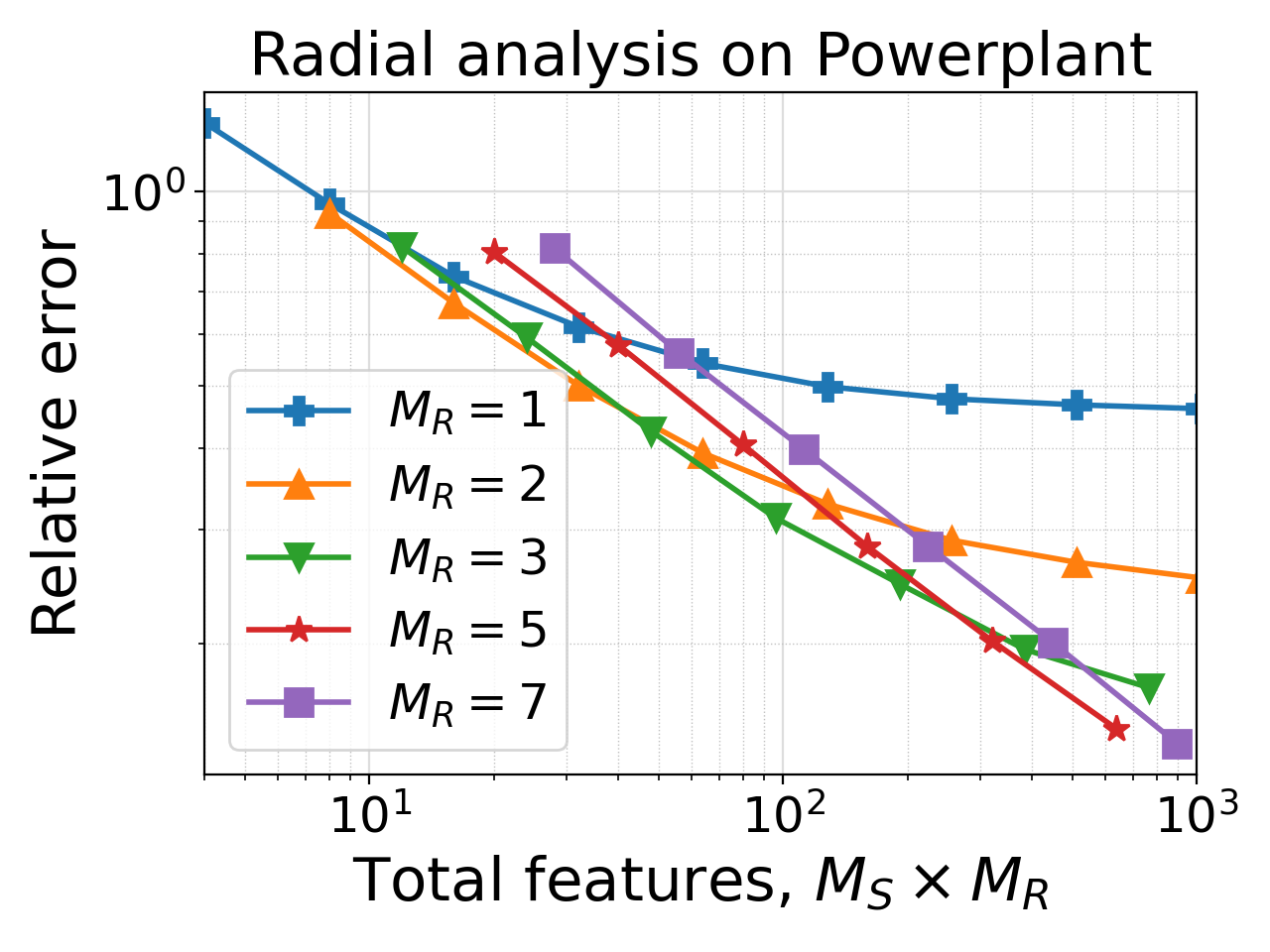}
    \vspace{-10pt}
    \caption{$\sigma = 0.28$}
    \label{fig:radial_design_smallbandwidth}
  \end{subfigure} 
\begin{subfigure}[b]{.33\linewidth}
  \centering
    \includegraphics[width=.99\linewidth]{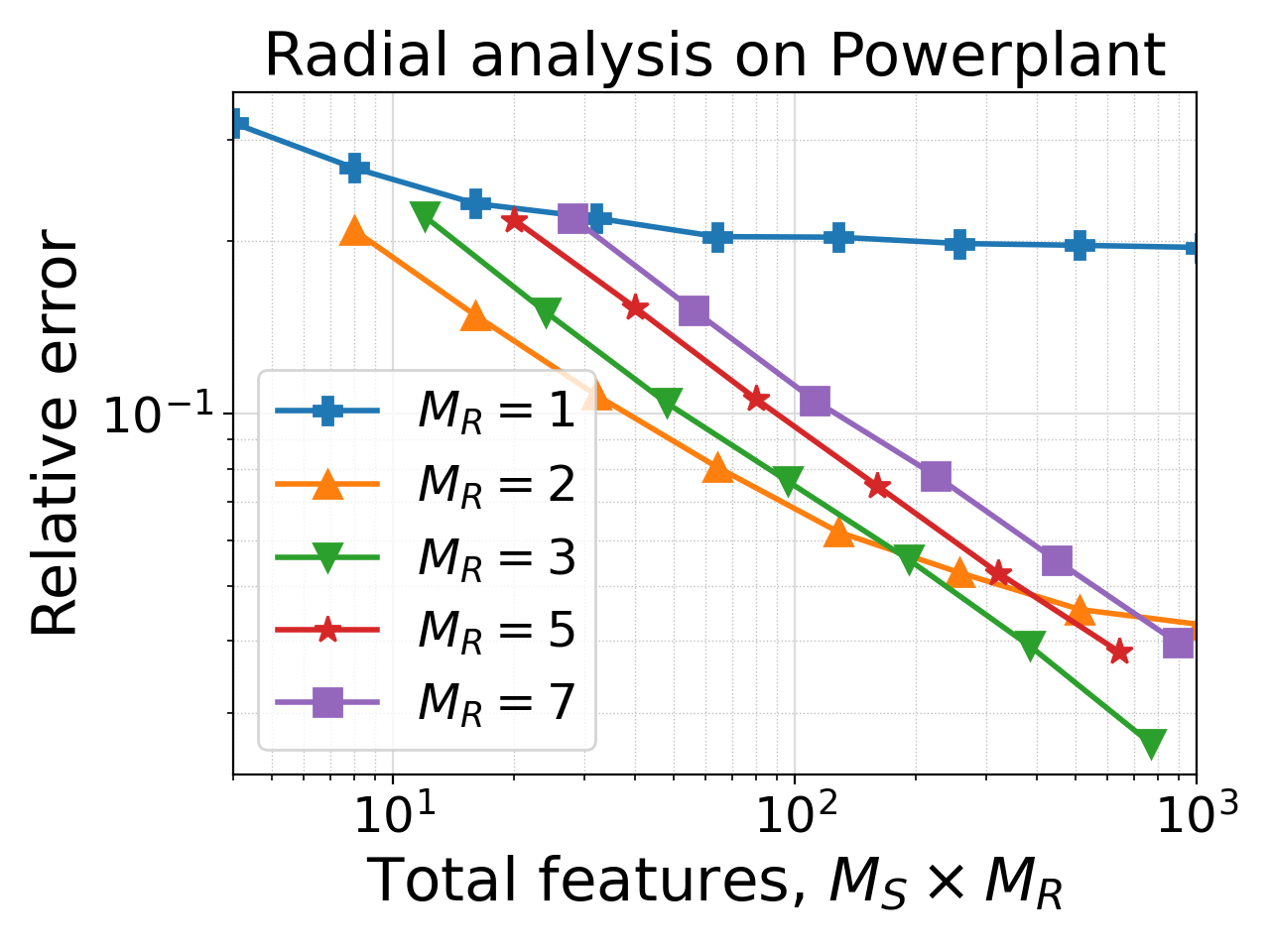}
    \vspace{-10pt}
    \caption{$\sigma = 0.57$}
    \label{fig:radial_design_smallbandwidth}
  \end{subfigure} 
  \begin{subfigure}[b]{0.33\linewidth}
  \centering
    \includegraphics[width=.99\linewidth]{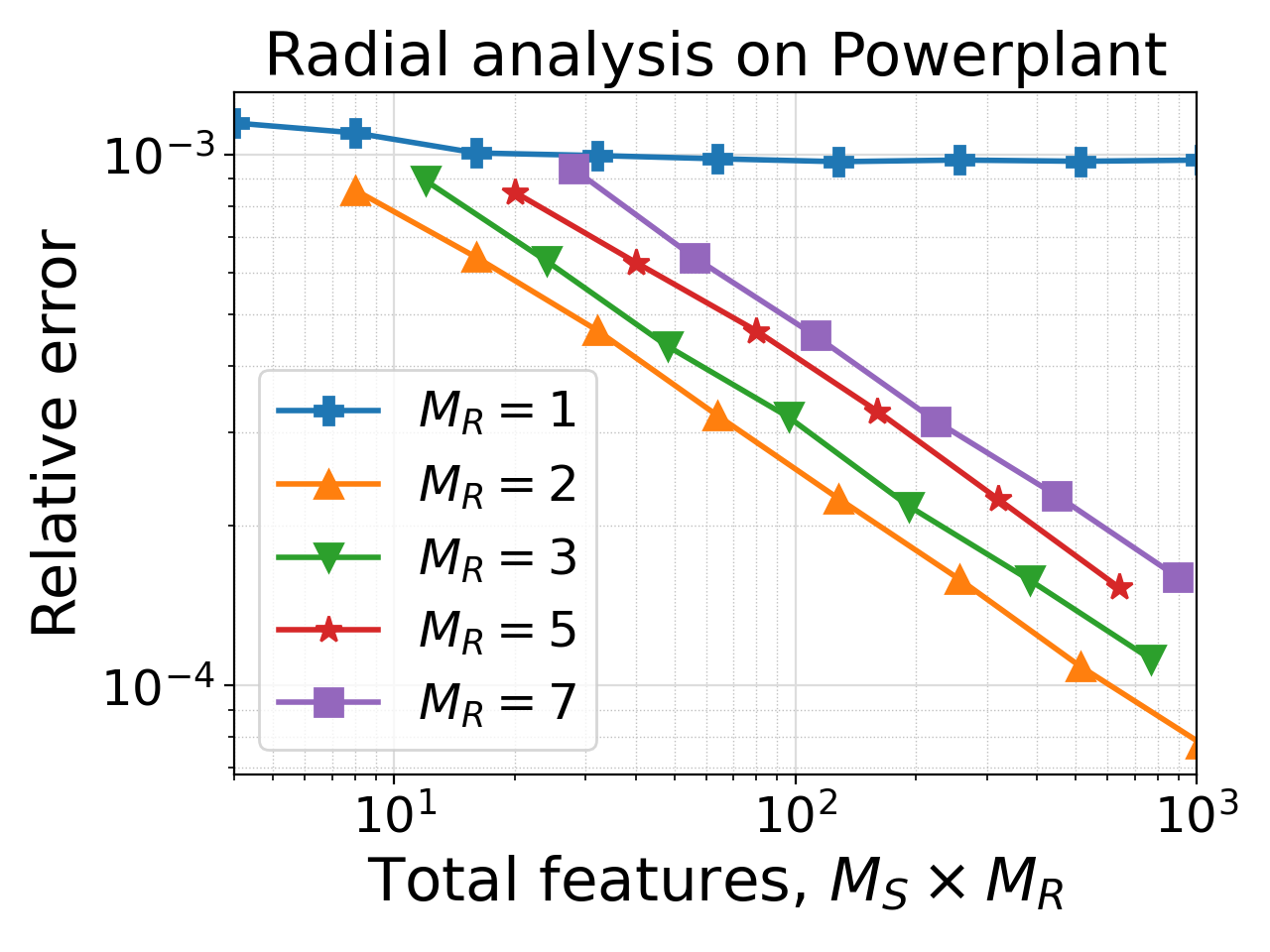}
    \vspace{-10pt}
    \caption{$\sigma = 2.83$}
    \label{fig:radial_design_largebandwidth}
  \end{subfigure} 

\caption{Relative error of kernel approximation schemes for increasing number of radial nodes $M_R$. \label{fig:sim_1.2}}
\end{figure}

In the second experiment, we investigate the impact of the number of nodes of the radial quadrature rule. \Cref{fig:sim_1.2} illustrates the relative error when SR-OMC is used to approximate the kernel matrix corresponding to $5000$ random samples from Powerplant. We conduct experiments with $M_R \in \{1, 2, 3, 5, 7\}$ using three different values of $\sigma$. For $\sigma = 0.28$, the relative error corresponding to larger $M_R$ decreases consistently but with a worse initial constant, eventually reaching a saturation point. This can be explained by the trade-off between spherical and radial error in \eqref{eq:main_theorem_OMC}. Initially, the spherical error dominates, but as the total number of features increases, the radial error, becomes more significant, and the error curve plateaus for a fixed value of $M_R$. The plateau is higher and appears earlier for smaller $M_R$. 
A similar behavior is observed when $\sigma = 2.83$, but this time the optimal $M_R$ is equal to $2$ when $M_S . M_R \leq 10^3$. For a fixed dataset and $\sigma$, the optimal value of $M_R$ is achieved when the spherical and radial errors are balanced. This experiment shows the importance of the choice of $M_R$ and its dependency on the kernel bandwidth $\sigma$. Further analysis on synthetic datasets and other real datasets is done in \Cref{sec:optimal_radial_nodes}.

\subsection{Kernel approximation on real-world datasets}\label{sec:num_sim_2}
In this section, we compare our construction SR-OMC to existing methods: (1) random Fourier features (RFF) \cite{Rahimi2007RFF}; (2) orthogonal random features (ORF) \cite{YuSuChHoKu16}; (3) QMC-based Fourier features (QMC) \cite{AvSiYaMa16}; and (4) stochastic spherical-radial quadratures (SSR) \cite{MuKaBuOs18}, on 4 datasets: Powerplant $(d=4, \sigma = 1.41)$, Letter $(d=16, \sigma = 1.0)$, USPS $(d=256, \sigma = 11.31)$, and MNIST $(d=784, \sigma = 10.58)$. We study two quantities: the relative Frobenius norm $d_{F}(K, \hat{K}) = \|K-\hat{K}\|_F/ \|K\|_F$, and a quantity evaluating spectral deviation $G(K,\hat{K}) = \|K^{-\frac{1}{2}}\hat{K}K^{-\frac{1}{2}}-\mathrm{Id}\|_2$.  

\Cref{fig:sim_2} illustrates the relative Frobenius error for various methods as a function of the number of features. Across all scenarios, SR-OMC and SSR consistently achieve superior accuracy compared to other state-of-the-art methods. In low-dimensional cases, ORF exhibits larger errors than SR-OMC and SSR, followed by QMC and RFF. As the dimensionality increases, ORF's performance converges to that of SR-OMC and SSR, while QMC and RFF continue to exhibit significantly larger errors.

Regarding the spectral deviation error, \Cref{fig:sim_2_2} demonstrates that our method consistently outperforms others across all datasets. As the dimensionality increases, OMC initially exhibits comparable approximation power on the Powerplant dataset but incurs a larger error on the LETTER dataset. ORF and SSR do not maintain stable spectral approximations on either dataset. In high-dimensional settings, all four methods perform similarly on the USPS and MNIST datasets, with our method showing a slight advantage. \Cref{fig:sim_2_2} implies that SR-OMC offers numerical guarantees for spectral approximation, and exploring the underlying theory presents a promising direction for future research.

% \sout{These results underscore the importance of respecting the isometric structure of the Gaussian measure in kernel approximation.}

% The standard deviation is illustrated by the shaded regions in \Cref{fig:sim_2}. For a fixed dataset, quadrature-based methods (SR-OMC, SSR) has significantly less variability compared to their random counterparts, highlighting the superiority of deterministic quadrature.

Although our method, SR-OMC, has a relative Frobenius error similar to that of SSR, it offers greater flexibility and robustness in terms of radial quadrature rule. Compared to the radial design in \cite{MuKaBuOs18} which includes exactly 3 nodes (the origin and two symmetric random nodes), our method allows for easy adjustment of the radial design based on changes in the kernel bandwidth, dataset dimension, and total number of Fourier features. This adaptability enhances our algorithm's robustness across different problem settings. Particularly for high-dimensional datasets, minimizing the number of Fourier features is crucial to reduce computational complexity. Our method operates efficiently with minimal feature count of $d$, whereas SSR needs at least $2(d+1)$ features. 

% \sout{The second advantage is the complexity of implementation. SSR requires the vertices of a d-simplex, which can be computationally intensive for large 
% $d$. Empirically, our method is faster to compute and can be better parallelized on high-performance computing (HPC) systems. \ab{How we can benefit from HPC?} This makes SR-OMC not only more flexible but also more efficient in practice.}

% We observe that SR-OMC outperforms RFF, ORF and QMC on both datasets. Moreover, for Powerplant, SR-OMC  improves upon SSR, while SR-OKQ-SOMC outperforms every method when the number of features is large. In high dimensions, the dimension of the $k$-th eigenspace for the kernel $\kappa_{\mathbb{S}^{d-1}}$ on the unit sphere is $d^k$; hence we need $\mathcal{O}(d^k)$ more features to push the error below $\mathcal{O}(\alpha^{-k})$

% have smaller standard deviation in the log-log plotting. 
% We use data from several real-world datasets: \textbf{Powerplant}, \textbf{LETTER}, and \textbf{USPS} and compute the kernel approximation error in Frobenius norm. The choice for kernel matrix size and kernel bandwidth $\sigma$ is in line with last section.  Figure~\ref{fig:kernel_approx} displays the approximation error of our methods and 4 other benchmark methods across the datasets.

\begin{figure}[ht]
\captionsetup{justification=centering}
\label{fig:kernel_approx}
  \begin{subfigure}[b]{0.25\linewidth}
    \includegraphics[width=.99\linewidth]{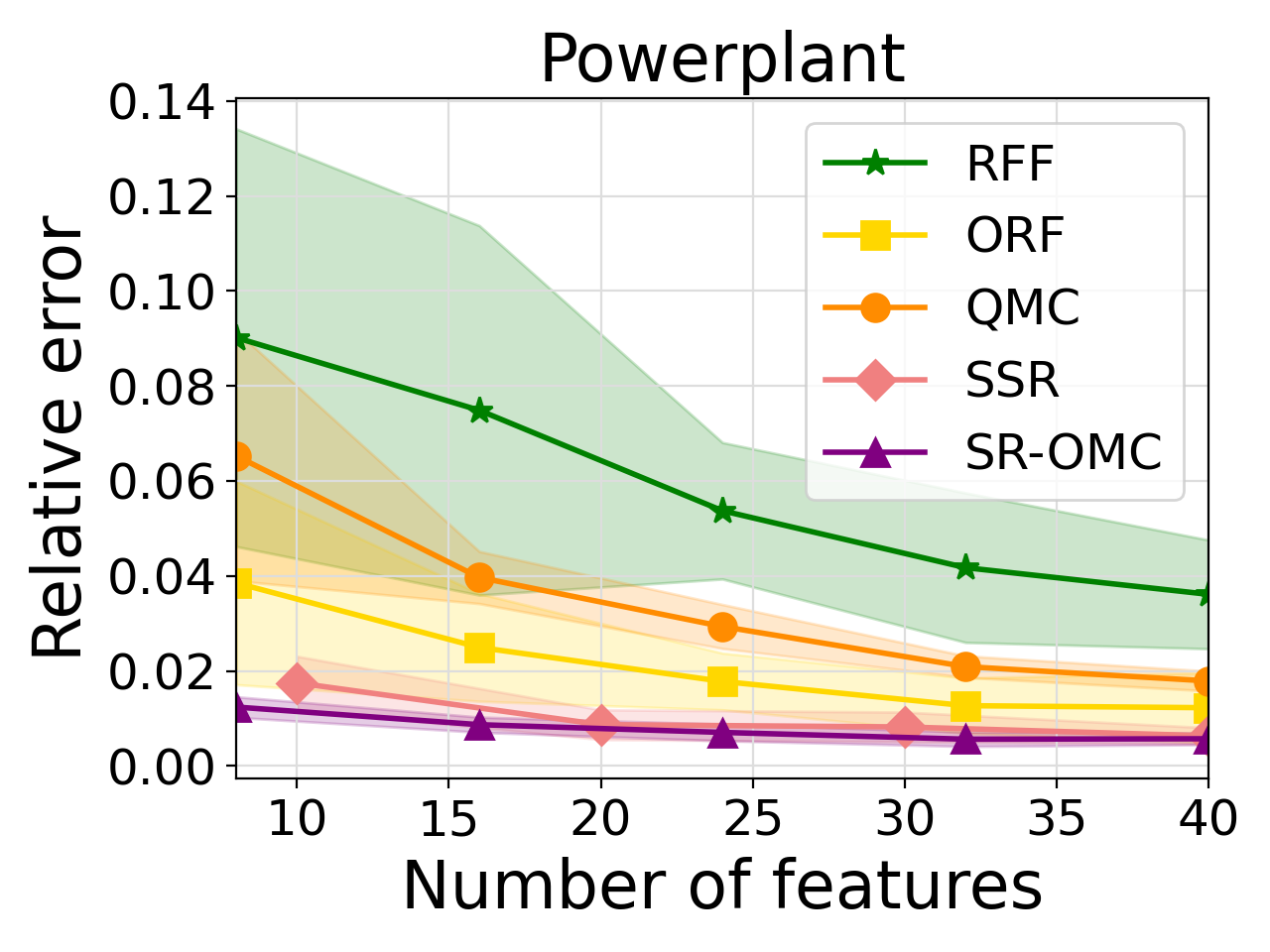}
    \vspace{-10pt}
    \caption{Powerplant, $d=4$\\$M_R = 2, \sigma = 1.41$} 
  \end{subfigure} 
  \hspace{-5pt}
  \begin{subfigure}[b]{0.25\linewidth}
    \includegraphics[width=.99\linewidth]{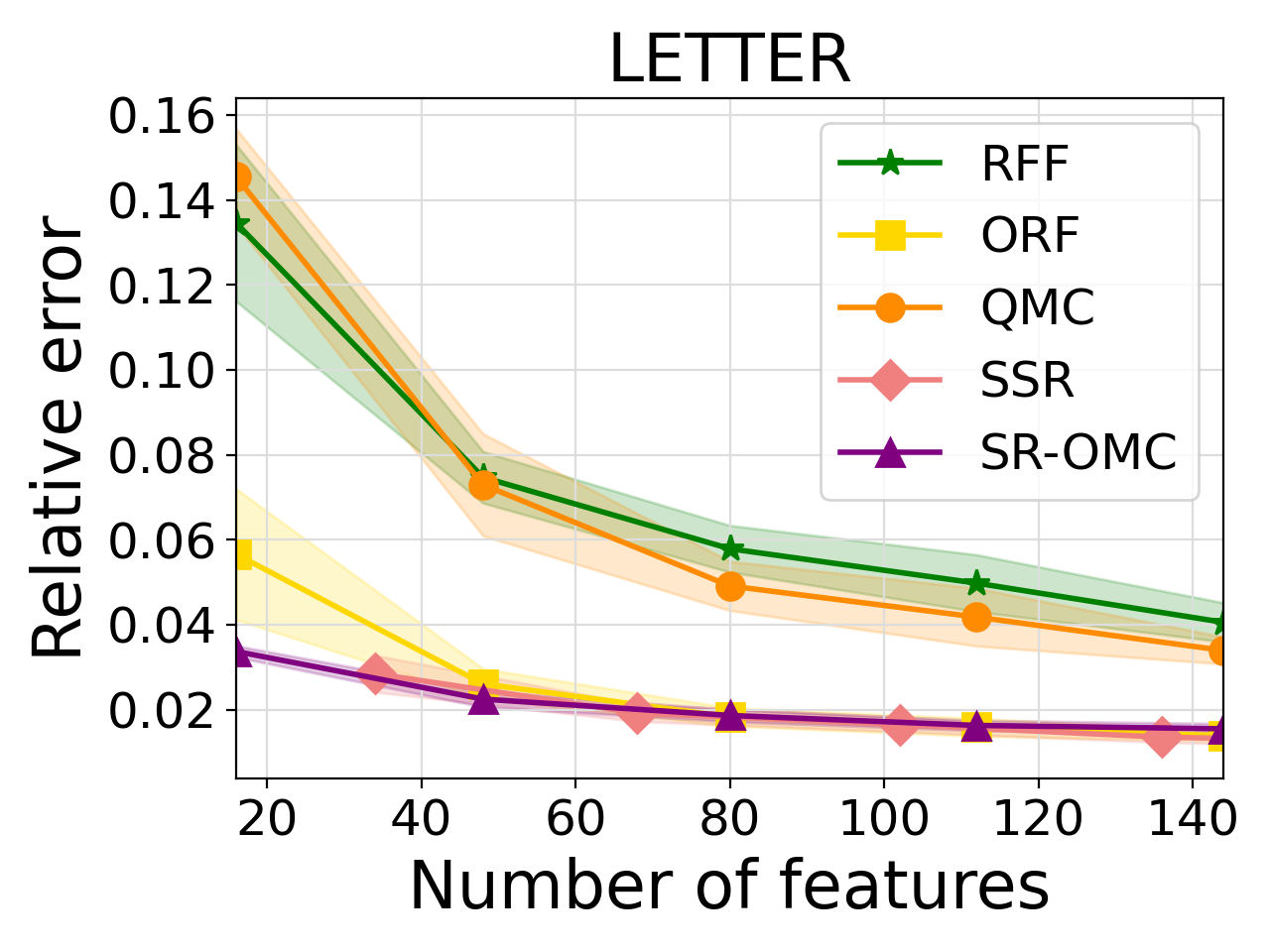}
    % \vspace{-10pt}
    \caption{Letter, $d=16$\\$M_R = 1, \sigma = 1.0$}
  \end{subfigure} 
  \hspace{-5pt}
  \begin{subfigure}[b]{0.25\linewidth}
    \includegraphics[width=.99\linewidth]{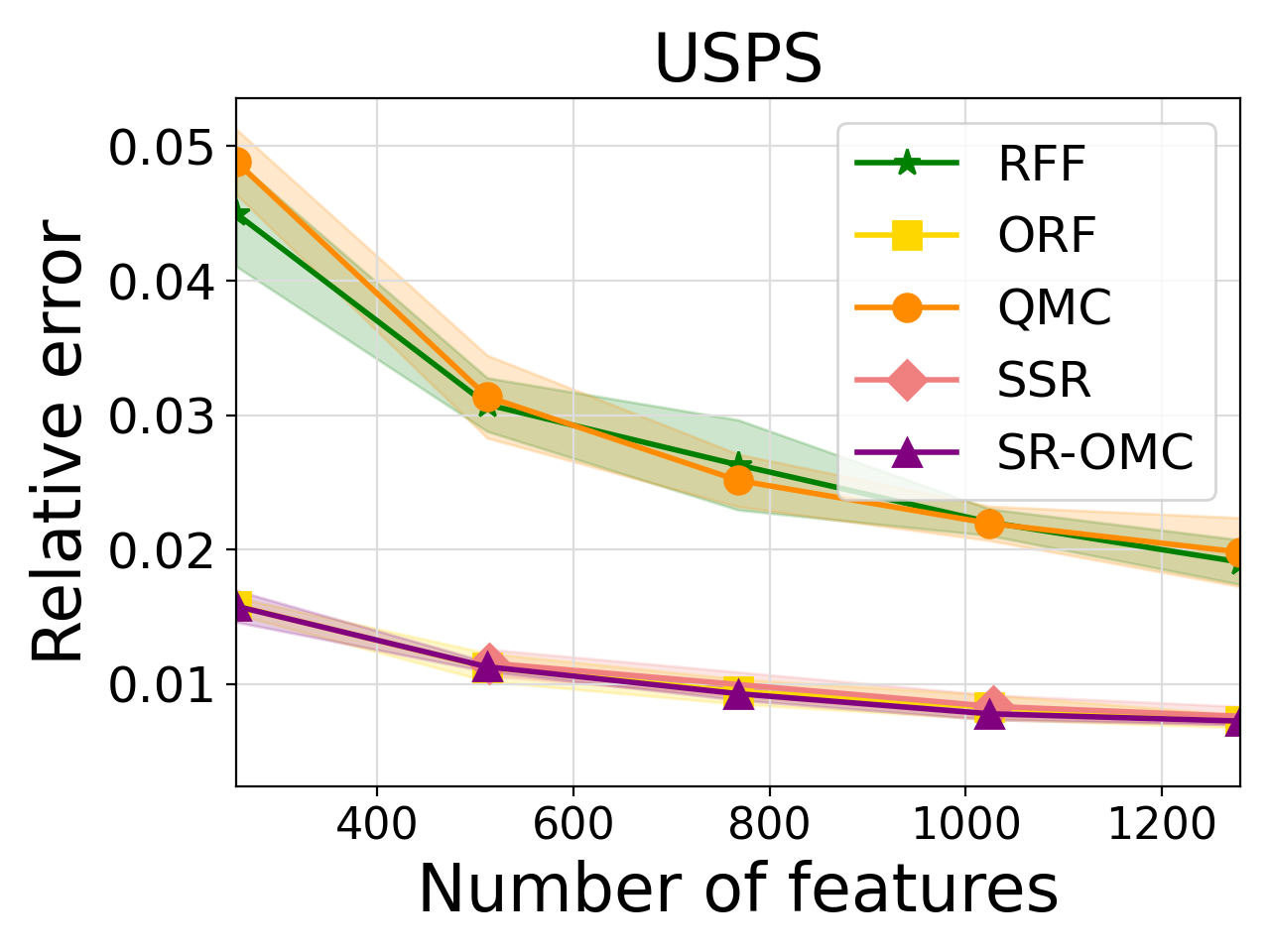}
    \caption{USPS, $d=256$\\ $M_R = 1, \sigma = 11.31$}
  \end{subfigure}   
  \hspace{-5pt}
  \begin{subfigure}[b]{0.25\linewidth}
    \includegraphics[width=.99\linewidth]{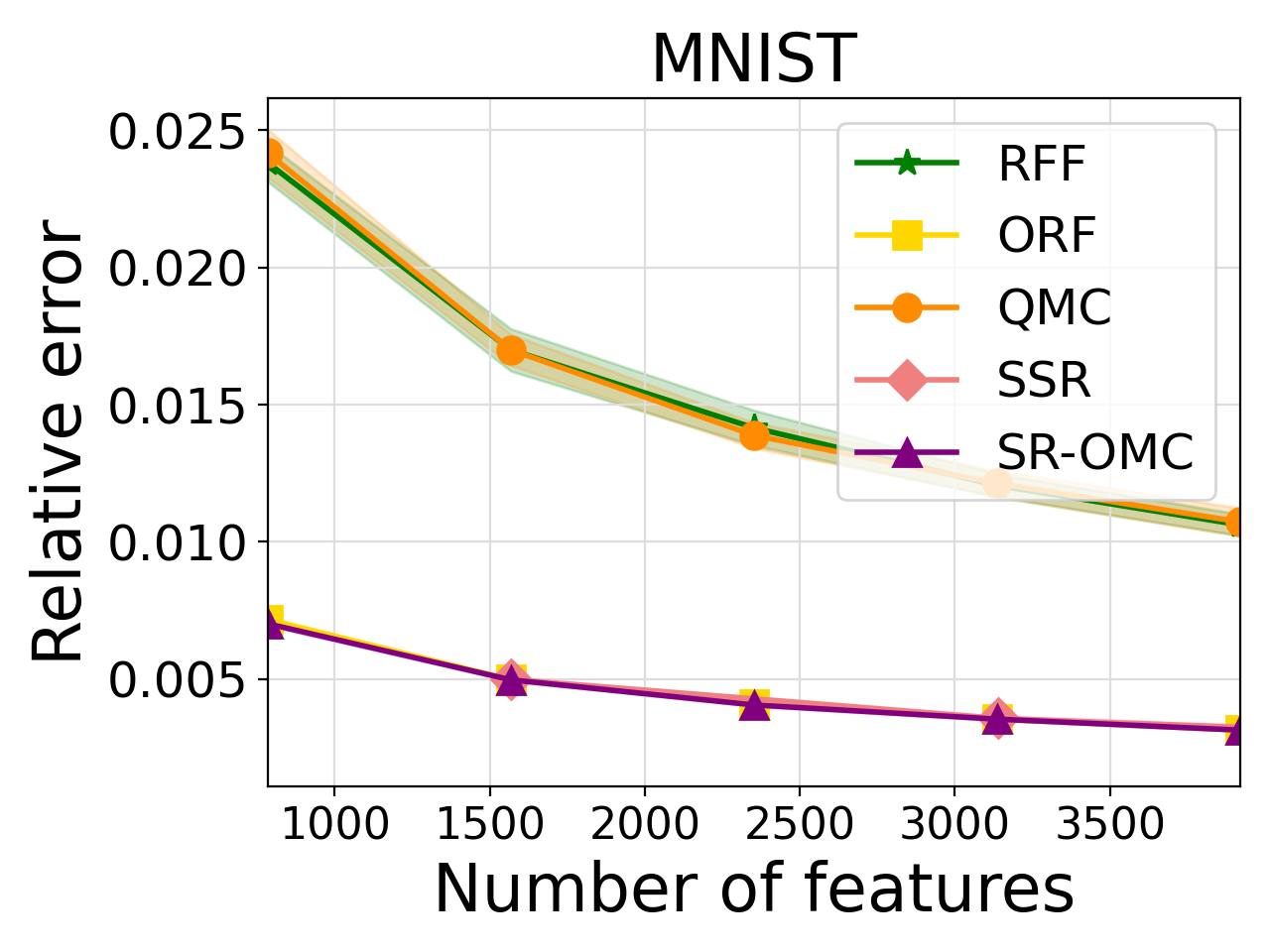}
    \caption{MNIST, $d=784$\\$M_R = 1, \sigma = 10.58$}
  \end{subfigure} 
\caption{Kernel approximation error on 4 datasets. SSR has slightly different bins on the x-axis due to its specific spherical-radial construction.  \label{fig:sim_2}}
\end{figure}

\begin{figure}[ht]
\captionsetup{justification=centering}
\label{fig:kernel_approx}
  \begin{subfigure}[b]{0.25\linewidth}
    \includegraphics[width=.99\linewidth]{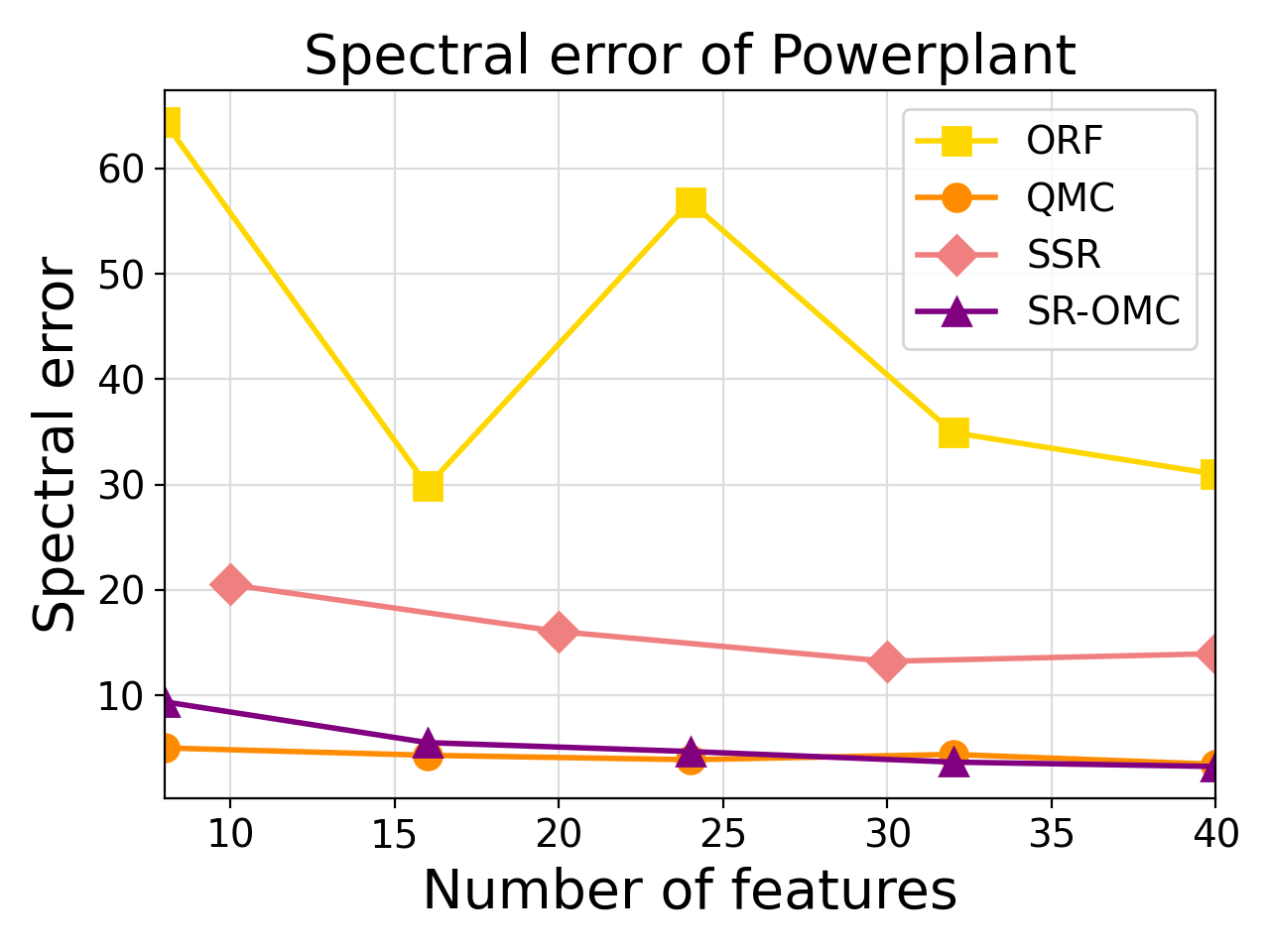}
    \vspace{-10pt}
    \caption{Powerplant, $d=4$\\$M_R = 2, \sigma = 1.41$} 
  \end{subfigure} 
  \hspace{-5pt}
  \begin{subfigure}[b]{0.25\linewidth}
    \includegraphics[width=.99\linewidth]{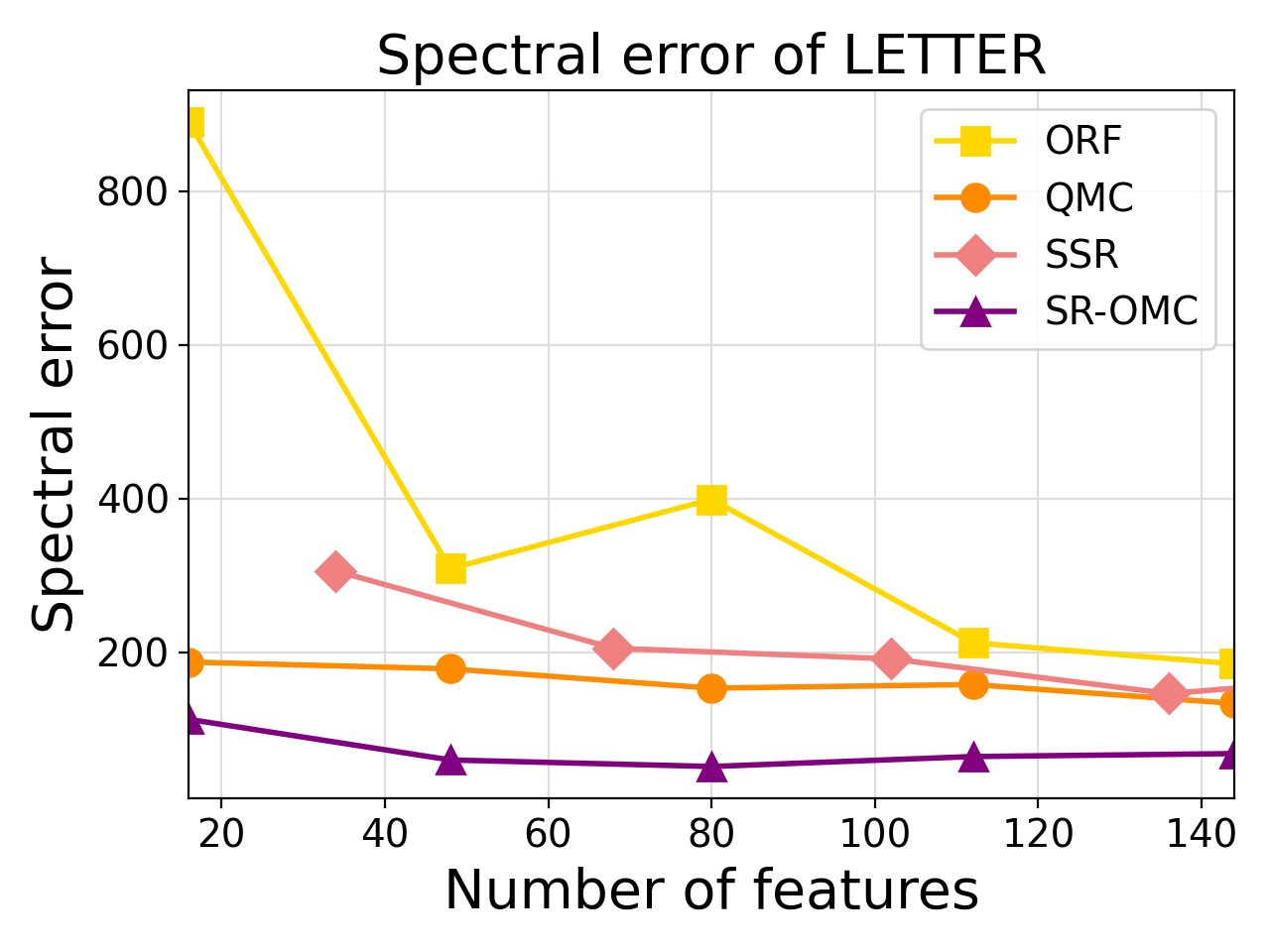}
    % \vspace{-10pt}
    \caption{Letter, $d=16$\\$M_R = 1, \sigma = 1.0$}
  \end{subfigure} 
  \hspace{-5pt}
  \begin{subfigure}[b]{0.25\linewidth}
    \includegraphics[width=.99\linewidth]{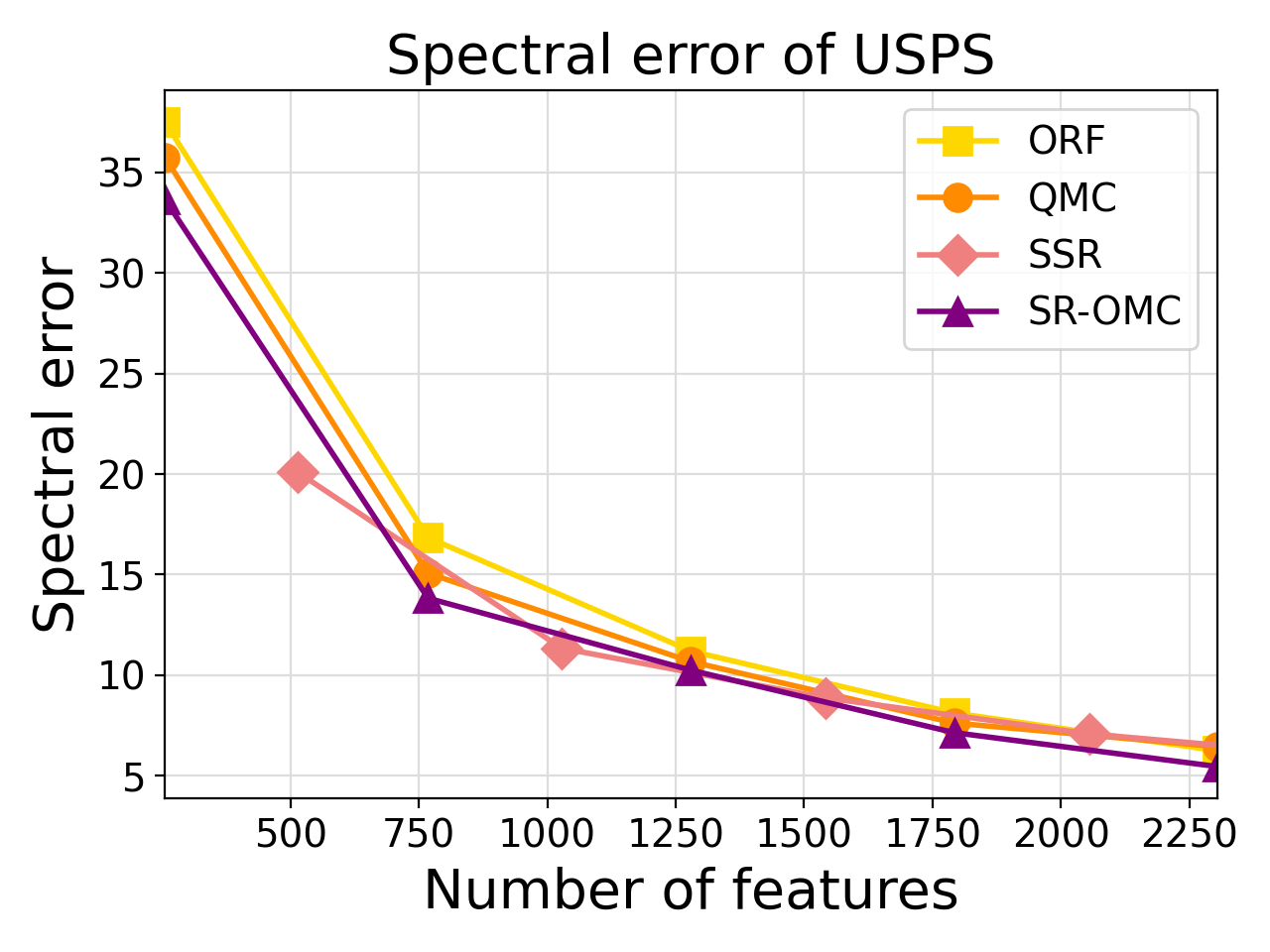}
    \caption{USPS, $d=256$\\ $M_R = 1, \sigma = 11.31$}
  \end{subfigure}   
  \hspace{-5pt}
  \begin{subfigure}[b]{0.25\linewidth}
    \includegraphics[width=.99\linewidth]{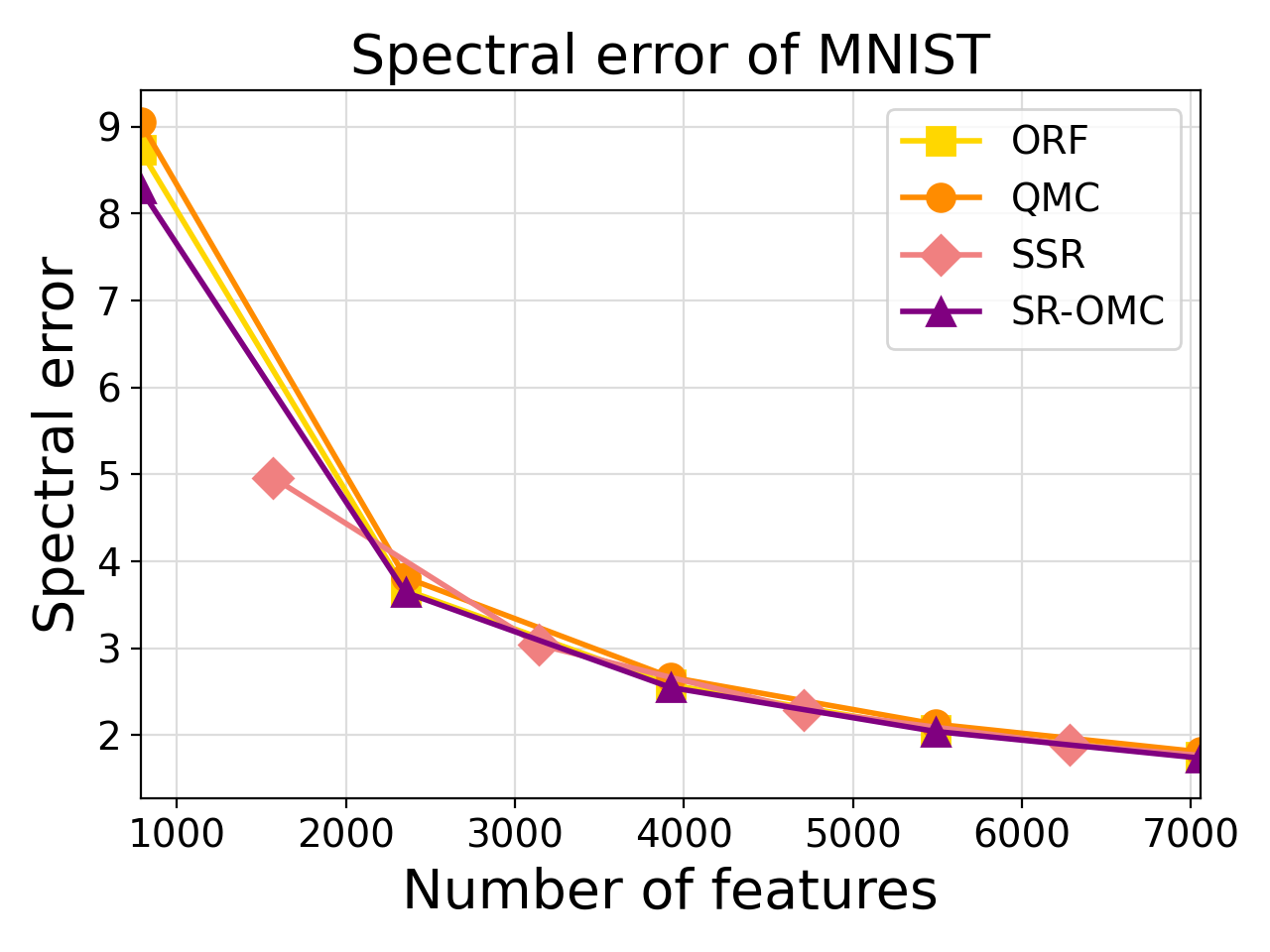}
    \caption{MNIST, $d=784$\\$M_R = 1, \sigma = 10.58$}
  \end{subfigure} 
\caption{Kernel approximation error on 4 datasets. SSR has slightly different bins on the x-axis due to its specific spherical-radial construction.  \label{fig:sim_2_2}}
\end{figure}

\subsection{Prediction on real-world datasets}\label{sec:num_sim_3}
To evaluate our method in practical learning tasks, we compare the performance of different kernel-approximation schemes on a regression task on Powerplant, and classification tasks on Letter and USPS respectively. For the regression task, we implement support vector regression (SVR) and report its $ R^2 $ score ($ R^2 = 1 $  indicates that the regression predictions perfectly fit the data); while for the classification task, we train a support vector classifier (SVC) and test its prediction accuracy.

\Cref{fig:sim_3} compares the result of different methods along with the standard deviation computed from 10 repeats. We observe that SR-OMC and SSR both achieve similar high-precision results. ORF demonstrates comparable performance to SR-OMC and SSR on high-dimensional dataset USPS but shows inferior performance on the others. On the contrary, QMC behaves the best on Powerplant but the performance drops significantly when dimension gets bigger. 

Combine the numerical analysis in \Cref{sec:num_sim_2} and \Cref{sec:num_sim_3}, we summarize that the prediction power of different methods does not fully align with kernel approximation accuracy in general. Our method, SR-ORC, consistently performs the best in all dimensions for both kernel approximation and prediction. SSR achieves comparable results to our method but admits notable spectral deviation in low-dimension kernel approximation tasks and requires 2 times more features for similar performance. ORF performs well in high-dimensional settings, while QMC behaves the opposite.

\begin{figure}[ht]
\captionsetup{justification=centering}
\label{fig:prediction}
  \begin{subfigure}[b]{0.33\linewidth}
  \centering
    \includegraphics[width=.99\linewidth]{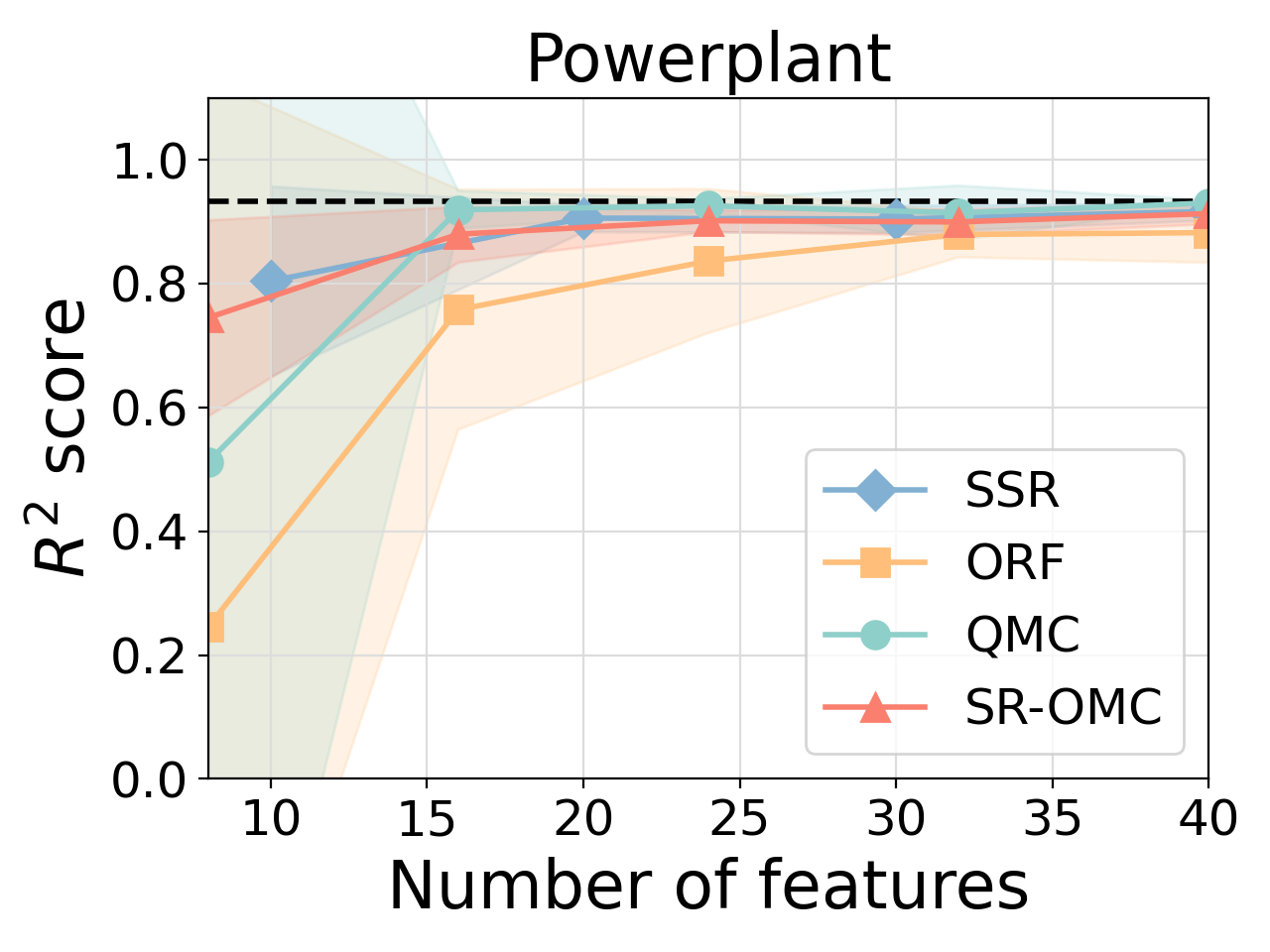}
    \vspace{-10pt}
    \caption{$R^2$ error for Powerplant\\$d=4, M_R=2, \sigma=1.41$} 
  \end{subfigure} 
  \begin{subfigure}[b]{0.33\linewidth}
  \centering
    \includegraphics[width=.99\linewidth]{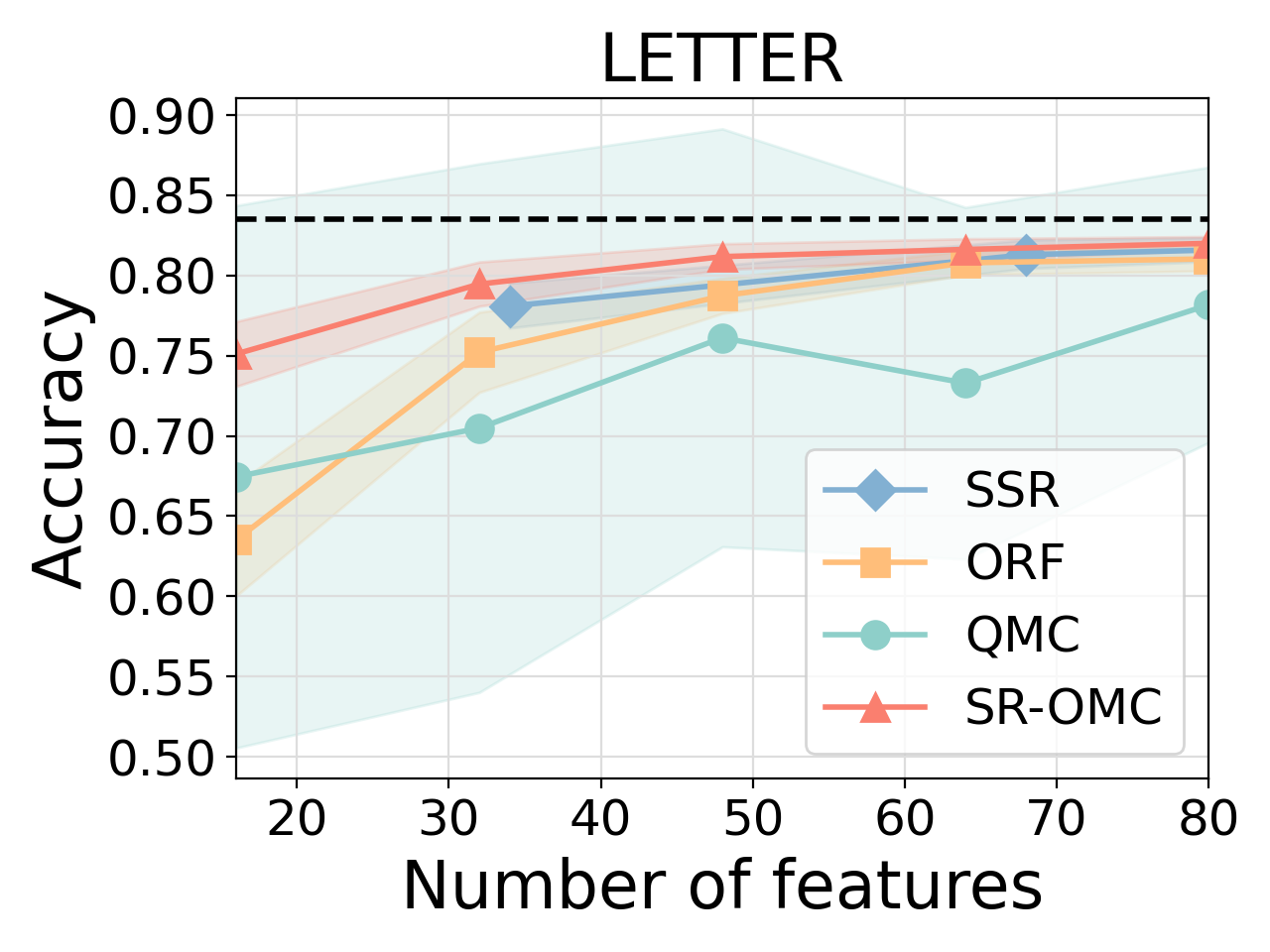}
    \vspace{-10pt}
    \caption{Accuracy for LETTER\\$d=16, M_R=1, \sigma=1.0$}
  \end{subfigure} 
  \begin{subfigure}[b]{0.33\linewidth}
  \centering
    \includegraphics[width=.99\linewidth]{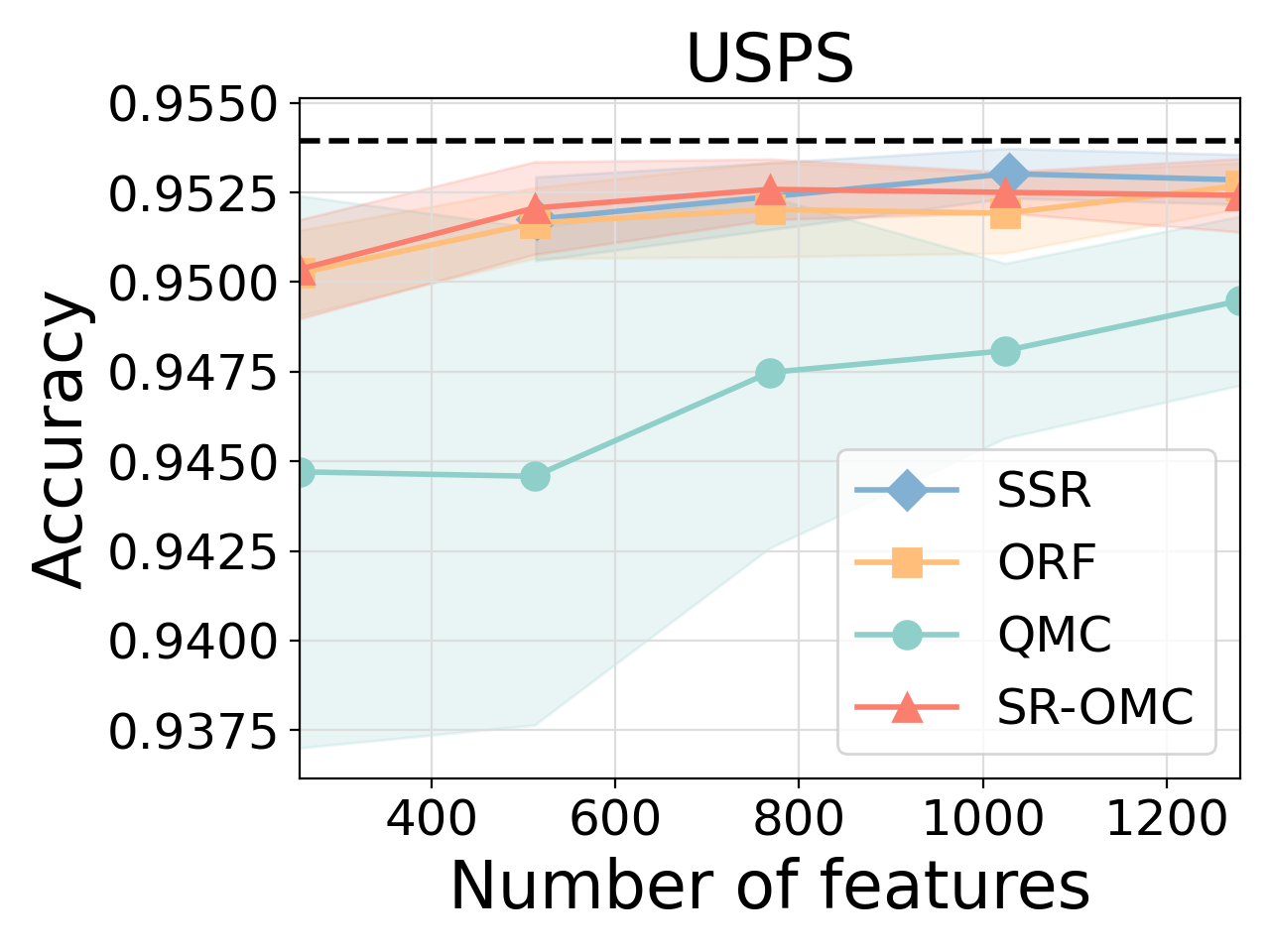}
    \vspace{-10pt}
    \caption{Accuracy for USPS\\$d=256, M_R=1, \sigma=11.31$} 
  \end{subfigure} 
  \vspace{-10pt}
\caption{Performance of SVM on regression and prediction tasks using different kernel approximation schemes.\label{fig:sim_3}}
\end{figure}

\section{Conclusion}

In this work, we provide an exhaustive study of spherical-radial Fourier features for approximating the squared exponential kernel when Gaussian quadrature is used in the radial part. Our analysis is based on a decomposition of the approximation error into spherical and radial components, allowing us to quantify the contribution of each term. In particular, our analysis highlights the interplay between 
the number of nodes in both quadratures and the bandwidth of the kernel. We also show that the approximation error is sensitive to the the choice of the spherical quadrature rule; specifically, we demonstrate that enforcing orthogonality in the nodes of the spherical quadrature rule yields better results compared to Monte Carlo on $\mathbb{S}^{d-1}$, both theoretically and empirically. Finally, our numerical simulations suggest that optimizing the weights of the spherical quadrature rule further improves the quality of approximation. Proving uniform approximation bounds on compact sets, and extending these results to other shift-invariant kernels, is left for future work.

\section{acknowledgement}
AB, QZ, and YMM are grateful for support from the US Department of Energy (DOE), Office of Science, Office of Advanced Scientific Computing Research (ASCR), under award number DE-SC0023188.

\newpage

\bibliographystyle{plain}
\bibliography{references}

\begin{thebibliography}{10}

\bibitem{AtHa12}
K.~Atkinson and W.~Han.
\newblock {\em Spherical harmonics and approximations on the unit sphere: an
  introduction}, volume 2044.
\newblock Springer Science \& Business Media, 2012.

\bibitem{AvSiYaMa16}
H.~Avron, V.~Sindhwani, J.~Yang, and M.~Mahoney.
\newblock Quasi-monte carlo feature maps for shift-invariant kernels.
\newblock {\em Journal of Machine Learning Research}, 17(120):1--38, 2016.

\bibitem{Bac17}
F.~Bach.
\newblock On the equivalence between kernel quadrature rules and random feature
  expansions.
\newblock {\em The Journal of Machine Learning Research}, 18(1):714--751, 2017.

\bibitem{Bel21}
A.~Belhadji.
\newblock An analysis of ermakov-zolotukhin quadrature using kernels.
\newblock {\em Advances in Neural Information Processing Systems},
  34:27278--27289, 2021.

\bibitem{BeBaCh19}
A.~Belhadji, R.~Bardenet, and P.~Chainais.
\newblock Kernel quadrature with {DPP}s.
\newblock In {\em Advances in Neural Information Processing Systems 32}, pages
  12907--12917. 2019.

\bibitem{BeBaCh20}
A.~Belhadji, R.~Bardenet, and P.~Chainais.
\newblock Kernel interpolation with continuous volume sampling.
\newblock {\em Proceedings of the 37th International Conference on Machine
  Learning}, pages 725--735, 2020.

\bibitem{CaCr}
M.R. Capobianco and G~Criscuolo.
\newblock Quadrature rules on unbounded interval.

\bibitem{ChRoWe17}
Krzysztof~M Choromanski, Mark Rowland, and Adrian Weller.
\newblock The unreasonable effectiveness of structured random orthogonal
  embeddings.
\newblock In I.~Guyon, U.~Von Luxburg, S.~Bengio, H.~Wallach, R.~Fergus,
  S.~Vishwanathan, and R.~Garnett, editors, {\em Advances in Neural Information
  Processing Systems}, volume~30. Curran Associates, Inc., 2017.

\bibitem{DaDeRe17}
T.~Dao, C.~De~Sa, and C.~R{\'e}.
\newblock Gaussian quadrature for kernel features.
\newblock {\em Advances in neural information processing systems}, 30, 2017.

\bibitem{EhGrOa19}
M.~Ehler, M.~Gr{\"a}f, and Ch.~J. Oates.
\newblock Optimal monte carlo integration on closed manifolds.
\newblock {\em Statistics and Computing}, 29(6):1203--1214, 2019.

\bibitem{GeMo99}
A.~Genz and J.~Monahan.
\newblock A stochastic algorithm for high-dimensional integrals over unbounded
  regions with gaussian weight.
\newblock {\em Journal of computational and applied mathematics},
  112(1-2):71--81, 1999.

\bibitem{GoLeTuZa17}
S.~Goldstein, J.~L. Lebowitz, R.~Tumulka, and N.~Zanghi.
\newblock Any orthonormal basis in high dimension is uniformly distributed over
  the sphere.
\newblock 2017.

\bibitem{HuSuHu2024}
Z.~Huang, J.~Sun, and Y.~Huang.
\newblock Quasi-monte carlo features for kernel approximation.
\newblock 2024.

\bibitem{LiHuChSuJo21}
F.~Liu, X.~Huang, Y.~Chen, and Johan~A.K. Suykens.
\newblock Random features for kernel approximation: A survey on algorithms,
  theory, and beyond.
\newblock {\em IEEE Transactions on Pattern Analysis and Machine Intelligence},
  44(10):7128--7148, 2021.

\bibitem{Mul06}
C.~M{\"u}ller.
\newblock {\em Spherical harmonics}, volume~17.
\newblock Springer, 2006.

\bibitem{MuKaBuOs18}
Marina Munkhoeva, Yermek Kapushev, Evgeny Burnaev, and Ivan Oseledets.
\newblock Quadrature-based features for kernel approximation.
\newblock In S.~Bengio, H.~Wallach, H.~Larochelle, K.~Grauman, N.~Cesa-Bianchi,
  and R.~Garnett, editors, {\em Advances in Neural Information Processing
  Systems}, volume~31. Curran Associates, Inc., 2018.

\bibitem{Rahimi2007RFF}
Ali Rahimi and Benjamin Recht.
\newblock Random features for large-scale kernel machines.
\newblock In J.~Platt, D.~Koller, Y.~Singer, and S.~Roweis, editors, {\em
  Advances in Neural Information Processing Systems}, volume~20. Curran
  Associates, Inc., 2007.

\bibitem{ShAv22}
P.~F. Shustin and H.~Avron.
\newblock Gauss-legendre features for gaussian process regression.
\newblock {\em Journal of Machine Learning Research}, 23(92):1--47, 2022.

\bibitem{SrSz15}
B.~Sriperumbudur and Z.~Szab{\'o}.
\newblock Optimal rates for random fourier features.
\newblock {\em Advances in neural information processing systems}, 28, 2015.

\bibitem{SuSc15}
Danica~J. Sutherland and Jeff~G. Schneider.
\newblock On the error of random fourier features.
\newblock {\em ArXiv}, abs/1506.02785, 2015.

\bibitem{Sze39}
G.~Szego.
\newblock {\em Orthogonal polynomials}, volume~23.
\newblock American Mathematical Soc., 1939.

\bibitem{YaSiAvMa14}
J.~Yang, V.~Sindhwani, H.~Avron, and M.~Mahoney.
\newblock Quasi-monte carlo feature maps for shift-invariant kernels.
\newblock In {\em International Conference on Machine Learning}, pages
  485--493. PMLR, 2014.

\bibitem{YuSuChHoKu16}
Felix Xinnan~X Yu, Ananda~Theertha Suresh, Krzysztof~M Choromanski, Daniel~N
  Holtmann-Rice, and Sanjiv Kumar.
\newblock Orthogonal random features.
\newblock In D.~Lee, M.~Sugiyama, U.~Luxburg, I.~Guyon, and R.~Garnett,
  editors, {\em Advances in Neural Information Processing Systems}, volume~29.
  Curran Associates, Inc., 2016.

\end{thebibliography}
\newpage
\appendix 

\section{Orthogonal families}
\subsection{Generalized Laguerre polynomials and the associated Gaussian quadrature}\label{sec:app_laguerre}

% \begin{lemma}[Generalized Laguerre Polynomials]\label{lem:generalized_laguerre_basis}
The family of generalized Laguerre polynomials orthogonal with respect to the weight function $p_{\alpha}(x) = \Gamma(\alpha+1)^{-1} x^\alpha e^{-x}\chi_{[0,+\infty[}(x)$ are given explicitly by the formula
\begin{equation}
    L^{\alpha}_n(x) = \sum_{i=0}^n\frac{1}{i!}\begin{pmatrix}
        n+\alpha\\n-i
    \end{pmatrix}(-x)^i.
\end{equation}
More precisely, we have
% and the orthogonality is characterized by
\begin{equation}
    \int_{\mathbb{R}_{+}}L^{\alpha}_n(x)L^{\alpha}_m(x)p_{\alpha}(x) \mathrm{d}x = \frac{\Gamma(n+\alpha+1)}{n!}\delta_{mn}.
\end{equation}
Moreover, by \Cref{lem:moment_laguerre} we have the following identity holds for general $\alpha'$
\begin{equation}
    \int_{\mathbb{R}_{+}}x^{\alpha'-1}L^\alpha_n(x)e^{-x}\chi_{[0,+\infty[}(x)\mathrm{d}x = \begin{pmatrix}
        \alpha-\alpha'+n\\n
    \end{pmatrix}\Gamma(\alpha').
\end{equation}
% \end{lemma}
% \begin{lemma}[Normalized Generalized Laguerre Polynomials]\label{lem:orthonormal_laguerre_basis}
% Define the new weight function $\tilde{p}(x) = \Gamma(k+1)^{-1}x^k e^{-x}\chi_{[0,+\infty[}(x)$, which integrates to 1. 
After proper scaling, we can derive an orthonormal basis of $L_{2}(p_{\alpha})$
\begin{equation}
    \ell^{\alpha}_n(x) := \sqrt{\frac{n!}{\Gamma(n+\alpha+1)}}L^{\alpha}_n(x),
\end{equation}
 % = \sqrt{\frac{n!\, k!}{(n+k)!}}\sum_{i=0}^n\frac{1}{i!}\begin{pmatrix}
 %        k+n\\n-i
 %    \end{pmatrix}(-x)^i
such that 
\begin{equation}
    \int_{\mathbb{R}_{+}}\ell^{\alpha}_n(x)\ell^{\alpha}_m(x)p_{\alpha}(x)\mathrm{d}x = \delta_{mn}.
\end{equation}
These polynomials satisfy the following recurrence
\begin{equation}
\ell_{n+1}^{\alpha}(X) = \frac{2n+1+\alpha -x}{\sqrt{(n+1)(n+\alpha+1)}} \ell_{n}^{\alpha}(X) - \sqrt{\frac{n(n+\alpha)}{(n+1)(n+\alpha+1)}} \ell_{n-1}^{\alpha}(X).
\end{equation}
In particular, the Gaussian quadrature associated to this family of orthogonal polynomials can be calculated by the eigendecomposition of the Jacobi matrix
\begin{equation*}
J = \begin{pmatrix}
1+\alpha & \sqrt{1+\alpha} & 0 & \cdots & 0 & 0 \\
\sqrt{1+\alpha} & 3+\alpha & \sqrt{2 \times (2+\alpha)} & \cdots & 0 & 0 \\
0 & \sqrt{2 \times (2+\alpha)}& 5+\alpha & \cdots & 0 & 0 \\
\vdots & \vdots & \vdots & \ddots & \vdots & \vdots \\
0 & 0 & 0 & \cdots & 2n - 1+ \alpha & \sqrt{(n+1)(n+1+\alpha)} \\
0 & 0 & 0 & \cdots & \sqrt{(n+1)(n+1+\alpha)} & 2n + 1+ \alpha
\end{pmatrix}
\end{equation*}
Moreover, they satisfy the identity
\begin{equation}
    \int_{\mathbb{R}_{+}}x^{\alpha'-1}\ell^{\alpha}_n(x)e^{-x}\chi_{[0,+\infty[}(x)\mathrm{d}x = \sqrt{\frac{n!}{\Gamma(n+\alpha+1)}}\begin{pmatrix}
        \alpha-\alpha'+n\\n
    \end{pmatrix}\Gamma(\alpha').
\end{equation}
% \end{lemma}

% \paragraph{The Gaussian quadrature associated to the generalized Laguerre Polynomials}

% In the following, we define the generalized Laguerre Polynomials which are orthogonal with respect to the weight $w(r) = r^{d/2-1}e^{-r/2} \chi_{[0,+\infty[}(r)$. 

% The roots of $L_{N}$ are given by
% \begin{equation}
%     w_{i} = \frac{\Gamma(N+\alpha+1)x_i}{N!(N+1)^2 L_{N+1}^{\alpha}(x_i)^2} = \frac{1}{(N+1)} \frac{\Gamma(N+\alpha +1)}{\Gamma(N+\alpha +2)}\frac{x_i}{ \ell_{N+1}^{\alpha}(x_i)^2} = \frac{1}{(N+1) (N+\alpha+1)} \frac{x_i}{ \ell_{N+1}^{\alpha}(x_i)^2}
% \end{equation}

\subsection{Spherical harmomics}\label{sec:spherical_harmonics}

% In the following, we consider the sphere $\mathbb{S}^{d-1}=\{x\in\mathbb{R}^d, \|x\|_2 = 1\}$ and denote the surface area of the sphere by $\mathcal{V}_{d-1}=\frac{2\pi^{d/2}}{\Gamma(d/2)}$. 
We define $Y_{0,1}: \mathbb{S}^{d-1} \rightarrow \mathbb{R}$ to be the constant function equal to $1$.
For any $k\in \mathbb{N}^{*}$, denote by $\{Y_{k,i}: \mathbb{S}^{d-1} \rightarrow \mathbb{R}, i=1,\dots, N(d,k) \}$ the family of spherical harmonics of exact degree $k$, where
\begin{equation}\label{eq:dim_spherical_harmonics_k}
    N(d,k) := \frac{(2k +d -2) }{k} \binom{k+d-3}{d-2},
\end{equation}
and we adopt the notation $N(d,0) := 1$.

We refer to \cite{Mul06} for an explicit construction of this family of functions. 

The family $(Y_{k,i})$ forms an orthonormal basis in $L_{2}(\pi_{\mathbb{S}^{d-1}})$. In particular, we have for $k,k' \in \mathbb{N}$, and for $(i,i') \in [N(d,k)] \times [N(d,k')]$, we have
\begin{equation}\label{eq:orthogonality_spherical}
  \langle Y_{k,i}, Y_{k',i'} \rangle_{\pi_{\mathbb{S}^{d-1}}} = \delta_{k,k'} \delta_{i,i'}.
\end{equation}
Moreover, they satisfy the addition formula 
% \ab{this formula is correct because we use the normalized measure $\mathrm{d}\pi_{S}$}\jz{reference: Theorem 2, \cite{Mul06}, the only difference is they use the uniform integral on the sphere which integrated to the area instead of 1.}
\begin{equation}\label{eq:spherical_addition_formula}
\forall k \in \mathbb{N}, \:\: \forall x,y \in \mathbb{S}^{d-1}, \:\:    \sum\limits_{i =1}^{N(d,k)} Y_{k,i}(x)Y_{k,i}(y) = N(d,k) P_{k}(\langle x,y \rangle),
\end{equation}
where $P_{k}$ is $k$-th Gegenbauer polynomial of parameter $\alpha = d/2-1$ defined by
\begin{equation}\label{def:P_k}
    P_k(t) = \Big(-\frac{1}{2}\Big)^k\frac{\Gamma\big(\frac{d-1}{2}\big)}{\Gamma\big(k+\frac{d-1}{2}\big)}(1-t^2)^{\frac{3-d}{2}}\left(\frac{\dd}{\dd t}\right)^k(1-t^2)^{k+\frac{d-3}{2}},
    \end{equation}
see\footnote{In \cite{AtHa12}, the authors opted for the \emph{un-normalized} uniform measure on $\mathbb{S}^{d-1}$, hence the difference between 2.24 in \cite{AtHa12} and \eqref{eq:spherical_addition_formula}} Theorem 2.9 in \cite{AtHa12}. For instance, we have
\begin{align}\label{eq:polys_examples}
    P_{0}(t) & = 1, \nonumber\\
    P_{1}(t) & = t, \nonumber\\
    P_{2}(t) & = \frac{1}{d-1}\big(d t^2 - 1\big) .
\end{align}
Moreover, by Section 2.7.3. in \cite{AtHa12}, the polynomials satisfy the recursion relation
\begin{equation}\label{eq:polys_rec_relation}
    P_{k+1}(t) = \frac{2k+d-2}{k+d-2} t P_{k}(t) - \frac{k}{k+d-2} P_{k-1}(t).
\end{equation}
In particular, using \eqref{eq:polys_rec_relation} along with \eqref{eq:polys_examples} we prove that for $k \in \mathbb{N}$, $P_{2k}$ is an even function while $P_{2k+1}$ is an odd function. In addition, by inequality (2.39) in \cite{AtHa12}, we have
\begin{equation}\label{eq:bound_polys_by_1}
    \forall k \in \mathbb{N}, \forall t \in [-1,1], \:\: |P_{k}(t)| \leq 1,
\end{equation}
and using the addition formula \eqref{eq:spherical_addition_formula}, and the orthogonality of the spherical harmonics \eqref{eq:orthogonality_spherical}, we prove that
\begin{equation}\label{eq:int_Sd_P_k_2}
 \forall v \in \mathbb{S}^{d-1}, \:\:   \int_{\mathbb{S}^{d-1}} P_{k}\big(\langle v, \theta \rangle \big)^2 \mathrm{d} \pi_{\mathbb{S}^{d-1}}(\theta) =  \frac{1}{N(d,k)}.
\end{equation}
Finally, for a function $\varphi: \mathbb{R} \rightarrow \mathbb{R}$, we have the Hecke-Funk formula 
\begin{equation}\label{eq:hecke_funk}
\forall v \in \mathbb{S}^{d-1}, \:\:    \int_{\mathbb{S}^{d-1}} \varphi(\langle v,n \rangle) Y_{k,i}(n) \dd \pi_{\mathbb{S}^{d-1}}(n) =  \frac{\mathcal{V}_{d-2}}{\mathcal{V}_{d-1}}  Y_{k,i}(v) \int_{-1}^{1} \varphi(t) P_{k}(t) (1-t^2)^{(d-3)/2} \mathrm{d}t,
\end{equation}
see Theorem 2.22 in \cite{AtHa12}.

% Moreover, \ab{by ??}\jz{reference: Lemma 10, \cite{Mul06}} we have 

% that satisfy  for all $\ell, \ell' \in \mathbb{N}^{*}$ and $i,i' \in \{1, \dots, N(d,\ell)\}$, with
% . 

% Moreover, define the by 
% \begin{defn}\label{def:Gegenbauer polynomial}
%     Define the  by

% \end{defn}
% From the definition we notice that $P_{k}$ is a Legendre polynomial of degree k and dimension $d+1$. $P_{k}$ and $Y_{k,\bullet}$ are connected by the following property
% \begin{equation}\label{eq:P_k_and_spherical_harmonics}
%     \sum_{j=1}^{N(d,k)}Y_{kj}(x)Y_{kj}(y) = N(d,k)P_k(x^\top y).
% \end{equation}

% Now, let $Y_k$ be a linear combination of the $Y_{k,i}$. We make use of the following result

\section{Proofs}

\subsection{Proof of \Cref{prop:f_bar_representation}}

% \begin{proposition}
% For $x,y \in \mathbb{R}^{d}$, we have 
% \begin{equation}
%     \bar{f}_{x-y}(\xi) =\sum_{n=0}^{\infty}\beta_n(d, x-y)\xi^n,
% \end{equation}
% where
% \begin{equation}
%     \beta_n(d,x-y) := \frac{\mathcal{V}_{d-2}}{\mathcal{V}_{d-1}} \frac{\sqrt{\pi}(-\|x-y\|^2/(2\sigma^{2}))^n\Gamma\left(\frac{d-1}{2}\right)}{\Gamma(n+1)\Gamma\left(\frac{d}{2}+n\right)}.
% \end{equation}
% \end{proposition}

% \begin{proof}
When $x-y =0$, we have for $\xi \in \mathbb{R}_{+}$ and $n \in \mathbb{S}^{d-1}$
\begin{equation}
f_{x-y}\Big(\frac{\sqrt{2\xi}}{\sigma}\;n \Big) = \cos \Big( \big\langle \frac{\sqrt{2\xi}}{\sigma}\;n, x-y \big\rangle \Big) = 1,
\end{equation}
so that $ \bar{f}_{x-y}(\xi) = \beta_{0}(d,x-y)$, by observing that 
\begin{equation}
     \beta_0(d,x-y) = \frac{\mathcal{V}_{d-2}}{\mathcal{V}_{d-1}} \frac{\sqrt{\pi} \Gamma\left(\frac{d-1}{2}\right)}{\Gamma\left(\frac{d}{2}\right)} = \frac{2 \pi^{(d-1)/2}}{\Gamma((d-1)/2)} \frac{\Gamma(d/2)}{2 \pi^{d/2}}  \frac{\sqrt{\pi} \Gamma\left(\frac{d-1}{2}\right)}{\Gamma\left(\frac{d}{2}\right)} =1.
\end{equation}
Now, assume that $x-y \neq 0$, and define
\begin{equation}
v := \frac{1}{\|x-y\|}(x-y) \in \mathbb{S}^{d-1}.
\end{equation}
We have, for $\xi \in \mathbb{R}_{+}$ and $n \in \mathbb{S}^{d-1}$, 
\begin{equation}
f_{x-y}\Big(\frac{\sqrt{2\xi}}{\sigma}n \Big) = \cos \Big( \big\langle \frac{\sqrt{2\xi}}{\sigma}n, x-y \big\rangle \Big) = \cos \Big( \frac{\sqrt{2\xi}}{\sigma} \|x-y\| \langle n,v \rangle \Big) = \varphi_{\frac{\sqrt{2\xi}}{\sigma} \|x-y\|}(\langle n,v \rangle ),
\end{equation}
where, for a given $\gamma >0$, the function $\varphi_{\gamma} : \mathbb{R} \rightarrow \mathbb{R}$ is defined as follows
\begin{equation}
\varphi_{\gamma}(t) := \cos(\gamma t).
\end{equation}
Thus, by Hecke-Funk formula \eqref{eq:hecke_funk}, we have
\begin{align}\label{eq:int_cos_poly}
\bar{f}_{x-y}(\xi) &= \int_{\mathbb{S}^{d-1}} \varphi_{\frac{\sqrt{2\xi}}{\sigma}\|x-y\|}(\langle n, v \rangle) \mathrm{d} \sigma(n) \nonumber \\
& = \frac{\mathcal{V}_{d-2}}{\mathcal{V}_{d-1}} \int_{-1}^{1} \varphi_{\frac{\sqrt{2\xi}}{\sigma}\|x-y\|}(t) (1-t^2)^{(d-3)/2} \mathrm{d}t \nonumber \\
& = \frac{\mathcal{V}_{d-2}}{\mathcal{V}_{d-1}} \int_{-1}^{1} \cos\Big(\frac{\sqrt{2\xi}}{\sigma}\|x-y\| t\Big) (1-t^2)^{(d-3)/2} \mathrm{d}t.
\end{align}
Now, observe that
\begin{align}\label{eq:cos_taylor_alpha}
    \cos\Big(\frac{\sqrt{2\xi}}{\sigma}\|x-y\| t \Big)&=\sum_{n=0}^{\infty}\frac{(-1)^n (\sqrt{2\xi}\|x-y\| t)^{2n}}{\sigma^{2n} (2n)!}=\sum_{n=0}^{+\infty}\alpha_n(\xi)\; t^{2n}, \\
    &\alpha_n(\xi) := \frac{(-1)^n (\sqrt{2\xi}\|x-y\|)^{2n}}{(2n)! \sigma^{2n}}. \nonumber
\end{align}
To switch the summation and integral in \Cref{eq:int_cos_poly}, we show that
\begin{align}
    \int_{-1}^1\sum_{n=0}^{+\infty}|\alpha_n(\xi)\;| |t|^{2n}(1-t^2)^{(d-3)/2} \mathrm{d}t & \leq \int_{-1}^1\sum_{n=0}^{+\infty}\frac{ (\sqrt{2\xi}\|x-y\|)^{2n}}{(2n)! \sigma^{2n}} (1-t^2)^{(d-3)/2} \mathrm{d}t\\
    &\leq \int_{-1}^1\text{cosh}\Big(\frac{\sqrt{2\xi}}{\sigma}\|x-y\| t \Big) (1-t^2)^{(d-3)/2} \mathrm{d}t\\
    &\leq \text{cosh}(1)\int_{-1}^1(1-t^2)^{(d-3)/2} \mathrm{d}t\\
    &=\text{cosh}(1)\frac{\Gamma(1/2)\Gamma(d/2-1/2)}{\Gamma(d/2)}<\infty,
\end{align}
where the last equality is proved using the identity
\begin{equation}\label{eq:beta_function_property}
\forall \alpha>-\frac{1}{2}, \forall \beta>-1, \:\:    \int_{-1}^{1}t^{2\alpha}(1-t^2)^{\beta} \dd t = B(\alpha+1/2,\beta+1) = \frac{\Gamma(\alpha +1/2) \Gamma(\beta +1)}{\Gamma(\alpha +\beta +3/2)}, 
\end{equation}
where $B$ is the Beta function.

By Fubini-Tonelli Theorem, the integration in \eqref{eq:int_cos_poly} can be decomposed into sequence of integration, with each term computed as follows

\begin{align}\label{eq:int_alpha_n}
    \int_{-1}^{1} \alpha_n(\xi) t^{2n} (1-t^2)^{(d-3)/2} \mathrm{d}t
    &=\frac{(-1)^n (\sqrt{2\xi}\|x-y\|)^{2n}}{(2n)!\sigma^{2n}}\int_{-1}^{1} t^{2n}(1-t^2)^{(d-3)/2} \mathrm{d}t \nonumber\\
    &=\frac{(-1)^n (\sqrt{2\xi}\|x-y\|)^{2n}}{(2n)! \sigma^{2n}}\frac{\Gamma(n +1/2) \Gamma((d-1)/2)}{\Gamma(n +d/2)},
% \\
%     &=\frac{\sqrt{\pi}(-\|x-y\|^2/(2\sigma^2))^n\Gamma\left(\frac{d-1}{2}\right)}{\Gamma(n+1)\Gamma\left(\frac{d}{2}+n\right)}\xi^n,
\end{align}
since
\begin{equation}
    \int_{-1}^{1}t^{2n}(1-t^2)^{(d-3)/2} \dd t  = \frac{\Gamma(n +1/2) \Gamma((d-1)/2)}{\Gamma(n +d/2)},
\end{equation}
which again follows from the identity \eqref{eq:beta_function_property}.

Now, observe that
\begin{equation}
    \frac{\Gamma(n +1/2) }{(2n)!} = \frac{\overbrace{(2n-1) \cdot (2n-3) \dots \cdot 1}^{\text{n factors}} \cdot \Gamma(1/2)}{2^n (2n)!} = \frac{\Gamma(1/2) }{2^n \cdot 2^n \cdot n!} = \frac{ \sqrt{\pi}}{2^{2n} n!}.
\end{equation}
Thus, \eqref{eq:int_alpha_n} yields
\begin{align}\label{eq:int_alpha_n_2}
   \int_{-1}^{1} \alpha_n(\xi) t^{2n} (1-t^2)^{(d-3)/2} \mathrm{d}t &= \frac{(-1)^n \|x-y\|^{2n}}{ \sigma^{2n}} \frac{2^{n}  \Gamma(n +1/2)}{(2n)!} \frac{ \Gamma((d-1)/2)}{\Gamma(n +d/2)} \xi^n \nonumber\\ &=\frac{\sqrt{\pi}(-\|x-y\|^2/(2\sigma^2))^n\Gamma\left(\frac{d-1}{2}\right)}{\Gamma(n+1)\Gamma\left(\frac{d}{2}+n\right)}\xi^n.
\end{align}
Combining \eqref{eq:int_cos_poly}, \eqref{eq:cos_taylor_alpha}, and \eqref{eq:int_alpha_n_2}, we get
% \frac{\mathcal{V}_{d-2}}{\mathcal{V}_{d-1}} \int_{-1}^{1}
\begin{equation}
    \bar{f}_{x-y}(\xi)  =  \sum_{n=0}^{+\infty}\beta_n(d,x-y) \xi^n ,
\end{equation}
where
\begin{equation}\label{eq:f_bar_expansion}
    \beta_n(d,x-y) := \frac{\mathcal{V}_{d-2}}{\mathcal{V}_{d-1}} \frac{\sqrt{\pi}(-\|x-y\|^2/(2\sigma^{2}))^n\Gamma\left(\frac{d-1}{2}\right)}{\Gamma(n+1)\Gamma\left(d/2+n\right)}.
\end{equation}
Now, by using the fact that $\frac{\mathcal{V}_{d-2}}{\mathcal{V}_{d-1}} = \frac{\Gamma\left(d/2\right)}{\sqrt{\pi}\Gamma \left(d/2-1/2\right)}$, we get
\begin{equation}
    \beta_n(d,x-y) =  \frac{(-\|x-y\|^2/(2\sigma^{2}))^n\Gamma\left(\frac{d}{2}\right)}{\Gamma(n+1)\Gamma\left(d/2+n\right)}.
\end{equation}
Moreover, the infinite series in \eqref{eq:f_bar_expansion} is absolutely convergence because the ratio
\begin{align}
    \Big|\frac{\beta_{n+1}(d,x-y)\xi^{n+1}}{\beta_n(d,x-y)\xi^n}\Big| &= \frac{\sqrt{\pi}\|x-y\|^2\xi/(2\sigma^{2})}{(n+1)(d/2+n)}\leq\frac{\sqrt{\pi}\|x-y\|^2\xi/(2\sigma^{2})}{n^2}
\end{align}
converges to 0 at the rate $n^{-2}$. Therefore, the series converge.

% \begin{align}
%      &=\\
%     % &=\sum_{n=0}^{\infty}\frac{\mathcal{V}_{d-2}}{\mathcal{V}_{d-1}} \int_{-1}^{1}\alpha_n(\xi) t^{2n} (1-t^2)^{(d-3)/2} \mathrm{d}t\\
% =\sum_{n=0}^{\infty}\beta_n(d, x-y)\xi^n\\
%     \text{where}\qquad &.
% \end{align}

% \end{proof}

% \subsection{An analysis of the error of approximation of spherical-radial Fourier features}
% \begin{align*}
% \kappa(x,y) - \kappa^{SR}(x,y) & = \int_{\mathbb{R}^d} f_{x-y}(\omega)\mathrm{d}\Lambda(\omega) - \sum\limits_{i_R=1}^{M_{R}}\sum\limits_{i_S=1}^{M_{S}} w_{i_r} w_{i_s} f_{x-y}(r_{i_r} n_{i_s}) \\ 
% & = \int_{\mathbb{R}^d} f_{x-y}(\omega)\mathrm{d}\Lambda(\omega) - \sum\limits_{i_R=1}^{M_{R}} w_{i_r} \int_{\mathbb{S}^{d}}f_{x-y}(r_{i_r} n) \mathrm{d}\sigma(n)\\
% &+ \sum\limits_{i_R=1}^{M_{R}} w_{i_r} \int_{\mathbb{S}^{d}}f_{x-y}(r_{i_r} n) \mathrm{d}\sigma(n) - \sum\limits_{i_R=1}^{M_{R}}\sum\limits_{i_S=1}^{M_{S}} w_{i_r} w_{i_s} f_{x-y}(r_{i_r} n_{i_s}) \\
% \end{align*}

% Now, we move to the separate analysis of the two terms.

\subsection{Proof of \Cref{prop:radial_quadrature_error}}\label{proof:radial_quadrature_error}
In the following we look for an expression of the coefficient $\gamma_m (d, x-y)$ defined as 
\begin{equation}\label{eq:gamma_m}
   \gamma_m (d, x-y):=  \langle \bar{f}_{x-y}, \ell_{m}^{\alpha} \rangle_{p_{\Xi}}= \int_{\mathbb{R}_{+}} \bar{f}_{x-y}(\xi) \ell_{m}^{\alpha}(\xi) p_{\Xi}(\xi) \mathrm{d} \xi, \qquad \alpha = d/2-1.
\end{equation}
By definition \Cref{prop:f_bar_representation}, we have
\begin{equation}
 \forall \xi \in \mathbb{R}_{+}, \:\:   \bar{f}_{x-y}(\xi) = \sum\limits_{n=0}^{+\infty} \beta_{n}(d,x-y) \xi^n.
\end{equation}
In order to switch the order of integral and summation in \Cref{eq:gamma_m}, we claim that
\begin{align}
    \sum\limits_{n=0}^{+\infty}\int_{\mathbb{R}_{+}}\Big|\beta_{n}(d,x-y) \xi^n\ell_{m}^{\alpha}(\xi) p_{\Xi}(\xi)\Big| \mathrm{d} \xi =: \sum\limits_{n=0}^{+\infty}\eta^n &< \infty.
\end{align}
To prove the claim, we use Cauchy Schwarz inequality
\begin{align}
    \int_{\mathbb{R}_{+}}\Big|\xi^n\ell_{m}^{\alpha}(\xi) p_{\Xi}(\xi)\Big| \mathrm{d} \xi&\leq\left(\int_{\mathbb{R}_{+}}\xi^{2n}p_{\Xi}(\xi)\mathrm{d} \xi\right)^{1/2}\left(\int_{\mathbb{R}_{+}}[\ell_{m}^{\alpha}(\xi)]^2 p_{\Xi}(\xi)\mathrm{d} \xi\right)^{1/2}\\
    &=\left[(\alpha+2n)!\right]^{1/2}
\end{align}
Plug in $\beta_n$, we obtain a sequence $\zeta^n$ that bounds $\eta^n$ from above
\begin{align}
    \eta_n &\leq |\beta_{n}(d,x-y)|\int_{\mathbb{R}_{+}}\Big|\xi^n\ell_{m}^{\alpha}(\xi) p_{\Xi}(\xi)\Big| \mathrm{d}\xi\\
    &=\frac{\mathcal{V}_{d-2}}{\mathcal{V}_{d-1}} \frac{\sqrt{\pi}(\|x-y\|^2/(2\sigma^{2}))^n\Gamma\left(\frac{d-1}{2}\right)}{\Gamma(n+1)\Gamma\left(d/2+n\right)}\left[(\alpha+2n)!\right]^{1/2}=:\zeta_n.
\end{align}
We can always find $N$ large enough such that the ratio between consecutive $\zeta_n$ satisfies
\begin{align}
    \Big|\zeta_{n+1}/\zeta_n\Big| & =\frac{(\|x-y\|^2/(2\sigma^{2}))\Gamma(n+1)\Gamma\left(d/2+n\right)}{\Gamma(n+2)\Gamma\left(d/2+n+1\right)}\frac{\left[(\alpha+2n+2)!\right]^{1/2}}{\left[(\alpha+2n)!\right]^{1/2}}\\
    &=\frac{\|x-y\|^2/(2\sigma^{2})}{(n+1)(d/2+n)}(\alpha+2n+2)^{1/2}(\alpha+2n+1)^{1/2}\\
    &\leq\frac{\|x-y\|^2/(2\sigma^{2})}{n^2}(\alpha+2n+2)\leq 1/2
\end{align}
for any $n\geq N$. The ratio test shows that the sequence $\{\zeta_n\}_n$ is summable. $\{\eta_n\}_n$ is positive and dominated by $\{\zeta_n\}_n$ hence is also summable, which proves the claim.

With the claim, we employ Fubini-Tonelli Theorem to move the summation out
\begin{equation}
  \gamma_m (d, x-y) = \sum\limits_{n=0}^{+\infty} \beta_{n}(d,x-y) \int_{\mathbb{R}_{+}} \xi^n \ell^{\alpha}_{m}(\xi) p_{\Xi}(\xi) \mathrm{d}\xi = \sum\limits_{n=m}^{+\infty} \beta_{n}(d,x-y) \int_{\mathbb{R}_{+}} \xi^n \ell^{\alpha}_{m}(\xi) p_{\Xi}(\xi) \mathrm{d}\xi, 
\end{equation}
where the last equality follows from the fact that $\ell_{m}^{\alpha}$ is orthogonal to all polynomials with degree less than $m$ with respect to the inner product $\langle .,. \rangle_{p_{\Xi}}$.

Now, by \Cref{lem:moment_laguerre} the coefficient $\int_{\mathbb{R}_{+}} \xi^n \ell^{\alpha}_{m}(\xi) p_{\Xi}(\xi) \mathrm{d}\xi$ writes 
% \ab{ref??}
% On the other hand, denote $\alpha = \frac{d-2}{2}$, one has the bound for the projection term 
\begin{align}\label{eq:r_n_l_m_w_formula}
    \int_{\mathbb{R}_{+}} \xi^n \ell^{\alpha}_{m}(\xi) p_{\Xi}(\xi) \mathrm{d}\xi &=\Gamma(\alpha+1)^{-1}\int_{\mathbb{R}_{+}} \xi^n \ell_m^{\alpha}(\xi) \xi^{\alpha}e^{-\xi}\mathrm{d} \xi \nonumber\\
    &= \binom{m-n -1  }{m}\sqrt{\frac{\Gamma(m+1)}{\Gamma(m+\alpha +1)}} \frac{\Gamma(\alpha+1+n) }{\Gamma(\alpha+1)}
    % \\
    % &= \binom{m-n -1  }{m}\sqrt{\frac{\Gamma(m+1)}{\Gamma(m+d/2)\Gamma(d/2)}}\Gamma(d/2+n), 
    % \\
    % \ab{new formula}&= \binom{m-n -1  }{m}\sqrt{\frac{\Gamma(m+1)}{\Gamma(m+\alpha+1)\Gamma(\alpha+1)}}\Gamma(\alpha+n+1)\\
    % &\leq \left(\int_{\mathbb{R}_{+}} \xi^{\frac{d-2}{2}}e^{-\xi}\xi^{2n} \mathrm{d} \xi\right)^{1/2}\left(\int_{\mathbb{R}_{+}} \xi^{\frac{d-2}{2}}e^{-\xi} l_m^2(\xi)\mathrm{d} \xi\right)^{1/2}\\
    % & = \left[\Gamma(2n+d/2)\right]^{1/2}
\end{align}
% \jz{here is how i derive the formula: the second and third equality come from \ref{lem:orthonormal_laguerre_basis} and \ref{lem:generalized_laguerre_basis}}
% \begin{align}
%     \langle\xi^n, l_m(\xi)\big\rangle &=\int_{\mathbb{R}_{+}} \Gamma(d/2)^{-1}\xi^{\frac{d-2}{2}}e^{-\xi}\xi^n l_m(\xi)\mathrm{d} \xi\\
%     & = \Gamma(d/2)^{-1}\int_{\mathbb{R}_{+}}\xi^{\frac{d-2}{2}}e^{-\xi}\xi^n \sqrt{\frac{m!\, (\frac{d-2}{2})!}{(m+\frac{d-2}{2})!}}L^{\frac{d-2}{2}}_m(x)\mathrm{d} \xi\\
%      &= \sqrt{\frac{\Gamma(m+1)}{\Gamma(m+d/2)\Gamma(d/2)}}\int_{\mathbb{R}_{+}}\xi^{\frac{d-2}{2}}e^{-\xi}\xi^n L^{\frac{d-2}{2}}_m(x)\mathrm{d} \xi\\
%      &=\sqrt{\frac{\Gamma(m+1)}{\Gamma(m+d/2)\Gamma(d/2)}}\binom{m-n -1  }{m}\Gamma(d/2+n)
% \end{align}
% d proof}
where
\begin{equation}\label{eq:combin_number_negative}
    \binom{m-n -1  }{m} := \frac{(m-n-1)(m-n-2)\dots(-n)}{m!}  
\end{equation}
is equal to $(-1)^m\binom{n}{m}$ when $n \geq m$.

% the fact that $\frac{\mathcal{V}_{d-2}}{\mathcal{V}_{d-1}} = \frac{\Gamma\left(d/2\right)}{\sqrt{\pi}\left(d/2-1/2\right)}$ and

By definition of $\beta_{n}(d,x-y)$ in \eqref{eq:def_beta}, we have 
\begin{equation}\label{eq:beta_n_x_y}
\beta_n(d, x-y) = \frac{(-1)^n\Gamma\left(d/2\right)}{\Gamma\left(d/2+n\right)\Gamma(n+1)}c^{2n}, \:\: c:= \frac{\|x-y\|}{\sqrt{2}\sigma}.
\end{equation}

Combining \eqref{eq:r_n_l_m_w_formula}, \eqref{eq:combin_number_negative}, and \eqref{eq:beta_n_x_y}, we get
\begin{align}
    &\beta_n(d, x-y) \int_{\mathbb{R}_{+}} \xi^n \ell^{\alpha}_{m}(\xi) p_{\Xi}(\xi) \mathrm{d}\xi \\
    &= \frac{(-1)^nc^{2n}\Gamma\left(d/2\right)}{\Gamma\left(d/2+n\right)\Gamma(n+1)} (-1)^{m} \binom{n}{m}\sqrt{\frac{\Gamma(m+1)}{\Gamma(m+d/2) } } \frac{\Gamma(d/2+n)}{\Gamma(d/2)} \nonumber \\
    % &= \frac{(-1)^nc^{2n}\Gamma\left(\frac{d}{2}\right)}{\Gamma\left(d/2+n\right)\Gamma(n+1)} (-1)^{m} \binom{n}{m}\sqrt{\frac{\Gamma(m+1)}{\Gamma(m+d/2) \Gamma(d/2)} }\Gamma(d/2+n) \nonumber \\
    &= \frac{(-1)^{n+m}c^{2n}}{(n-m)!} \sqrt{\frac{1}{m!\,\Gamma(m+d/2)}}.
\end{align}
% Now, when $n\geq m$, the combinatorical number is equal to $(-1)^m \binom{n}{m}$, and we get
% % When $n\leq m-1$,  equals to 0 and the whole term vanishes; 
% \begin{align}
%     \beta_n(d, x-y)
%     \int_{\mathbb{R}_{+}} r^n \ell^{\alpha}_{m}(r) w(r) \mathrm{d}r &= \frac{(-1)^nc^{2n}}{\Gamma\left(d/2+n\right)\Gamma(n+1)}(-1)^m\binom{n}{m}\sqrt{\frac{\Gamma(m+1)}{\Gamma(m+d/2) } \Gamma(d/2)}\Gamma(d/2+n)\\
%     &= \frac{(-1)^{n+m}c^{2n}}{(n-m)!} \sqrt{\frac{\Gamma(d/2)}{m!\,\Gamma(m+d/2)}}
% \end{align}
Thus, by summing over $n$, we get
\begin{align}
    \big|\gamma_m\left(d,x-y\right)\big| &=\bigg|\sum_{n=m}^{\infty}\beta_n(d, x-y) \int_{\mathbb{R}_{+}} \xi^n \ell^{\alpha}_{m}(\xi) p_{\Xi}(\xi) \mathrm{d}\xi \bigg|\\
    &= \bigg|\sum_{n=m}^{\infty}\frac{(-1)^{n+m}c^{2n}}{(n-m)!} \sqrt{\frac{1}{m!\,\Gamma(m+d/2)}}\bigg|\\
    &=\sqrt{\frac{1}{m!\,\Gamma(m+d/2)}}\bigg|\sum_{n=m}^{\infty}\frac{(-1)^n c^{2n}}{(n-m)!}\bigg|\\
    &=\sqrt{\frac{1}{m!\,\Gamma(m+d/2)}} c^{2m}\bigg|\sum_{n=0}^{\infty}\frac{(-c^2)^n}{n!}\bigg|\\
    &=\sqrt{\frac{1}{m!\,\Gamma(m+d/2)}}c^{2m}e^{-c^2}.
\end{align}
Thus, for $M \in \mathbb{N}$, we have
\begin{align}
    \sum\limits_{m =2M}^{+\infty} \sqrt{m}|\gamma_m\left(d,x-y\right)| &\leq  \sum\limits_{m =2M}^{+\infty} m|\gamma_m\left(d,x-y\right)|\\ & = \sum\limits_{m=2M}^{+\infty}m \sqrt{\frac{1}{m!\,\Gamma(m+d/2)}}c^{2m}e^{-c^2} \nonumber \\
    & \leq \frac{c^2}{\sqrt{\Gamma(d/2)}}\sum\limits_{m=2M}^{+\infty} \frac{1}{(m-1)!} c^{2(m-1)} e^{-c^2}, 
\end{align}
where we have use the fact that 
\begin{equation}
    \Gamma(m+d/2) = \underbrace{(d/2 +m -1) \cdots (d/2+1) \cdot (d/2)}_{\text{m factor}} \Gamma(d/2) \geq m!  \Gamma(d/2).
\end{equation}
% \jz{missing one more term $(d/2)$ in the expansion.}
Now, by \Cref{lemma:exp_truncated}, we have 

\begin{equation}
    \sum\limits_{m=2M-1}^{+\infty} \frac{1}{m!} c^{2m} e^{-c^2} \leq \frac{1}{(2M-1)!} \big(c^2\big)^{2M-1}e^{c^{2}} e^{-c^{2}} \leq \Big( \frac{c^2}{2M-1}\Big)^{2M-1}.
\end{equation}

Now to prove~\eqref{eq:laguerre_quad_err_bound}, we use \Cref{lem:quad_err_for_Laguerre} which states that there exists a constant $L$ that is independent of $M_R$ such that 
\begin{equation}
\forall \varphi \in L_{2}(p_\Xi) ,\:\: \Big|\int_{\mathbb{R}_{+}} \varphi(\xi) p_{\Xi}(\xi) \mathrm{d}\xi - \sum\limits_{i=1}^{M_R} a_{i} \varphi(\xi_{i}) \Big| \leq  L \sum\limits_{m=2M_{R}}^{+\infty} \sqrt{m} |\langle \varphi, \ell^{\alpha}_{m} \rangle_{p_{\Xi}}|.
\end{equation}

\subsection{Proof of \Cref{prop:harmonic_decomposition_f}}\label{sec:proof_prop:harmonic_decomposition_f}
% \ab{finish this}

The function $f_{r(x-y)}$ is continuous on $\mathbb{S}^{d-1}$. Thus, by Theorem 2.30 in  \cite{AtHa12}, $f_{r(x-y)}$ decomposes in the spherical harmonics basis as follows 
\begin{align}\label{eq:harmonic_decompose_f_2}
    f_{r(x-y)}(n) &= \sum_{k=0}^{+\infty}\sum_{i=1}^{N(d,k)} \langle f_{r(x-y)}, Y_{k,i} \rangle_{\pi_{\mathbb{S}^{d-1}}}Y_{k,i}(n),
    % &=\sum_{k=0}^{\infty} \sum_{i=1}^{N(d,k)}\int_{\mathbb{S}^{d-1}} \varphi_{\alpha}(\langle v,n' \rangle) Y_{k,i}(n') \dd \pi_{\mathbb{S}^{d-1}}(n')Y_{k,i}(n)
    %  \int_{\mathbb{S}^{d-1}} f_{r(x-y)}(n') Y_{k,i}(n') \dd \pi_{\mathbb{S}^{d-1}}(n')
\end{align}
where the convergence holds uniformly on $\mathbb{S}^{d-1}$. In the following, we calculate the coefficients $\langle f_{r(x-y)}, Y_{k,i} \rangle_{\pi_{\mathbb{S}^{d-1}}}$.

When $r(x-y) = 0$, we have 
\begin{equation}
\forall n \in \mathbb{S}^{d-1}, \:\: f_{r(x-y)}(n) =\cos(\langle r(x-y),n \rangle) = 1 = Y_{0,1}(n),
\end{equation}
so that 
\begin{equation}
    \forall k \in \mathbb{N}, \:\: \forall i \in [N(d,k)], \:\: \langle f_{r(x-y)}, Y_{k,i} \rangle_{ \pi_{\mathbb{S}^{d-1}}} = \delta_{k,0}.
\end{equation}

Now, assume that $r(x-y) \neq 0$. Define $v:= (x-y)/\|x-y\|$, and $\beta:= r(x-y)$. Let $k \in \mathbb{N}$ and $i \in [N(d,k)]$. We have
\begin{equation*}
\langle f_{r(x-y)}, Y_{k,i} \rangle_{\pi_{\mathbb{S}^{d-1}}} = \int_{\mathbb{S}^{d-1}}  f_{r(x-y)}(n) Y_{k,i}(n) \dd \pi_{\mathbb{S}^{d-1}}(n) =  \int_{\mathbb{S}^{d-1}}  \varphi_{\beta}(\langle v, n \rangle) Y_{k,i}(n) \dd \pi_{\mathbb{S}^{d-1}}(n),
\end{equation*}
where the function $\varphi_{\beta}: \mathbb{R} \rightarrow \mathbb{R}$ is defined by $\varphi_{\beta}(t) = \cos(\beta t)$. Using the Hecke-Funk formula \eqref{eq:hecke_funk}, we get 
\begin{align}
\langle f_{r(x-y)}, Y_{k,i} \rangle_{\pi_{\mathbb{S}^{d-1}}} & =     \int_{\mathbb{S}^{d-1}} \varphi_{\beta}(\langle v,n \rangle) Y_{k,i}(n) \dd \pi_{\mathbb{S}^{d-1}}(n)  \\
    &= \frac{\mathcal{V}_{d-2}}{\mathcal{V}_{d-1}}  Y_{k,i}(v) \int_{-1}^{1} \varphi_{\beta}(t) P_{k}(t) (1-t^2)^{(d-3)/2} \mathrm{d}t.
\end{align}
Therefore, using the addition formula ~\eqref{eq:spherical_addition_formula}, we have for $n \in \mathbb{S}^{d-1}$
\begin{align}
    \sum_{i=1}^{N(d,k)} \langle f_{r(x-y)}, Y_{k,i} \rangle_{\pi_{\mathbb{S}^{d-1}}}Y_{k,i}(n)  & = \frac{\mathcal{V}_{d-2}}{\mathcal{V}_{d-1}}\int_{-1}^{1} \varphi_{\beta}(t) P_{k}(t) (1-t^2)^{(d-3)/2} \mathrm{d}t \sum_{i=1}^{N(d,k)} Y_{k,i}(v) Y_{k,i}(n) \\ 
    & =   N(d,k) P_{k}(\langle  v,n \rangle)\lambda_k  ,
\end{align}
where
\begin{equation}
\lambda_{k} := \frac{\mathcal{V}_{d-2}}{\mathcal{V}_{d-1}} \int_{-1}^{1} \cos(\beta t)  P_{k}(t) (1-t^2)^{(d-3)/2} \dd t = \frac{\mathcal{V}_{d-2}}{\mathcal{V}_{d-1}} \int_{-1}^{1} \cos(r \|x-y\| t)  P_{k}(t) (1-t^2)^{(d-3)/2} \dd t .
% \\
% &=\left(\frac{1}{2}\right)^{k}\frac{\Gamma(d/2)}{\Gamma(k+d/2)}\underbrace{\int_{-1}^{1} \cos(\alpha t) \left(\frac{\mathrm{d}}{\mathrm{d}t}\right)^k (1-t^2)^{k+(d-3)/2}\mathrm{d}t}_{=:\hat{\lambda}_{k,\alpha}}.
\end{equation}
Using the identity $\frac{\mathcal{V}_{d-2}}{\mathcal{V}_{d-1}} = \frac{\Gamma\left(d/2\right)}{\sqrt{\pi}\Gamma \left(d/2-1/2\right)}$ concludes the proof of \eqref{eq:harmonic_decompose_f}.

% Now, by Lemma 10 in \cite{Mul06}, we have \ab{check one last time that its $\mathcal{V}_{d-1}$}
% \begin{equation}
%     \int_{-1}^{1}  P_{k}(t)^2 (1-t^2)^{(d-3)/2} \dd t = \frac{\mathcal{V}_{d-1}}{\mathcal{V}_{d-2}} \frac{1}{N(d,k)},
% \end{equation}
% so that
% \begin{equation}
% |\lambda_{k,\alpha}|^2 \leq 2 \frac{\mathcal{V}_{d-1}}{\mathcal{V}_{d-2}} \frac{1}{N(d,k)}.
% \end{equation}
% Therefore
% \begin{equation}
%     \langle f_{r(x-y)}, Y_{k,i} \rangle_{\pi_{\mathbb{S}^{d-1}}}^2  \leq 2 \frac{\mathcal{V}_{d-1}^2}{\mathcal{V}_{d-2}^2} \frac{1}{N(d,k)} Y_{k,i}(v)^2.
% \end{equation}
% Let $k$ be an non negative even integer, and define

Now, we move to the proof of the upper bound~\eqref{eq:lambda_k_bound}. First, observe that when $k$ is an odd integer $P_{k}$ is an odd function, and we get $\lambda_{k} = 0$, and the bound ~\eqref{eq:lambda_k_bound} holds. In the following, assume that $k$ is an even integer. By Lemma 11 in \cite{Mul06}, we have for any $k$ times differentiable function $\varphi: \mathbb{R} \rightarrow \mathbb{R}$

\begin{equation}\label{eq:lemma_11_mul}
    \int_{-1}^{1} \varphi(t) P_{k}(t) (1-t^2)^{(d-3)/2} \mathrm{d}t = \Big(\frac{1}{2}\Big)^{k} \frac{\Gamma((d-1)/2)}{\Gamma(k+(d-1)/2)} \int_{-1}^{1} \varphi^{(k)}(t) (1-t^2)^{k+\frac{d-3}{2}}  \mathrm{d}t.
\end{equation}
By taking $\varphi(t) = \varphi_{\beta}(t) = \cos(\beta t)$, we have $\varphi^{(k)}(t) = (-1)^{k/2}\beta^{k}\cos(\beta t)$, and we get
\begin{equation}\label{eq:lambda_k_alpha_formula}
    \lambda_{k} = (-1)^{k/2} 
    \frac{\Gamma\left(d/2\right)}{\sqrt{\pi}\Gamma \left(d/2-1/2\right)}\Big(\frac{\beta}{2}\Big)^{k} \frac{\Gamma((d-1)/2)}{\Gamma(k+(d-1)/2)} \int_{-1}^{1}\cos(\beta t) (1-t^2)^{k+\frac{d-3}{2}}  \mathrm{d}t.
    % \Big(\frac{1}{2}\Big)^{k} \frac{\Gamma((d-1)/2)}{\Gamma(k+(d-1)/2)} \int_{-1}^{1} (1-t^2)^{\frac{k+d-3}{2}} \psi^{(k)}(t) \mathrm{d}t.
\end{equation}
% applying \eqref{eq:lemma_11_mul} to
% \ab{according to  its $\Gamma((d-1)/2)/\Gamma(k+(d-1)/2)$}
% In the following, we evaluate $\tilde{\lambda}_{k,\alpha}$ defined as
% \begin{equation}
% \tilde{\lambda}_{k,\alpha}:= \int_{-1}^{1} \cos(\alpha t) \left(\frac{\mathrm{d}}{\mathrm{d}t}\right)^k (1-t^2)^{k+(d-3)/2}\mathrm{d}t.
% \end{equation}
% By $k$ successive integration by parts,
% \begin{align}
%    \hat{\lambda}_{k,\alpha} &= \int_{-1}^{1} \cos(\alpha t) \left(\frac{\mathrm{d}}{\mathrm{d}t}\right)^k (1-t^2)^{k+(d-3)/2}\mathrm{d}t\\
%    &=\bigg[\cos(\alpha t)\left(\frac{\mathrm{d}}{\mathrm{d}t}\right)^{k-1} (1-t^2)^{k+(d-3)/2}\bigg]\bigg|_{-1}^1 +\int_{-1}^{1} \alpha\sin(\alpha t)  \left(\frac{\mathrm{d}}{\mathrm{d}t}\right)^{k-1} (1-t^2)^{k+(d-3)/2} \dd t\\
% & = 0+\bigg[\alpha\sin(\alpha t) \left(\frac{\mathrm{d}}{\mathrm{d}t}\right)^{k-2} (1-t^2)^{k+(d-3)/2}\bigg]\bigg|_{-1}^1-\int_{-1}^{1}\alpha^2\cos(\alpha t)\left(\frac{\mathrm{d}}{\mathrm{d}t}\right)^{k-2} (1-t^2)^{k+(d-3)/2} \\
% &\qquad\qquad\qquad\vdots\qquad\vdots\\
% &=(-1)^{k}\alpha^k\int_{-1}^{1} \cos(\alpha t)(1-t^2)^{k+(d-3)/2}\mathrm{d}t.
% \end{align}
Now, observe that 
\begin{equation}\label{eq:upper_bound_int_cos_poly_0}
\Big|\int_{-1}^{1}  \cos(\beta t)(1-t^2)^{k+\frac{d-3}{2}} \mathrm{d}t  \Big| \leq \int_{-1}^{1}  \big|\cos(\beta t) \big|(1-t^2)^{k+\frac{d-3}{2}} \mathrm{d}t  \leq \int_{-1}^{1} (1-t^2)^{k+\frac{d-3}{2}} \mathrm{d}t .
\end{equation}
Moreover, we have
\begin{equation}\label{eq:upper_bound_int_cos_poly}
    \int_{-1}^{1} (1-t^2)^{k+\frac{d-3}{2}} \mathrm{d}t = B(1/2,k+(d-1)/2) =  \frac{\Gamma(1/2)\Gamma(k+\frac{d-1}{2})}{\Gamma(k+\frac{d}{2})},
\end{equation}
where $B$ is the Beta function. Since $k$ is an even integer, we have
% \ab{stopped here}
\begin{equation}
    \frac{\Gamma(k+\frac{d-1}{2})}{\Gamma(k+\frac{d}{2})} = \frac{\big(k+(d-1)/2 -1 \big) \dots \big((d-1)/2+1 \big)\big((d-1)/2\big)}{\big(k+d/2 -1 \big) \dots \big(d/2-1 \big)\big(d/2\big)} \frac{\Gamma((d-1)/2)}{\Gamma(d/2)} \leq \frac{\Gamma((d-1)/2)}{\Gamma(d/2)}. 
\end{equation}
Thus
\begin{equation}\label{eq:bound_on_int_k_d_3}
   \int_{-1}^{1} (1-t^2)^{\frac{k+d-3}{2}} \mathrm{d}t \leq \Gamma(1/2) \frac{\Gamma((d-1)/2)}{\Gamma(d/2)} = \frac{\sqrt{\pi} \Gamma((d-1)/2)}{\Gamma(d/2)}.
\end{equation}
Combining \eqref{eq:lambda_k_alpha_formula}, \eqref{eq:upper_bound_int_cos_poly_0}, \eqref{eq:upper_bound_int_cos_poly} and \eqref{eq:bound_on_int_k_d_3}, we get
\begin{equation}
\big|\lambda_{k} \big| \leq  \Big(\frac{\beta}{2}\Big)^{k} \frac{\Gamma((d-1)/2)}{\Gamma(k+(d-1)/2)},
\end{equation}
% Now, observe that
% \begin{equation}
%     \Gamma(k+(d-1)/2) = \underbrace{((d-1)/2 +k -1) \dots (d-1)/2)}_{\text{k factor}} \Gamma((d-1)/2) \geq \Big(\frac{d-1}{2} \Big)^k \Gamma((d-1)/2),
% \end{equation}
% so that
% \begin{equation}
%     \frac{\Gamma((d-1)/2)}{\Gamma(k+\frac{d-1}{2})} \leq \Big(\frac{2}{d-1}\Big)^{k}.
% \end{equation}
% Therefore
% \begin{equation}
%     \big|\lambda_{k,\alpha} \big|  \leq 2\sqrt{\pi} \Big(\frac{\alpha}{d-1}\Big)^{k} ,
% \end{equation}
which concludes the proof of \eqref{eq:lambda_k_bound}.

\subsection{Proof of \Cref{prop:iid_spherical_quadrature}}\label{proof:prop_iid_spherical_quadrature}
Define the approximation error 
\begin{equation}
    \mathcal{E}_{S}(f_{r(x-y)}) := \Big| \int_{\mathbb{S}^{d-1}} f_{r(x-y)}(n) \mathrm{d} \pi_{\mathbb{S}^{d-1}}(n) - \frac{1}{M_S}\sum\limits_{j = 1}^{M_S}  f_{r(x-y)}(\theta_{j}) \Big|.
\end{equation}

Let $r \in \mathbb{R}_{+}^{*}$ and $x,y \in \mathbb{R}^{d}$. 
When $r(x-y) = 0$, $f_{r(x-y)} = Y_{0,1} \equiv 1$ and
\begin{equation}
     \E\mathcal{E}_{S}(f_{r(x-y)})^2 \Big|_{r(x-y)=0} = 0 = \frac{1}{M_S}   
     \sum\limits_{k = 2}^{+\infty} N(d,k) \lambda_{k}^2,
\end{equation}
since $\lambda_{k} = 0$ for $k \in \mathbb{N}^{*}$.

Now, assume that $r (x-y) \neq 0$. By \Cref{prop:harmonic_decomposition_f}, we have 
\begin{equation}
\forall n \in \mathbb{S}^{d-1}, \:\:        f_{r(x-y)}(n) = \sum_{k=0}^{+\infty} N(d,k) \lambda_{k} P_{k}(\langle v, n \rangle),\:\: v:= \frac{x-y}{\|x-y\|}.
\end{equation}
For $k=0$, we have by Hecke-Funk formula \eqref{eq:hecke_funk}
\begin{equation}
    \lambda_{0} = \frac{\mathcal{V}_{d-2}}{\mathcal{V}_{d-1}} \int_{-1}^{1} \cos(r\|x-y\|t) P_{0}(t) (1-t^2)^{(d-3)/2} \mathrm{d} t = \int_{\mathbb{S}^{d-1}}f_{r(x-y)}(n) \mathrm{d}\pi_{\mathbb{S}^{d-1}}(n),
\end{equation}
since $Y_{0,1}(v) = 1$ for $v \in \mathbb{S}^{d-1}$.
% \jz{why is the first term with $k=0$ trivial?}
Moreover, since $P_{1}$ is an odd function, we have
\begin{equation}
    \lambda_{1} =    \frac{\mathcal{V}_{d-2}}{\mathcal{V}_{d-1}} \int_{-1}^{1} \cos(r\|x-y\|t) P_{1}(t) (1-t^2)^{(d-3)/2} \mathrm{d} t = 0.
\end{equation}
Thus 
\begin{equation}
    \mathcal{E}_{S}(f_{r(x-y)}) = \frac{1}{M_S}  \Big|\sum\limits_{k=2}^{+\infty} N(d,k) \lambda_{k} \sum\limits_{j=1}^{M_{S}} P_{k}\big(\langle v, \theta_{j} \rangle \big)\Big|.
\end{equation}
Therefore 
\begin{equation}\label{eq:error_spherical_r_xy_1}
        \mathcal{E}_{S}(f_{r(x-y)})^2 = \frac{1}{M_{S}^2}  \sum\limits_{k_1=2}^{+\infty}\sum\limits_{k_2=2}^{+\infty} N(d,k_1)N(d,k_2) \lambda_{k_1}\lambda_{k_2} \sum\limits_{j_1=1}^{M_{S}} \sum\limits_{j_2=1}^{M_{S}}P_{k_1}\big(\langle v, \theta_{j_1} \rangle \big) P_{k_2}\big(\langle v, \theta_{j_2} \rangle \big).
\end{equation}
Now, let $k_1,k_2 \in \mathbb{N}\setminus\{0,1\}$. 
When $k_2 \neq k_1$, either $j_1 \neq j_2$ so that $\theta_{j_1}$ and $\theta_{j_2}$ are independent, and we get
\begin{equation}
    \mathbb{E} P_{k_1}\big(\langle v, \theta_{j_1} \rangle \big) P_{k_2}\big(\langle v, \theta_{j_2} \rangle \big) = \mathbb{E}_{\theta_{j_1}} P_{k_1}\big(\langle v, \theta_{j_1} \rangle \big) \mathbb{E}_{\theta_{j_2}}P_{k_2}\big(\langle v, \theta_{j_2} \rangle \big) = 0,
\end{equation}
or $j_1 = j_2$, and we get
\begin{equation}
    \mathbb{E} P_{k_1}\big(\langle v, \theta_{j_1} \rangle \big) P_{k_2}\big(\langle v, \theta_{j_2} \rangle \big) = \int_{\mathbb{S}^{d-1}} P_{k_1}\big(\langle v, n \rangle \big) P_{k_2}\big(\langle v, n \rangle \big) \mathrm{d} \pi_{\mathbb{S}^{d-1}}(n) = 0,
\end{equation}
using the addition formula \eqref{eq:spherical_addition_formula} and the fact that $\int_{\mathbb{S}^{d-1}} Y_{k_{1},i_{1}}(n) Y_{k_{2},i_{2}}(n) \mathrm{d}\pi_{\mathbb{S}^{d-1}}(n) = 0$, for $i_{1}~\in~[N(d,k_1)]$ and $i_{2} \in [N(d,k_2)]$ when $k_{1} \neq k_{2}$. Thus
\begin{equation}\label{eq:proof_MC_2}
        \mathbb{E} \sum\limits_{j_1=1}^{M_S} \sum\limits_{j_2=1}^{M_S}P_{k_1}\big(\langle v, \theta_{j_1} \rangle \big) P_{k_2}\big(\langle v, \theta_{j_2} \rangle \big) = 0 .
\end{equation}

When $k_1 = k_2 = k$, we use 
the addition formula \eqref{eq:spherical_addition_formula} to prove that 
\begin{equation*}
     P_{k_1}(\langle v,\theta_{j_1} \rangle) P_{k_2}(\langle v,\theta_{j_2} \rangle)  = \frac{1}{N(d,k_1)N(d,k_2)}\sum\limits_{i_1 =1}^{N(d,k_1)} \sum\limits_{i_2=1}^{N(d,k_2)} Y_{k_1,i_1}(v) Y_{k_2,i_2}(v)Y_{k_1,i_1}(\theta_{j_1})   Y_{k_2,i_2}(\theta_{j_2}).
\end{equation*}
Moreover, since $k$ is non-negative, we have
\begin{equation}
    \mathbb{E} Y_{k_1,i_1}(\theta_{j_1})   Y_{k_2,i_2}(\theta_{j_2}) = \mathbb{E} Y_{k,i_1}(\theta_{j_1})   Y_{k,i_2}(\theta_{j_2}) =  \delta_{i_1,i_2} \delta_{j_1,j_2}.
\end{equation}
Therefore,
\begin{equation}
    \mathbb{E} P_{k_1}\big(\langle v, \theta_{j_1} \rangle \big) P_{k_2}\big(\langle v, \theta_{j_2} \rangle \big) = \frac{1}{N(d,k)}  \delta_{j_1,j_2},
\end{equation}
since 
\begin{equation}
  \sum\limits_{i=1}^{N(d,k)} Y_{k,i}(v) Y_{k,i}(v) = N(d,k)P_{k}(1) = N(d,k).
\end{equation}
Thus
\begin{equation}\label{eq:proof_MC_3}
        \mathbb{E} \sum\limits_{j_1=1}^{M_S} \sum\limits_{j_2=1}^{M_S}P_{k_1}\big(\langle v, \theta_{j_1} \rangle \big) P_{k_2}\big(\langle v, \theta_{j_2} \rangle \big) = \frac{M_S}{N(d,k)}  .
\end{equation}
Finally, observe that $|\frac{1}{M_{S}^2}  \sum_{k_1=2}^{K_1}\sum_{k_2=2}^{K_2} N(d,k_1)N(d,k_2) \lambda_{k_1}\lambda_{k_2}\sum_{j_1=1}^{M_{S}} \sum_{j_2=1}^{M_{S}}P_{k_1}\big(\langle v, \theta_{j_1} \rangle \big) P_{k_2}\big(\langle v, \theta_{j_2} \rangle \big)|$ is bounded by 
\begin{equation}
\sum_{k_1=2}^{K_1}\sum_{k_2=2}^{K_2} N(d,k_1)N(d,k_2) \lambda_{k_1}\lambda_{k_2} \leq \Big(\sum\limits_{k=2}^{+\infty} N(d,k) \lambda_{k}\Big)^2 <+\infty,
\end{equation}
for $K_{1}, K_{2} \in \mathbb{N} \setminus \{0,1\}$. Indeed,  by \eqref{eq:bound_polys_by_1}, for $k_{1}, k_{2} \in \mathbb{N} \setminus \{0,1\}$, we have
\begin{align}
\big| \sum_{j_1=1}^{M_{S}} \sum_{j_2=1}^{M_{S}}P_{k_1}\big(\langle v, \theta_{j_1} \rangle \big) P_{k_2}\big(\langle v, \theta_{j_2} \rangle \big) \big| & \leq \sum\limits_{j_1=1}^{M_{S}} \sum\limits_{j_2=1}^{M_{S}}|P_{k_1}\big(\langle v, \theta_{j_1} \rangle \big) P_{k_2}\big(\langle v, \theta_{j_2} \rangle \big)| \\ & \leq \sum\limits_{j_1=1}^{M_{S}} \sum\limits_{j_2=1}^{M_{S}} 1 = M_{S}^2.
\end{align}
Combining \eqref{eq:proof_MC_2} and \eqref{eq:proof_MC_3}, and using dominated convergence theorem, we get \eqref{eq:spherical_MC_bound}.

% and the fact that 
% \begin{equation}
%     \int_{-1}^{1}  P_{k}(t)^2 (1-t^2)^{(d-3)/2} \dd t = \frac{\mathcal{V}_{d-1}}{\mathcal{V}_{d-2}} \frac{1}{N(d,k)},
% \end{equation}
% which follows from Lemma 10 in \cite{Mul06}.
% we have \ab{check one last time that its $\mathcal{V}_{d-1}$}

Next, we prove that 
\begin{equation}
    \sum\limits_{k=2}^{+\infty} N(d,k)\lambda_{k}^2 \leq  \frac{1}{2}\Big(\frac{r^2 \|x-y\|^2}{d-1} \Big)^2 e^{\frac{r^2 \|x-y\|^2}{d-1}}.
\end{equation}
% \begin{equation}
%     \sum\limits_{k=2}^{+\infty} N(d,k) \lambda_{k}^2 \leq 2 r^2 \|x-y\|^2 e^{\frac{r^2 \|x-y\|^2}{d-1}}
% \end{equation}
For this we make use of \eqref{eq:lambda_k_bound} which states that 
\begin{equation}
|\lambda_{k}| \leq   \frac{\Gamma((d-1)/2)}{\Gamma(k+(d-1)/2)}\bigg(\frac{r \|x-y\|}{2}\bigg)^{k},  
\end{equation}
so that
\begin{equation}\label{eq:N_d_k_lambda_k_2}
    N(d,k)\lambda_{k}^2 \leq   N(d,k) \Big(\frac{\Gamma((d-1)/2)}{\Gamma(k+(d-1)/2)} \Big)^2 \bigg(\frac{r \|x-y\|}{2}\bigg)^{2k}.
\end{equation}
% \begin{equation}
%        \mathbb{E}  \mathcal{E}(f_{r(x-y)})^2 = \frac{1}{M_{S}}   
%      \sum\limits_{k = 2}^{+\infty} N(d,k) \lambda_{k}^2.
% \end{equation}
In the following, we derive an upper bound for 
\begin{equation}
    N(d,k) \frac{\Gamma^2((d-1)/2)}{\Gamma^2(k+(d-1)/2)}.
\end{equation}
For this purpose, we write
\begin{align}
    \frac{\Gamma((d-1)/2)}{\Gamma(k+(d-1)/2)} &=\frac{\Gamma((d-1)/2)}{\underbrace{(k+(d-1)/2-1)(k+(d-1)/2-2)\cdots((d-1)/2)}_{\text{k factors}}\Gamma((d-1)/2)}\\
    &=\frac{2^k}{(2k+d-3)(2k+d-5)\cdots (d+1) (d-1)}.
\end{align}
Thus, we get
\begin{align}
    \frac{\Gamma^2((d-1)/2)}{\Gamma^2(k+(d-1)/2)} &=\frac{2^{2k}}{(2k+d-3)^2(2k+d-5)^2\cdots (d+1)^2 (d-1)^2}\\
    &\leq\frac{2^{2k}}{(2k+d-3)(2k+d-4)(2k+d-5)\cdots (d-1) (d-1)} \\
     &\leq\frac{2^{2k} (d-2)!}{(d-1)(2k+d-3)!} .
\end{align}
Moreover, we have by~\eqref{eq:dim_spherical_harmonics_k}
\begin{equation}
N(d,k) = \frac{(2k+d-2)(k+d-3)!}{k!(d-2)!},
\end{equation}
so that
\begin{align}\label{eq:N_dk_d1_kd1}
N(d,k) \frac{\Gamma^2((d-1)/2)}{\Gamma^2(k+(d-1)/2)} & \leq \frac{(2k+d-2)(k+d-3)!}{k!(d-2)! } \frac{2^{2k} (d-2)!}{(d-1)(2k+d-3)!} \nonumber \\
& \leq \frac{2^{2k}}{k!(d-1)} \frac{(2k+d-2) }{(2k+d-3) \dots (k+d-2)} \nonumber \\
& \leq \frac{2^{2k}}{k!(d-1)^k},
\end{align}
since, when $k\geq 2$, we have
\begin{align}
    &\frac{(d+2k-2)}{(d-1)(d+2k-3)\dots(d+k-2)} \\
    &= \frac{(d+2k-2)(d+k-3)}{(d-1)(d+2k-3)\dots(d+k-2)(d+k-3)}\\
    &=\frac{1}{d-1}\underbrace{\frac{(d+2k-2)(d+k-3)}{(d+2k-3)(d+k-2)}}_{\leq1}\frac{1}{(d+2k-4)\dots(d+k-1)(d+k-3)}\\
    &\leq\frac{1}{d-1}\frac{1}{\underbrace{(d+2k-4)\dots(d+k-1)(d+k-3)}_{k-1 \text{ terms}}}\\
    &\leq\frac{1}{(d-1)^k}.
\end{align}
Therefore, by~\eqref{eq:N_d_k_lambda_k_2}, we have
\begin{align}
    \sum\limits_{k=2}^{+\infty} N(d,k)\lambda_{k}^2 \leq  \sum\limits_{k=2}^{+\infty}   \frac{2^{2k}}{k!(d-1)^k} \bigg(\frac{r \|x-y\|}{2}\bigg)^{2k}  & = \sum\limits_{k=2}^{+\infty}   \frac{1}{k!}\frac{(r \|x-y\|)^{2k}}{(d-1)^k} \\
     & =  \sum\limits_{k=2}^{+\infty}   \frac{1}{k!} \Bigg( \frac{r^2 \|x-y\|^2}{d-1} \Bigg)^k.
\end{align}
Finally, by \Cref{lemma:exp_truncated}, for $d\geq 2$ we get
\begin{equation}
    \sum\limits_{k=2}^{+\infty} N(d,k)\lambda_{k}^2 \leq  \frac{1}{2}\Big(\frac{r^2 \|x-y\|^2}{d-1} \Big)^2 e^{\frac{r^2 \|x-y\|^2}{d-1}}.
\end{equation}

\subsection{Proof of \Cref{prop:block_iid_spherical_quadrature}}\label{proof:block_iid_spherical_quadrature}
Let $r \in \mathbb{R}_{+}^{*}$ and $x,y \in \mathbb{R}^{d}$. When $r(x-y) = 0$, the proof of \eqref{eq:block_iid_spherical_quadrature} follows the same steps as in~\eqref{proof:prop_iid_spherical_quadrature}.

In the following, we assume that $r(x-y) \neq 0$, and we make use of \eqref{eq:error_spherical_r_xy_1}
\begin{equation}\label{eq:error_spherical_r_xy_2}
        \mathcal{E}_{S}(f_{r(x-y)})^2 = \frac{1}{M_{S}^2}  \sum\limits_{k_1=2}^{+\infty}\sum\limits_{k_2=2}^{+\infty} N(d,k_1)N(d,k_2) \lambda_{k_1}\lambda_{k_2} \sum\limits_{j_1=1}^{M_{S}} \sum\limits_{j_2=1}^{M_{S}}P_{k_1}\big(\langle v, \theta_{j_1} \rangle \big) P_{k_2}\big(\langle v, \theta_{j_2} \rangle \big),
\end{equation}
proved in \Cref{proof:prop_iid_spherical_quadrature}.

% First, observe that $\sum_{j=1}^{M_{S}} b_j =1$, so that by \Cref{prop:harmonic_decomposition_f}, we have 
% \begin{equation}
%     \mathcal{E}_{S}(f_{r(x-y)}) = \frac{\mathcal{V}_{d-2}}{\mathcal{V}_{d-1}}  \sum\limits_{k=1}^{+\infty} N(d,k) \lambda_{k} \sum\limits_{j=1}^{M_{S}} b_{j}P_{k}\big(\langle v, \theta_{j} \rangle \big).
% \end{equation}
% Therefore

Now, let $k_{1}, k_{2} \in \mathbb{N}\setminus \{0,1\}$. We treat two cases: $k_1 \neq k_2$ and $k_1 = k_2$.

\paragraph{In the case $k_1 \neq k_2$}, we have
\begin{equation}
    \mathbb{E} \sum\limits_{j_1=1}^{M_{S}} \sum\limits_{j_2=1}^{M_{S}}P_{k_1}\big(\langle v, \theta_{j_1} \rangle \big) P_{k_2}\big(\langle v, \theta_{j_2} \rangle \big) = \sum\limits_{j_1=1}^{M_{S}} \sum\limits_{j_2=1}^{M_{S}} \mathbb{E} P_{k_1}\big(\langle v, \theta_{j_1} \rangle \big) P_{k_2}\big(\langle v, \theta_{j_2} \rangle \big).
\end{equation}
Let $(j_1, j_2) \in [M_{S}]^2$. The random variable $\theta_{j_1}$ follows the uniform distribution on $\mathbb{S}^{d-1}$, so that by the Hecke-Funk formula \eqref{eq:hecke_funk}, we have
\begin{equation}
    \mathbb{E}_{\theta_{j_1}} P_{k_1}(\langle v, \theta_{j_1} \rangle)  = \int_{\mathbb{S}^{d-1}} P_{k_1}(\langle v, n \rangle) \mathrm{d}\pi_{\mathbb{S}^{d-1}}(n) = \frac{\mathcal{V}_{d-2}}{\mathcal{V}_{d-1}} \int_{-1}^{1} P_{k_1}(t) (1-t^2)^{(d-3)/2} \mathrm{d}t = 0.
\end{equation}
Now, when $\theta_{j_1}$ and $\theta_{j_2}$ come from two different orthonormal matrices they are independent and we have
\begin{equation}
    \mathbb{E} P_{k_1}\big(\langle v, \theta_{j_1} \rangle \big) P_{k_2}\big(\langle v, \theta_{j_2} \rangle \big) = \mathbb{E}_{\theta_{j_1}} P_{k_1}\big(\langle v, \theta_{j_1} \rangle \big) \mathbb{E}_{\theta_{j_2}} P_{k_2}\big(\langle v, \theta_{j_2} \rangle \big) = 0.
\end{equation}
Otherwise, they are not independent, and
we make use of \Cref{lemma:formula_omc}, and we still get
\begin{equation}
    \mathbb{E} P_{k_1}\big(\langle v, \theta_{j_1} \rangle \big) P_{k_2}\big(\langle v, \theta_{j_2} \rangle \big) = 0.
\end{equation}
% \ab{give a justification of the following}
% \begin{align}
%     \mathbb{E} P_{k_1}\big(\langle v, \theta_{j_1} \rangle \big) P_{k_2}\big(\langle v, \theta_{j_2} \rangle \big) & = \mathbb{E}_{\theta_{j_2}| \theta_{j_1}} \mathbb{E}_{\theta_{j_1}} P_{k_1}\big(\langle v, \theta_{j_1} \rangle \big)  P_{k_2}\big(\langle v, \theta_{j_2} \rangle \big) \nonumber \\
%      & = \mathbb{E}_{\theta_{j_2}| \theta_{j_1}} P_{k_2}\big(\langle v, \theta_{j_2} \rangle \big) \mathbb{E}_{\theta_{j_1}} P_{k_1}\big(\langle v, \theta_{j_1} \rangle \big) = 0.
% \end{align}
Thus
\begin{equation}\label{eq:exp_P_k_neq_k}
    \mathbb{E} \sum\limits_{j_1=1}^{M_{S}} \sum\limits_{j_2=1}^{M_{S}}P_{k_1}\big(\langle v, \theta_{j_1} \rangle \big) P_{k_2}\big(\langle v, \theta_{j_2} \rangle \big) = 0.
\end{equation}

\paragraph{In the case $k_1 = k_2 = k \neq 2$.} Let $j_1, j_2 \in [M_{S}]^2$. When $\theta_{j_1}$ and $\theta_{j_2}$ come from two different orthonormal matrices, they are independent, and we have
\begin{equation}
   \mathbb{E} P_{k}\big(\langle v, \theta_{j_1} \rangle \big) P_{k}\big(\langle v, \theta_{j_2} \rangle \big) = 0.
\end{equation}
Otherwise, we use again \Cref{lemma:formula_omc}, and we get
\begin{equation}
   \mathbb{E} P_{k}\big(\langle v, \theta_{j_1} \rangle \big) P_{k}\big(\langle v, \theta_{j_2} \rangle \big) \leq \frac{2}{d-1} \int_{\mathbb{S}^{d-1}}  P_{k}\big(\langle v, \theta \rangle \big)^2 \mathrm{d}\pi_{\mathbb{S}^{d-1}}(\theta).
\end{equation}
Thus
\begin{align}\label{eq:exp_P_k_k_neq_2}
    \mathbb{E}\sum\limits_{j_1=1}^{M_{S}} \sum\limits_{j_2=1}^{M_{S}}P_{k}\big(\langle v, \theta_{j_1} \rangle \big) P_{k}\big(\langle v, \theta_{j_2} \rangle \big) 
    & \leq \frac{M_{S}}{d} \Big(\frac{2d(d-1)}{d-1} +d \Big) \int_{\mathbb{S}^{d-1}} P_{k}(\langle v, \theta \rangle)^2 \mathrm{d} \pi_{\mathbb{S}^{d-1}}(\theta) \nonumber \\
    & \leq 3M_S\int_{\mathbb{S}^{d-1}} P_{k}(\langle v, \theta \rangle)^2 \mathrm{d} \pi_{\mathbb{S}^{d-1}}(\theta).
\end{align}

% \ab{Maybe we can improve these bounds, and treat the two previous cases using the same proof}
\paragraph{In the case $k_{1} = k_{2} = 2$.} We have 
\begin{equation}
    \mathbb{E}\sum\limits_{j_1=1}^{M_{S}} \sum\limits_{j_2=1}^{M_{S}}P_{k}\big(\langle v, \theta_{j_1} \rangle \big) P_{k}\big(\langle v, \theta_{j_2} \rangle \big)  = \mathbb{E}\Big(\sum\limits_{j=1}^{M_{S}} P_{2}\big(\langle v, \theta_{j} \rangle \big) \Big) ^2.
\end{equation}
Now, we prove that 
\begin{equation}\label{eq:sum_P_2_0}
 \forall v \in \mathbb{S}^{d-1},\:\:   \sum\limits_{j=1}^{M_{S}} P_{2}\big(\langle v, \theta_{j_1} \rangle \big) = 0,
\end{equation}
so that
\begin{equation}\label{eq:sum_P_2_0_bis}
    \forall v \in \mathbb{S}^{d-1},\:\: \mathbb{E}\sum\limits_{j_1=1}^{M_{S}} \sum\limits_{j_2=1}^{M_{S}}P_{k}\big(\langle v, \theta_{j_1} \rangle \big) P_{k}\big(\langle v, \theta_{j_2} \rangle \big)  = 0.
\end{equation}
For this purpose, observe that, by definition of $P_{2}$ given by \ref{def:P_k}, we have
\begin{equation}
    P_{2}(t) = \frac{1}{d-1} \Big(dt^2 -1 \Big),
\end{equation}
so that for $m \in [M_{S}/d -1]$, we have
\begin{equation}\label{eq:sum_P_2_orthormal}
   \sum\limits_{j=1 + m \cdot d}^{(1+m)\cdot d} P_{2}(\langle v, \theta_j \rangle) = \frac{1}{d-1} \Big(d \sum\limits_{j=1 + m\cdot d}^{(1+m) \cdot d} \langle v, \theta_j \rangle^2 - \sum\limits_{j=1 + m\cdot d}^{1+m }1 \Big) = \frac{1}{d-1} \Big(d \|v\|^2 -d \Big) = 0,
\end{equation}
where we have used the fact that for $m \in [M_{S}/d -1]$, $(\theta_{1+ m \cdot d}, \dots, \theta_{(1+m) \cdot d} )$ forms an orthonormal basis of $\mathbb{R}^{d}$. By summing~\eqref{eq:sum_P_2_orthormal} over $m \in [M_{S}/d -1]$, we prove \eqref{eq:sum_P_2_0}.

Finally, combining \eqref{eq:error_spherical_r_xy_2}, \eqref{eq:exp_P_k_neq_k}, \eqref{eq:exp_P_k_k_neq_2} and \eqref{eq:sum_P_2_0_bis}, and the fact that $\lambda_{k} = 0$ when $k$ is odd, we get

\begin{equation}
  \mathbb{E}  \mathcal{E}(f_{r(x-y)})^2 \leq \frac{3}{M_{S}}   
     \sum\limits_{k = 4}^{+\infty} N(d,k)^2 \lambda_{k}^2 \int_{\mathbb{S}^{d-1}} P_{k}\big(\langle v, \theta \rangle \big)^2 \mathrm{d} \pi_{\mathbb{S}^{d-1}}(\theta).
\end{equation}
Moreover, by \eqref{eq:int_Sd_P_k_2} we have 
\begin{equation}
   \int_{\mathbb{S}^{d-1}} P_{k}\big(\langle v, \theta \rangle \big)^2 \mathrm{d} \pi_{\mathbb{S}^{d-1}}(\theta) =  \frac{1}{N(d,k)}.
\end{equation}
Therefore  
\begin{equation}
     \mathbb{E}\mathcal{E}(f_{r(x-y)})^2 \leq \frac{3}{M_{S} }   
     \sum\limits_{k = 4}^{+\infty} N(d,k) \lambda_{k}^2.
\end{equation}

The proof of \eqref{eq:block_iid_spherical_quadrature_bis} follows the same steps as in \Cref{proof:prop_iid_spherical_quadrature} by making use of \eqref{eq:N_d_k_lambda_k_2} and \eqref{eq:N_dk_d1_kd1} and  \Cref{lemma:exp_truncated}.

% share the some orthonormal matrix, in th 

% First, observe that 
% \begin{equation}
%     \mathbb{E} \sum\limits_{j_1=1}^{M_{S}} P_{k}(\langle v, \theta_j \rangle) =  \sum\limits_{j_1=1}^{M_{S}} \mathbb{E} P_{k}(\langle v, \theta_j \rangle) = 
% \end{equation}

% % When $k_1  = k_2 = k$. We have
% % \begin{equation}
% %    N(d,k)^2 \sum\limits_{j_1=1}^{M_{S}} \sum\limits_{j_2=1}^{M_{S}}P_{k}\big(\langle v, \theta_{j_1} \rangle \big) P_{k}\big(\langle v, \theta_{j_2} \rangle \big) =  N(d,k) \sum\limits_{j_1=1}^{M_{S}} \sum\limits_{j_2=1}^{M_{S}} P_{k}(\langle \theta_{j_1}, \theta_{j_2} \rangle), 
% % \end{equation}
% % where we have use the fact that 
% \begin{equation}
%     N(d,k_1)N(d,k_2) P_{k_1}(\langle v, \theta_{j_1} \rangle) P_{k_2}(\langle v, \theta_{j_2} \rangle) = \sum\limits_{i_1=1}^{N(d,k_1)} \sum\limits_{i_2=1}^{N(d,k_2)} Y_{k_1,i_1}(\theta_{j_1}) Y_{k_1,i_1}(v)  Y_{k_2,i_2}(v) Y_{k_2,i_2}(\theta_{j_2}) 
% \end{equation}
% Now, observe that
% \begin{equation}
%     \mathbb{E}  Y_{k_1,i_1}(\theta_{j_1}) Y_{k_2,i_2}(\theta_{j_2}) = ..
% \end{equation}
% \subsubsection{Proof of \Cref{prop:spherical_harmomics_coefficients}}\label{proof:spherical_harmomics_coefficients}

\section{Lemmata}
\subsection{An upper bound on the truncation of the exponential function}
\begin{lemma}\label{lemma:exp_truncated}
For $M \in \mathbb{N}$, we have 
\begin{equation}
\forall x \in \mathbb{R}_{+}, \:\: \Big|\sum\limits_{m = M+1}^{+\infty} \frac{1}{m!} x^{m} \Big| \leq \frac{x^{M+1}}{(M+1)!} e^x.
\end{equation}
\end{lemma}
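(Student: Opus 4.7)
The plan is to factor out the leading term and reindex the series so that the remaining sum can be bounded termwise by the Taylor series of $e^x$.

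First I would shift the summation index via $m = M+1+k$ to write
\begin{equation*}
\sum_{m=M+1}^{+\infty}\frac{x^{m}}{m!} = \frac{x^{M+1}}{(M+1)!}\sum_{k=0}^{+\infty} \frac{(M+1)!}{(M+1+k)!}\,x^{k}.
\end{equation*}
The factor $(M+1)!/(M+1+k)! = 1/\bigl((M+2)(M+3)\cdots(M+1+k)\bigr)$ is a product of $k$ terms, each of the form $M+1+j$ with $j\in\{1,\dots,k\}$. Since $M\ge 0$, each such factor satisfies $M+1+j\ge j$, so the denominator dominates $k! = 1\cdot 2\cdots k$ factorwise. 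This yields the pointwise bound $(M+1)!/(M+1+k)! \le 1/k!$ for every $k\ge 0$.

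Plugging this in gives
\begin{equation*}
\sum_{k=0}^{+\infty} \frac{(M+1)!}{(M+1+k)!}\,x^{k} \le \sum_{k=0}^{+\infty} \frac{x^{k}}{k!} = e^{x},
\end{equation*}
for $x \ge 0$, and multiplying by $x^{M+1}/(M+1)!$ delivers the desired inequality. The absolute value on the left-hand side is automatic since all terms are non-negative.

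No step here is a real obstacle: the only subtlety is verifying the factor-by-factor comparison $M+1+j \ge j$ used to bound $(M+1)!/(M+1+k)!$ by $1/k!$, and checking that the edge case $k=0$ (empty product equal to $1$) is consistent. Everything else is a direct reindexing and termwise comparison with the exponential series.
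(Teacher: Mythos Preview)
Your proof is correct and takes a genuinely different route from the paper's. The paper invokes Taylor's theorem with the integral form of the remainder for $\varphi=\exp$: writing
\[
e^{x}-\sum_{m=0}^{M}\frac{x^{m}}{m!}=\int_{0}^{x}e^{t}\,\frac{(x-t)^{M}}{M!}\,\mathrm{d}t,
\]
then bounding $e^{t}\le e^{x}$ on $[0,x]$ and integrating $(x-t)^{M}/M!$ to get $x^{M+1}/(M+1)!$. Your argument is instead purely combinatorial: after factoring out $x^{M+1}/(M+1)!$, the termwise inequality $(M+2)(M+3)\cdots(M+1+k)\ge k!$ immediately bounds the tail by the exponential series. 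Your approach is more elementary (no calculus beyond the exponential series itself) and makes the mechanism of the bound transparent; the paper's approach has the advantage of being the standard Lagrange-remainder template, which generalises without change to other analytic functions whose derivatives are uniformly controlled on $[0,x]$. Both yield the identical bound with the same constant.
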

\begin{proof}
Let $\varphi$ be the $\exp$ function. Using the Taylor's Theorem with integral form of remainder, we get for $x \in \mathbb{R}_{+}$,
\begin{equation}
\varphi(x) = \sum\limits_{m = 0}^{M} \varphi^{(m)}(0) \frac{x^m}{m!} + \int_{0}^{x} \varphi^{(M+1)}(t) \frac{(x-t)^{M}}{M!} \mathrm{d} t.
\end{equation}
So that
\begin{align*}
    \Big| \varphi(x) - \sum\limits_{m = 0}^{M} \varphi^{(m)}(0) \frac{x^m}{m!} \Big| &= \Big| \int_{0}^{x} \varphi^{(M+1)}(t) \frac{(x-t)^{M}}{M!} \mathrm{d} t \Big| \\
    & \leq \sup\limits_{t \in [0,x]} |\varphi^{(M+1)}(t) | \Big| \int_{0}^{x} \frac{(x-t)^{M}}{M!} \mathrm{d} t \Big| \\
    & \leq  e^{x} \frac{x^{M+1}}{(M+1)!}.
\end{align*}
% \ab{what is the classical notation of derivative $\varphi^{k}$ or $\varphi^{(k)}$?}
% Now, observe that $\varphi^{k}$ is either $\cosh$ or $\sinh$. In both cases, we have \ab{check this one last time}
% \begin{equation}
%  \sup\limits_{t \in [0,x]} |\varphi^{k}(t) | \leq |\cosh(x)|.
% \end{equation}
We conclude, by observing that 
\begin{equation}
    \Big| \sum\limits_{m=M+1}^{+\infty}  \frac{x^{m}}{m!} \Big| = \Big| \varphi(x) - \sum\limits_{m = 0}^{M} \varphi^{(m)}(0) \frac{x^m}{m!} \Big|. 
\end{equation}
% \begin{align}
% \Big| \varphi(x) - \sum\limits_{m = 0}^{M} \varphi^{m}(0) \frac{x^m}{m!} \Big| & = \Big| \sum\limits_{m = 0}^{+\infty} \varphi^{m}(0) \frac{x^m}{m!} - \sum\limits_{m = 0}^{M} \varphi^{m}(0) \frac{x^m}{m!} \Big| \\
% & = \Big| \sum\limits_{m = k}^{+\infty} \varphi^{m}(0) \frac{x^m}{m!} \Big|\\
% & = 
% \end{align}

\end{proof}

\subsection{Error bound for the generalized Gauss-Laguerre quadrature rule}
\begin{lemma}\label{lem:quad_err_for_Laguerre}
There exists $L>0$ such that for any $M \in \mathbb{N}^*$, we have for $\varphi \in L_{2}(p_{\Xi})$,
\begin{equation}
 \sum\limits_{m=2M}^{+\infty}\sqrt{m} |\langle \varphi, \ell_{m}^{\alpha} \rangle_{p_{\Xi}}| < +\infty \implies \Big| \int_{\mathbb{R}_{+}} \varphi(\xi) p_{\Xi}(\xi) \mathrm{d}\xi - \sum\limits_{i=1}^{M}a_{i} \varphi(\xi_i) \Big| \leq L \sum\limits_{m=2M}^{+\infty}\sqrt{m} |\langle \varphi, \ell_{m}^{\alpha} \rangle_{p_{\Xi}}|,
\end{equation}
% \begin{equation}
% \forall n,m \in \mathbb{N}, \:\: |\sum\limits_{i=1}^N a_{i} \ell^{\alpha}_{m}(\xi_i)\ell^{\alpha}_{n}(\xi_i)| \leq Lm +1,
% \end{equation}
where the $(\ell^{\alpha}_{n})_{n \in \mathbb{N}}$ is family of normalized generalized Laguerre polynomials, and the $a_{1}, \dots, a_{M}$ and the $\xi_{1}, \dots, \xi_{M}$ are respectively the weights and the nodes of the Gaussian quadrature associated to the weight function $p_{\Xi}$. 
% \ab{I commented the following equation because it contains a bug..}
 
\end{lemma}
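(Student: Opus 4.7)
The strategy is to expand $\varphi$ in the orthonormal Laguerre basis and exploit the fact that the $M$-point Gauss-Laguerre rule is exact on polynomials of degree at most $2M-1$. Writing $\varphi = \sum_{m \geq 0} c_m \ell_m^{\alpha}$ with $c_m := \langle \varphi, \ell_m^{\alpha} \rangle_{p_\Xi}$, the partial sum $P := \sum_{m=0}^{2M-1} c_m \ell_m^{\alpha}$ is a polynomial of degree at most $2M-1$, so the quadrature is exact on it. Consequently the error splits as
\begin{equation*}
\int_{\mathbb{R}_+} \varphi(\xi)\, p_\Xi(\xi)\, \mathrm{d}\xi - \sum_{i=1}^M a_i \varphi(\xi_i) = -\sum_{i=1}^M a_i R(\xi_i), \qquad R := \sum_{m \geq 2M} c_m \ell_m^{\alpha},
\end{equation*}
where the $L_2(p_\Xi)$-integral of $R$ vanishes because $\int \ell_m^{\alpha}\, p_\Xi = 0$ for $m \geq 1$ and the hypothesis gives $\sum_{m \geq 2M} |c_m| < \infty$.

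Next I would interchange the series and the finite sum at each node (justifiable by absolute convergence of $\sum_m c_m \ell_m^{\alpha}(\xi_i)$ via standard polynomial-growth bounds on $|\ell_m^{\alpha}(\xi_i)|$) and apply Cauchy-Schwarz against the discrete probability measure $\mu_M := \sum_i a_i \delta_{\xi_i}$ (recall $Q_M(1) = 1$):
\begin{equation*}
\Big|\sum_i a_i R(\xi_i)\Big| \leq \sum_{m \geq 2M} |c_m|\, \Big|\sum_i a_i \ell_m^{\alpha}(\xi_i)\Big| \leq \sum_{m \geq 2M} |c_m|\, \sqrt{\sum_i a_i \big(\ell_m^{\alpha}(\xi_i)\big)^2}.
\end{equation*}
The crux is then to establish a uniform bound $\sum_i a_i (\ell_m^{\alpha}(\xi_i))^2 \leq L^2 m$ for $m \geq 2M$ with $L$ independent of $M$; in fact I expect the stronger $O(1)$ estimate to hold, after which the trivial inequality $|c_m| \leq \sqrt{m}\,|c_m|$ gives the desired conclusion.

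For this final bound I would invoke two classical ingredients for the Laguerre weight $x^{\alpha} e^{-x}$: (i) the Christoffel formula $a_i = 1/K_M(\xi_i,\xi_i)$ paired with Szeg\H{o}-Nevai asymptotics for the Christoffel-Darboux kernel, yielding $a_i \lesssim M^{-1/2}\,\xi_i^{\alpha+1/2} e^{-\xi_i}$ in the bulk; and (ii) the oscillatory asymptotics of the normalized Laguerre polynomial giving $(\ell_m^{\alpha}(x))^2 \lesssim m^{-1/2}\,x^{-\alpha-1/2} e^{x}$ throughout its oscillatory range. By the Szeg\H{o} bound \eqref{eq:szego_bound}, every $\xi_i \in [0, 4M+d]$, and for $m \geq 2M$ this interval sits safely inside the oscillatory range of $\ell_m^{\alpha}$. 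Multiplying (i) and (ii) and summing over the $M$ nodes yields $\sum_i a_i (\ell_m^{\alpha}(\xi_i))^2 \lesssim \sqrt{M/m} \leq 1/\sqrt{2}$. The main obstacle is making these asymptotics uniform across the entire nodal interval, especially near the hard edge $\xi_i \to 0$ (Bessel-type local asymptotics) and the soft edge $\xi_i \sim 4M$ (Airy-type regime), where the bulk estimates degenerate and more delicate matched expansions become necessary; a looser but still sufficient alternative is to majorize $(\ell_m^{\alpha})^2$ pointwise by a polynomial whose weighted integral grows linearly in $m$, which gives up the $O(1)$ bound but still delivers the $L\sqrt{m}$ bound demanded by the lemma.
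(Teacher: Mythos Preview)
Your decomposition --- expand in the Laguerre basis, kill the degree $\leq 2M-1$ part by exactness, and reduce to bounding $\big|\sum_i a_i \ell_m^\alpha(\xi_i)\big|$ term by term for $m \geq 2M$ --- is exactly the paper's structure. The divergence is entirely in how that node-sum bound is obtained.

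You go after $\sum_i a_i (\ell_m^\alpha(\xi_i))^2$ via Christoffel-function and pointwise Laguerre asymptotics, matching bulk, Bessel, and Airy regimes. As you acknowledge, the edge matching is delicate and you do not close it; the fallback of majorizing $(\ell_m^\alpha)^2$ by a polynomial of controlled moment is also left vague. The paper sidesteps all of this with a single external ingredient: a derivative-based error bound for the Gauss--Laguerre rule, of the form
\[
\Big|\int_{\mathbb{R}_+}\psi\, p_\Xi - \sum_{i=1}^M a_i \psi(\xi_i)\Big| \;\leq\; L' \int_{\mathbb{R}_+} |\psi'(\xi)|\,\sqrt{\xi}\, p_\Xi(\xi)\, \mathrm{d}\xi,
\]
with $L'$ independent of $M$. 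Applying this with $\psi = \ell_m^\alpha$ (whose $p_\Xi$-integral vanishes for $m\geq 1$) and then Cauchy--Schwarz on the right-hand side leaves $\big(\int_{\mathbb{R}_+} ((\ell_m^{\alpha})')^2\, \xi\, p_\Xi\big)^{1/2}$. The Laguerre derivative identity $(L_m^{\alpha})'=-L_{m-1}^{\alpha+1}$ gives $(\ell_m^{\alpha})'=-\sqrt{m}\,\ell_{m-1}^{\alpha+1}$, and since $\xi\, p_\Xi(\xi)=(\alpha+1)\,p_{\alpha+1}(\xi)$, this integral evaluates to exactly $(\alpha+1)m$. Hence $\big|\sum_i a_i \ell_m^\alpha(\xi_i)\big| \leq L'\sqrt{\alpha+1}\,\sqrt{m}$ with no asymptotic analysis whatsoever. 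The idea you are missing, then, is to feed $\ell_m^\alpha$ itself back into a \emph{continuous} quadrature error estimate and exploit the derivative structure of Laguerre polynomials, rather than attacking the discrete second moment directly.
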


\begin{proof}
First, we prove the existence of a universal constant $L>0$ that satisfies
\begin{equation}\label{eq:condition_on_l_m}
\forall m \in \mathbb{N}, \:\: |\sum\limits_{i=1}^M a_{i} \ell^{\alpha}_{m}(\xi_i)| \leq L.
\end{equation}

For this purpose, we use Theorem 2.2 in \cite{CaCr} that shows the existence of a universal constant $L'>0$ that does not depend on $M$ such that 
\begin{equation}\label{eq:laguerre_bound_L}
    \Big| \int_{0}^{+\infty} \varphi(\xi) p_{\Xi}(\xi) \mathrm{d}\xi - \sum\limits_{i=1}^{M} a_{i} \varphi(\xi_i) \Big| \leq L' \int_{0}^{+\infty} |\varphi^{'}(\xi) \sqrt{\xi} p_{\Xi}(\xi) | \mathrm{d} \xi .
\end{equation}
Now, take $\varphi(\xi) := \ell_{m}^{\alpha}(\xi)$. By using~\eqref{eq:laguerre_bound_L}, we get
\begin{align}
   \Big| \int_{0}^{+\infty} \ell_{m}^{\alpha}(\xi) p_{\Xi}(\xi) \mathrm{d}\xi - \sum\limits_{i=1}^{M} a_{i} \ell_{m}^{\alpha}(\xi_i)\Big| & \leq L' \int_{0}^{+\infty} | \ell_{m}^{' \alpha}(\xi)  \sqrt{\xi} p_{\Xi}(\xi) | \mathrm{d} \xi \nonumber \\
   & \leq L' \left(\int_{0}^{+\infty}  p_{\Xi}(\xi) \mathrm{d} \xi\right)^{1/2}\left(\int_{0}^{+\infty}  \ell_{m}^{' \alpha}(\xi)^2  \xi p_{\Xi}(\xi) \mathrm{d} \xi \right)^{1/2}\nonumber  \\
      & = L' \left(\int_{0}^{+\infty}  \ell_{m}^{' \alpha}(\xi)^2  \xi p_{\Xi}(\xi) \mathrm{d} \xi  \right)^{1/2}.
\end{align}
Now, for the un-normalized generalized Laguerre polynomial, we have
\begin{equation}
    L_{m}^{'\alpha}(x) = - L_{m-1}^{\alpha+1}(x) ,
\end{equation}
thus
\begin{equation}
    \ell_{m}^{'\alpha}(\xi) = - \sqrt{\frac{m!}{\Gamma(m+\alpha+1)}}L_{m-1}^{\alpha+1}(\xi)= - \sqrt{m} \ell_{m-1}^{\alpha+1}(\xi).
\end{equation}
\begin{equation}
    \int_{0}^{+\infty}  \ell_{m}^{' \alpha}(\xi)^2  \xi p_{\Xi}(\xi) \mathrm{d} \xi = \frac{\Gamma(\alpha+2)}{\Gamma(\alpha+1)} m = (\alpha+1)m.
\end{equation}
Therefore, by the fact that $\int_{0}^{+\infty} \ell^{\alpha}_{m}(\xi) p_{\Xi}(\xi) \mathrm{d}\xi = 0$ when $m > 0$, we have 
\begin{equation}
    \Big|\sum\limits_{i=1}^{M} a_{i} \ell_{m}^{\alpha}(\xi_i) \Big| \leq  L' \sqrt{(\alpha+1)m},
\end{equation}
and we take $L = \sqrt{\alpha +1}L'$.

Now, let $\varphi \in L_{2}(p_{\Xi})$, and write
\begin{equation}
    \varphi = \sum\limits_{m=0}^{+\infty} \langle \varphi , \ell_{m}^{\alpha} \rangle_{p_{\Xi}} \ell_{m}^{\alpha}.
\end{equation}
By the exactness of the quadrature for polynomials of order smaller than $2M-1$, we get
\begin{equation*}
    \Big| \int_{\mathbb{R}_{+}} \varphi(\xi) p_{\Xi}(\xi) \mathrm{d}\xi - \sum\limits_{i=1}^{M}a_{i} \varphi(\xi_i) \Big| \leq  \sum\limits_{m = 2M}^{+\infty} |\langle \varphi, \ell_{m}^{\alpha} \rangle_{p_{\Xi}}| |\sum\limits_{i=1}^{M} a_{i} \ell_{m}^{\alpha}(\xi_i)| \leq L \sum\limits_{m = 2M}^{+\infty} \sqrt{m}|\langle \varphi, \ell_{m}^{\alpha} \rangle_{p_{\Xi}}|.
\end{equation*}

% Let $m,n \in \mathbb{N}$. By the positivity of the coefficients $a_{i}$, we have
% \begin{align}
% |\sum\limits_{i=1}^N a_{i} \ell^{\alpha}_{m}(\xi_i)\ell^{\alpha}_{n}(\xi_i)| & = |\sum\limits_{i=1}^N \sqrt{a_{i}} \ell^{\alpha}_{m}(\xi_i) \sqrt{a_{i}} \ell^{\alpha}_{n}(\xi_i)| \\
% & \leq \sqrt{\sum\limits_{i=1}^N a_{i} \ell^{\alpha}_{m}(\xi_i)^2}\sqrt{\sum\limits_{i=1}^N a_{i} \ell^{\alpha}_{n}(\xi_i)^2}. 
% \end{align}

% therefore
% \begin{equation}
%     \Big|\sum\limits_{i=1}^{N} a_{i} \ell_{m}^{\alpha}(\xi_i)^2 \Big| \leq 2(\alpha+1) L' m +1.
% \end{equation}
\end{proof}

\subsection{A formula of covariance under orthogonal Monte Carlo}

\begin{lemma}\label{lemma:formula_omc}
Let $\theta_{1}, \dots, \theta_{d}$ be the columns of a random orthogonal matrix sampled from the Haar distribution on $\mathbb{O}_{d}(\mathbb{R})$. Let $k, k' \in \mathbb{N}^{*}$. When $k \neq k'$, we have
\begin{equation}
\mathbb{E} P_{k}(\langle v, \theta_{i} \rangle) P_{k'}(\langle v, \theta_{j} \rangle) = 0. 
\end{equation}
Otherwise,
\begin{equation}
\forall v \in \mathbb{S}^{d-1}, \:\:    \mathbb{E} P_{k}(\langle \theta_{i}, v \rangle )P_{k}(\langle \theta_{j}, v \rangle )  \leq \frac{2}{d-1} \int_{\mathbb{S}^{d-1}} P_{k}(v, \theta)^2 \mathrm{d} \pi_{\mathbb{S}^{d-1}}(\theta).
\end{equation}
\end{lemma}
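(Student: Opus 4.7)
The plan is to handle the substantive case $i \neq j$; when $i = j$ the claim reduces to the standard Gegenbauer orthogonality on $\mathbb{S}^{d-1}$ via the Hecke-Funk formula. For $i \neq j$, the joint law of $(\theta_{i}, \theta_{j})$ is the Haar measure on the Stiefel manifold $V_{2}(\mathbb{R}^{d})$ and is invariant under the diagonal action of $\mathbb{O}_{d}(\mathbb{R})$.

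Using the addition formula \eqref{eq:spherical_addition_formula} to expand both factors,
\begin{equation*}
P_{k}(\langle v, \theta_{i}\rangle)\, P_{k'}(\langle v, \theta_{j}\rangle) = \frac{1}{N(d,k)\,N(d,k')} \sum_{\ell,\ell'} Y_{k,\ell}(v)\, Y_{k',\ell'}(v)\, Y_{k,\ell}(\theta_{i})\, Y_{k',\ell'}(\theta_{j}),
\end{equation*}
the whole computation reduces to the cross-moment matrix $M_{k,k'}^{\ell,\ell'} := \mathbb{E}[Y_{k,\ell}(\theta_{i})\, Y_{k',\ell'}(\theta_{j})]$. The $\mathbb{O}_{d}(\mathbb{R})$-invariance of the joint law forces $M_{k,k'}$ to intertwine the representations of $\mathbb{O}_{d}(\mathbb{R})$ on the spherical-harmonic spaces of degrees $k$ and $k'$; since these are irreducible and non-isomorphic for $k \neq k'$, Schur's lemma gives $M_{k,k'} = 0$, proving the first claim. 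When $k = k'$, Schur forces $M_{k,k} = \mu_{k}\, I$ for some scalar $\mu_{k}$, and taking the trace together with the addition formula and the orthogonality $\langle \theta_{i}, \theta_{j}\rangle = 0$ yields $\mu_{k} = P_{k}(0)$. Substituting back and using $\sum_{\ell} Y_{k,\ell}(v)^{2} = N(d,k)\, P_{k}(1) = N(d,k)$ gives
\begin{equation*}
\mathbb{E}\big[P_{k}(\langle v, \theta_{i}\rangle)\, P_{k}(\langle v, \theta_{j}\rangle)\big] = \frac{P_{k}(0)}{N(d,k)} = P_{k}(0) \int_{\mathbb{S}^{d-1}} P_{k}(\langle v, \theta\rangle)^{2}\, \mathrm{d}\pi_{\mathbb{S}^{d-1}}(\theta).
\end{equation*}

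To close the inequality I would show $|P_{k}(0)| \leq 2/(d-1)$. For odd $k$, $P_{k}$ is odd so $P_{k}(0) = 0$; for $k = 2m$ with $m \geq 1$, combining the explicit Gegenbauer value at zero with the normalization $P_{k}(1) = 1$ reduces $|P_{2m}(0)|$ to the telescoping product $\prod_{j=0}^{m-1}(2j+1)/(d-1+2j)$, whose $j = 0$ factor contributes $1/(d-1)$ and whose later factors are each at most $1$ for $d \geq 3$. This gives $|P_{2m}(0)| \leq 1/(d-1) \leq 2/(d-1)$.

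The main obstacle is justifying the Schur step cleanly, since the paper does not otherwise invoke representation theory of $\mathbb{O}_{d}(\mathbb{R})$. A fully elementary fallback is to condition on $\theta_{i}$ and apply the Hecke-Funk formula on the great subsphere $\mathbb{S}^{d-1} \cap \theta_{i}^{\perp}$: the conditional expectation $\mathbb{E}[P_{k'}(\langle v, \theta_{j}\rangle) \mid \theta_{i}]$ is a polynomial of degree at most $k'$ in $a := \langle v, \theta_{i}\rangle$, so the outer integral against $P_{k}$ with weight $(1-a^{2})^{(d-3)/2}$ vanishes by Gegenbauer orthogonality whenever $k > k'$; exchangeability of $(\theta_{i}, \theta_{j})$ handles $k < k'$, and the $k = k'$ case is closed by matching leading coefficients to recover the same $P_{k}(0)/N(d,k)$.
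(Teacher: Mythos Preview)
Your argument is correct and takes a genuinely different route from the paper. The paper conditions on $\theta_i$, introduces the averaging operator $T\psi(\theta)=\int_{\mathbb{S}^{d-1}\cap\theta^\perp}\psi\,\mathrm{d}\pi$, and then \emph{cites} an external result (Theorem~3 in \cite{GoLeTuZa17}) to assert that $T$ acts as a scalar on each harmonic degree, with eigenvalue at most $(d-1)^{-1}$ for $d\geq 4$ and at most $1$ for $d\in\{2,3\}$. You instead use $\mathbb{O}_d$-invariance of the Stiefel law and Schur's lemma to compute the covariance directly, identifying the scalar explicitly as $P_k(0)$, and then bound $|P_k(0)|$ via the three-term recursion~\eqref{eq:polys_rec_relation}. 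This is more self-contained---no external reference needed---and more informative, since it names the exact constant: the operator eigenvalue the paper quotes from \cite{GoLeTuZa17} \emph{is} $P_k(0)$. Your fallback (condition on $\theta_i$, Funk--Hecke on the subsphere $\mathbb{S}^{d-1}\cap\theta_i^\perp$) is precisely the paper's route, except that where the paper defers to a citation you propose to close the spectral step by hand.

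Two small remarks. First, the lemma is really only about $i\neq j$: for $i=j$ and $k=k'$ the left side equals $\int P_k^2$, which exceeds $\tfrac{2}{d-1}\int P_k^2$ once $d\geq 4$; the paper's own proof opens with ``Let $i,j\in[d]$ such that $i\neq j$,'' so your focus on that case is right, but your one-line dismissal of $i=j$ as ``standard orthogonality'' slightly overclaims for the second bound. Second, your product bound $|P_{2m}(0)|=\prod_{j=0}^{m-1}\tfrac{2j+1}{d-1+2j}\leq \tfrac{1}{d-1}$ needs $d\geq 3$; for $d=2$ the product is identically~$1$, which still satisfies $1\leq 2/(d-1)=2$. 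This matches exactly why the paper carries the factor $2$ rather than $1$ in the statement.
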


\begin{proof}

    Let $i,j \in [d]$ such that $i \neq j$. Since $\theta_{i}$ and $\theta_{j}$ are the columns of an orthogonal matrix, they are orthogonal and they follow marginally the uniform distribution on $\mathbb{S}^{d-1}$. Now, given $\theta_{i}$, the random variable $\theta_{j}$ follows the uniform distribution on $\mathbb{S}^{d-1} \cap \theta_{i}^{\perp}$, denoted $\pi_{\mathbb{S}^{d-1}(\theta_i ^{\perp})}$. Therefore, given $\psi_{1}, \psi_{2}: \mathbb{S}^{d-1} \rightarrow \mathbb{R}$, we have
    \begin{align}\label{eq:ex_equal_operator}
        \mathbb{E} \psi_{1}(\theta_i) \psi_{2}(\theta_j) & = \int_{\mathbb{S}^{d-1}} \psi_{1}(\theta_{i}) \int_{\mathbb{S}^{d-1} \cap \theta_{i}^{\perp}} \psi_{2}(\theta_j) \mathrm{d} \pi_{\mathbb{S}^{d-1} \cap \theta_{i}^{\perp}}(\theta_j) \mathrm{d} \pi_{\mathbb{S}^{d-1}}(\theta_i)  \nonumber \\
         & = \int_{\mathbb{S}^{d-1}} \psi_{1}(\theta_{i}) (T \psi_{2} )(\theta_i) \mathrm{d} \pi_{\mathbb{S}^{d-1}}(\theta_i),
    \end{align}
where $T: L_{2}(\pi_{\mathbb{S}^{d-1}}) \rightarrow L_{2}(\pi_{\mathbb{S}^{d-1}})$ is defined by
\begin{equation}
    (T \psi)(\theta) := \int_{\mathbb{S}^{d-1} \cap \theta^{\perp}} \psi_{2}(\theta')  \mathrm{d} \pi_{\mathbb{S}^{d-1} \cap \theta^{\perp}}(\theta').
\end{equation}
Now, let $k,k' \in \mathbb{N}^{*}$. By~\eqref{eq:ex_equal_operator}, we have
\begin{equation}
\forall v \in \mathbb{S}^{d-1}, \:\:    \mathbb{E} P_{k}(\langle v, \theta_i \rangle) P_{k'}(\langle v, \theta_j \rangle ) = \int_{\mathbb{S}^{d-1}} P_{k}(v,\theta_{i}) (T P_{k'}(v,. )(\theta_i)) \mathrm{d} \pi_{\mathbb{S}^{d-1}}(\theta_i).
\end{equation}
By Theorem 3 in \cite{GoLeTuZa17}, the self-adjoint operator $T$ is bounded and has pure point spectrum. Moreover, $T$ is a multiple of the identity on $\mathcal{P}_{\ell}$ the space of harmonic homogeneous polynomials of degree $\ell$. 
Therefore $T P_{k'}(\langle v,. \rangle)$ is a homogenous polynomial of degree $k'$. Thus, when $k \neq k'$, we have
\begin{equation}
    \int_{\mathbb{S}^{d-1}} P_{k}(v,\theta_{i}) (T P_{k'}(v,. )(\theta_i) \mathrm{d} \pi_{\mathbb{S}^{d-1}}(\theta_i) = 0.
\end{equation}
On the other hand, when $k=k'$, we use again Theorem 3 in \cite{GoLeTuZa17}  which states that $P_{k'}(v,.)$ is an eigenfunction of $T$. Moreover, when $d \geq 4$  the corresponding eigenvalue is smaller than $(d-1)^{-1}$, so that  
\begin{equation}
\int_{\mathbb{S}^{d-1}} P_{k}(v,\theta_{i}) (T P_{k'}(v,. )(\theta_i) \mathrm{d} \pi_{\mathbb{S}^{d-1}}(\theta_i)  \leq \frac{1}{d-1} \int_{\mathbb{S}^{d-1}} P_{k}(v,\theta_{i})^2 \mathrm{d} \pi_{\mathbb{S}^{d-1}}(\theta_i).
\end{equation}
Finally, for $d =2$ and $d=3$, the corresponding eigenvalue is smaller or equal to $1$, so that 
\begin{equation}
\int_{\mathbb{S}^{d-1}} P_{k}(v,\theta_{i}) (T P_{k'}(v,. )(\theta_i) \mathrm{d} \pi_{\mathbb{S}^{d-1}}(\theta_i)  \leq \frac{2}{d-1} \int_{\mathbb{S}^{d-1}} P_{k}(v,\theta_{i})^2 \mathrm{d} \pi_{\mathbb{S}^{d-1}}(\theta_i).
\end{equation}

% Now, by the addition formula ~\eqref{eq:spherical_addition_formula}, we have
% \begin{equation}
%     N(d,k)^{2} P_{k}(v,\theta)^2 = \sum\limits_{j_1=1}^{N(d,k)} \sum\limits_{j_2=1}^{N(d,k)} Y_{k,j_1}(v)Y_{k,j_2}(v) Y_{k,j_1}(\theta)Y_{k,j_2}(\theta),
% \end{equation}
% so that
% \begin{equation}
%   \int_{\mathbb{S}^{d-1}}  N(d,k)^{2} P_{k}(v,\theta)^2\mathrm{d} \pi_{\mathbb{S}^{d-1}}(\theta_i) = \sum\limits_{j =1}^{N(d,k)} Y_{k,j}(v)Y_{k,j}(v) = N(d,k) P_{k}(1) = N(d,k),
% \end{equation}
% thus
% \begin{equation}
%     \int_{\mathbb{S}^{d-1}} P_{k}(v,\theta_{i})^2 \mathrm{d} \pi_{\mathbb{S}^{d-1}}(\theta_i) = \frac{1}{N(d,k)}.
% \end{equation}

\end{proof}

\subsection{Integral involving generalized Laguerre polynomials}
\begin{lemma}\label{lem:moment_laguerre}
    The generalized Laguerre polynomials satisfies the following equation
    \begin{align}
        \int_{\mathbb{R}^+} x^{\alpha'-1}e^{-x}L_n^\alpha(x)\mathrm{d}x &= \binom{\alpha-\alpha'+n}{n}\Gamma(\alpha')
    \end{align}
    for arbitrary $\alpha, \alpha'\in\mathbb{N}$.
\end{lemma}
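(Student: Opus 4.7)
The plan is to use the generating function for the generalized Laguerre polynomials to reduce the identity to matching coefficients in a power series. Concretely, recall the classical generating function
\begin{equation*}
\sum_{n=0}^{\infty} L_{n}^{\alpha}(x)\, t^{n} \;=\; \frac{1}{(1-t)^{\alpha+1}}\, \exp\!\Big(\!-\frac{xt}{1-t}\Big), \qquad |t|<1.
\end{equation*}
I would multiply both sides by $x^{\alpha'-1} e^{-x}$ and integrate over $\mathbb{R}_{+}$. On the left, for $|t|$ small the series of integrals can be exchanged with the sum (the $L_n^\alpha$ are polynomials and the integrand decays fast enough that a dominated-convergence / geometric majorant argument applies), producing the generating function of the integrals we want to compute.

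On the right, the $e^{-x}$ prefactor combines with the exponential in the generating function to yield $\exp(-x/(1-t))$. The $x$-integral is then a pure Gamma integral:
\begin{equation*}
\int_{0}^{\infty} x^{\alpha'-1} \exp\!\Big(\!-\frac{x}{1-t}\Big) \mathrm{d}x \;=\; (1-t)^{\alpha'}\,\Gamma(\alpha').
\end{equation*}
Combining, the right-hand side collapses to $\Gamma(\alpha')\,(1-t)^{-(\alpha-\alpha'+1)}$.

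The final step is to expand $(1-t)^{-(\alpha-\alpha'+1)}$ using the generalized binomial series
\begin{equation*}
\frac{1}{(1-t)^{\alpha-\alpha'+1}} \;=\; \sum_{n=0}^{\infty} \binom{\alpha-\alpha'+n}{n}\, t^{n},
\end{equation*}
and then identify the coefficient of $t^{n}$ on both sides, which directly gives the claimed identity. The only point that requires a bit of care is the legitimacy of swapping sum and integral on the left; this is the main (mild) obstacle, and it is handled by noting that for $|t|<1/2$, say, one has a geometric bound $|L_{n}^{\alpha}(x) t^{n}|$ integrable against $x^{\alpha'-1}e^{-x}$ uniformly in $n$, so Fubini applies and the resulting polynomial identity in $\alpha,\alpha'$ extends by analyticity.
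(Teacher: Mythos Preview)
Your argument via the generating function is correct and gives the identity cleanly. The paper proceeds differently: it invokes the Rodrigues representation
\[
L^\alpha_n(x) = \frac{x^{-\alpha}e^{x}}{n!}\,\frac{\mathrm{d}^n}{\mathrm{d}x^n}\bigl(e^{-x}x^{n+\alpha}\bigr),
\]
substitutes this into the integral, and integrates by parts $n$ times. Each step shifts a derivative onto $x^{\alpha'-1-\alpha}$; the boundary terms vanish because near $0$ the differentiated factor still carries enough powers of $x$, and at infinity the exponential dominates. After $n$ steps one is left with
\[
(-1)^n\,\frac{(\alpha'-1-\alpha)\cdots(\alpha'-n-\alpha)}{n!}\int_0^\infty x^{\alpha'-1}e^{-x}\,\mathrm{d}x
=\binom{\alpha-\alpha'+n}{n}\Gamma(\alpha').
\]

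So the two proofs rest on different classical identities for $L_n^\alpha$. The Rodrigues/IBP route is entirely elementary and self-contained once one accepts the Rodrigues formula; your generating-function route packages all $n$ simultaneously and reduces the computation to a single Gamma integral plus a binomial expansion, which is arguably tidier. The only place your write-up is a bit loose is the interchange of sum and integral: rather than a vague ``geometric majorant'', it is cleaner to use the standard bound $|L_n^\alpha(x)|\le \tfrac{\Gamma(n+\alpha+1)}{n!\,\Gamma(\alpha+1)}e^{x/2}$ for $x\ge 0$, which makes $\sum_n |t|^n\int_0^\infty x^{\alpha'-1}e^{-x}|L_n^\alpha(x)|\,\mathrm{d}x$ converge for $|t|<1$ since $\Gamma(n+\alpha+1)/n!$ grows only polynomially in $n$. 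The analytic-continuation remark at the end is unnecessary here because the statement only concerns $\alpha,\alpha'\in\mathbb{N}$, but it does no harm.
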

\begin{proof}
    The generalized Laguerre polynomial can be written into
    \begin{equation}
        L^\alpha_n(x) = \frac{x^{-\alpha}e^x}{n!}\frac{\mathrm{d}^n}{\mathrm{d}x^n}(e^{-x}x^{n+\alpha}).
    \end{equation}

    Integration by parts n times yields
    \begin{align}
        &\int_{\mathbb{R}^+} x^{\alpha'-1}e^{-x}L_n^\alpha(x)\mathrm{d}x=\frac{1}{n!}\int_{\mathbb{R}^+}x^{\alpha'-1-\alpha}\frac{\mathrm{d}^n}{\mathrm{d}x^n}(e^{-x}x^{n+\alpha})\mathrm{d}x\\
        &\stackrel{\text{IBP}}{=}\underbrace{\frac{1}{n!}\left[x^{\alpha'-1-\alpha}\frac{\mathrm{d}^{n-1}}{\mathrm{d}x^{n-1}}(e^{-x}x^{n+\alpha})\right]^\infty_0 }_{=0}-\frac{\alpha'-1-\alpha}{n!}\int_{\mathbb{R}^+}x^{\alpha'-2-\alpha}\frac{\mathrm{d}^{n-1}}{\mathrm{d}x^{n-1}}(e^{-x}x^{n+\alpha})\mathrm{d}x\\
        &\text{conduct integration by parts n times}\\
        &=0+(-1)^n\frac{(\alpha'-1-\alpha)\dots(\alpha'-n-\alpha)}{n!}\int_{\mathbb{R}^+}e^{-x}x^{\alpha'-1}\mathrm{d}x\\
        &=\binom{\alpha-\alpha'+n}{n}\Gamma(\alpha').
    \end{align}
    where
    \begin{equation}
        \binom{m-n -1  }{m} := \frac{(m-n-1)(m-n-2)\dots(-n)}{m!}
    \end{equation}
    is equal to $0$ when $0\leq n\leq m-1$ and $(-1)^m\binom{n}{m}$ when $n \geq m$. 
\end{proof}

\section{Additional experiments}
\subsection{Radial rule analysis}\label{sec:optimal_radial_nodes}
When designing the spherical-radial rule, a key part is to find the optimal number of radial quadrature nodes. To answer this question, we numerically computed the optimal number of radial nodes for different kernel bandwidth and dataset dimension, which is explained well by Theorem~\ref{thm:main_theorem_OMC}. When the number of nodes is too small, all the quadrature nodes concentrated in several hyperspheres and ignored other regions. On the other hand, when the number of nodes is too big, most of the quadrature weights are approaching zero and have trivially to the evaluation. 

In Figure~\ref{fig:radial_nodes_analysis} and Figure~\ref{fig:radial_nodes_analysis_real}, we showed the error in relative Frobenius norm for approximating $5000\times5000$ kernel matrices constructed from synthetic and real datasets. The result is averaged over 100 runs. When dimension is small (4 or 8) and total number of features is within $1000$, the optimal number of radial nodes is 2. When dimension $\approx 16$, the optimal number of radial nodes becomes 1 when feature number goes beyond $100$. When dimension is bigger ($>16$), the optimal number of radial nodes is 1. In kernel approximation and prediction tasks, we use this as a guide to choose radial design, which yields desirable result. 

Why we observe such change of optimal design when dimension increases? Recall that the upper bound for the approximation error given in Equation~\eqref{eq:upper_bound_error_thm_1} consists of two parts: the radial error and the spherical error. Keep $M_R$ fixed, the spherical error dominates when the number of spherical nodes $M_S$ is small; when $M_S$ grows, the spherical error decays with $1/M_S$ so we would see the error decays as feature number increases. But after some time the error reaches a plateau because the radial error stays unchanged. When dimension is higher, the spherical error is bigger and we need to take the number of features very large to observe the plateau.

When we keep the number of total features fixed and increases $M_R$, the increase in the spherical error is faster than the decay in the radial error, hence in general when $M_R\geq 2$, larger $M_R$ yields bigger error. The idea that we only need one radial node in high dimension is in line with the concentration of Gaussian measure in high dimension.

\begin{figure} 
\hfill small $\sigma$ \hfill\hfill medium $\sigma$ \hfill\hfill large $\sigma$ \hfill{} % fake header row 
\medskip

\begin{minipage}{0.33\textwidth}
\includegraphics[width=\linewidth]{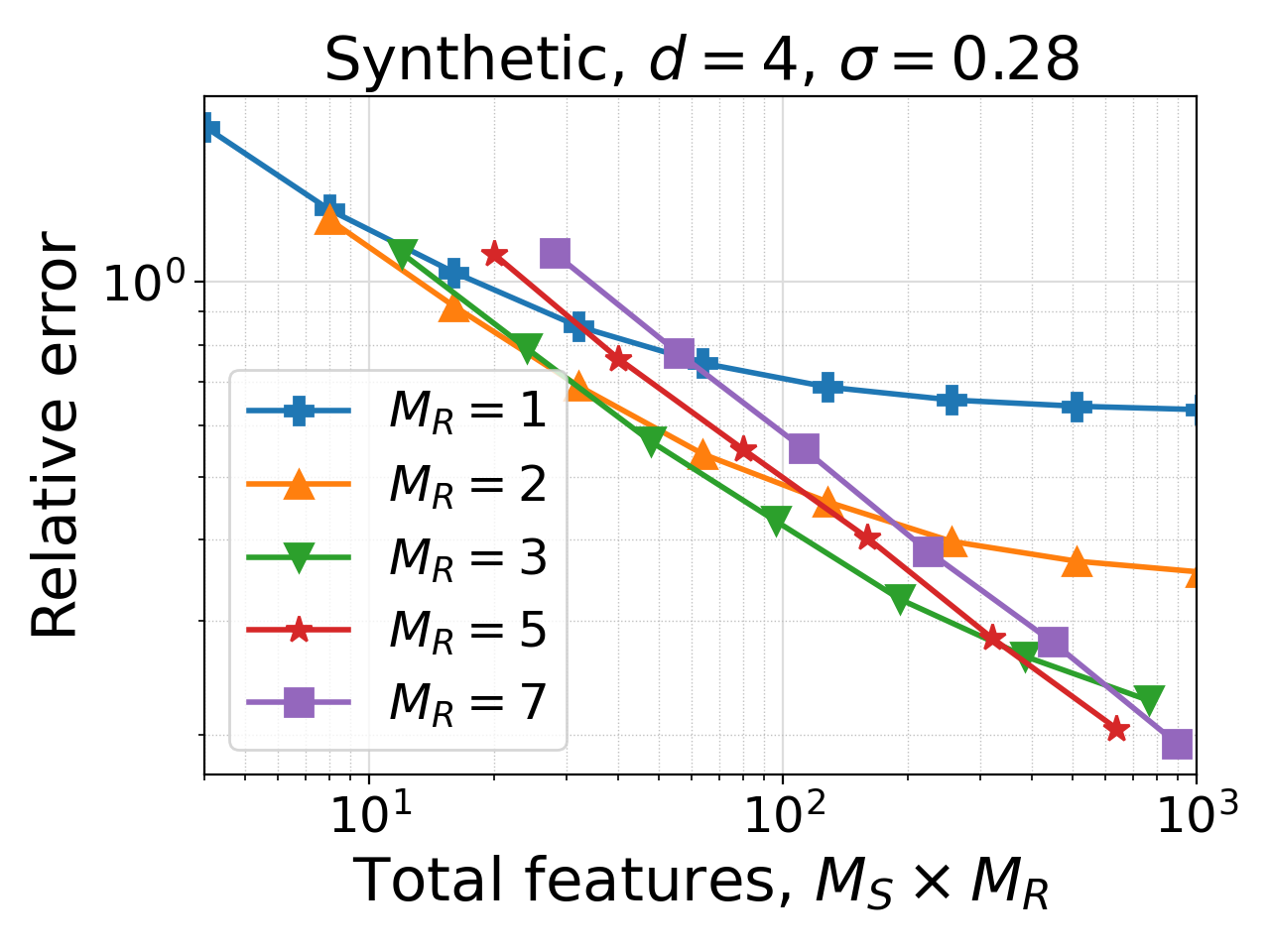} 
\includegraphics[width=\linewidth]{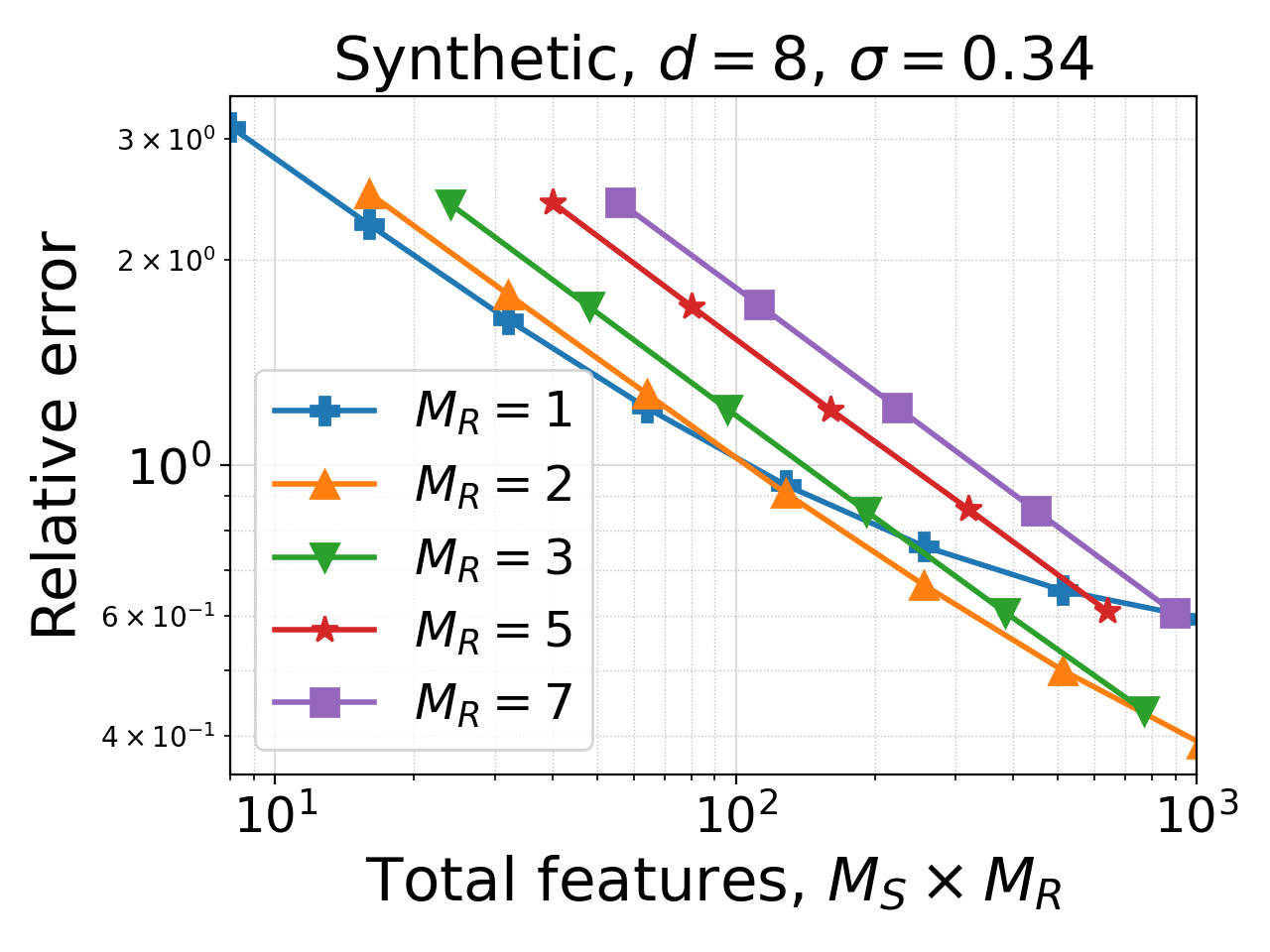} 
\includegraphics[width=\linewidth]{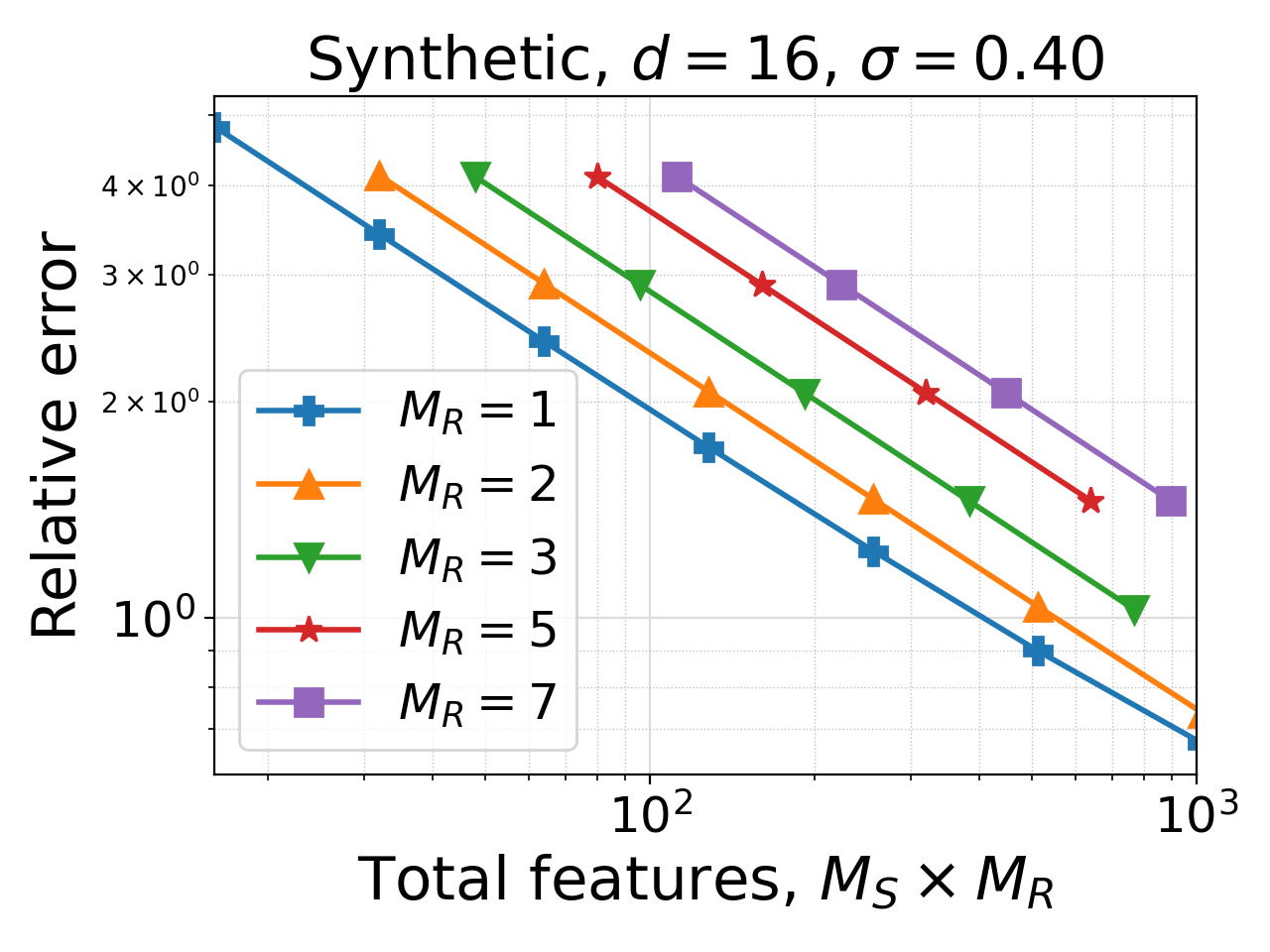} 
\includegraphics[width=\linewidth]{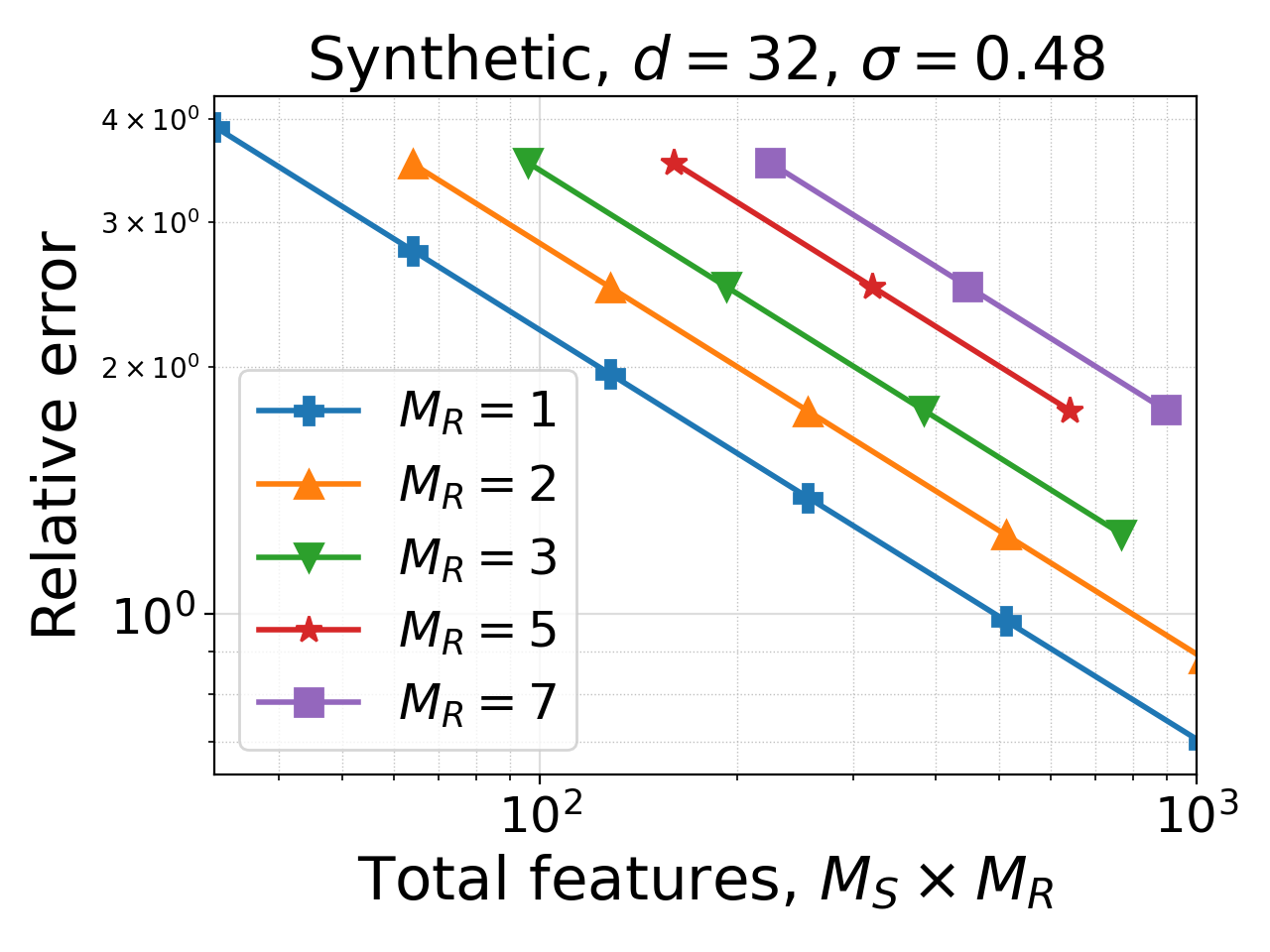}
\end{minipage}\hfill
\begin{minipage}{0.33\textwidth}
\includegraphics[width=\linewidth]{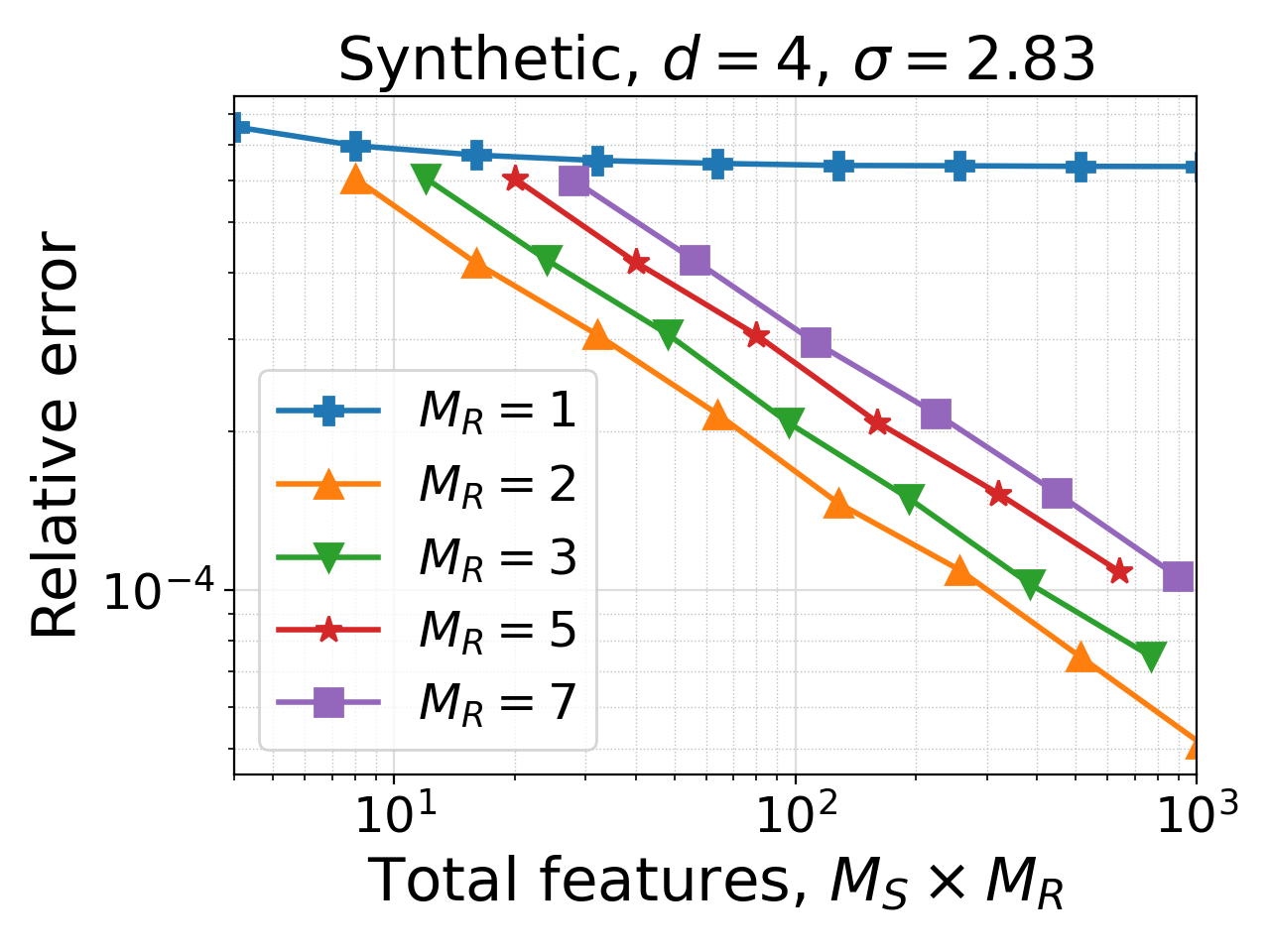}
\includegraphics[width=\linewidth]{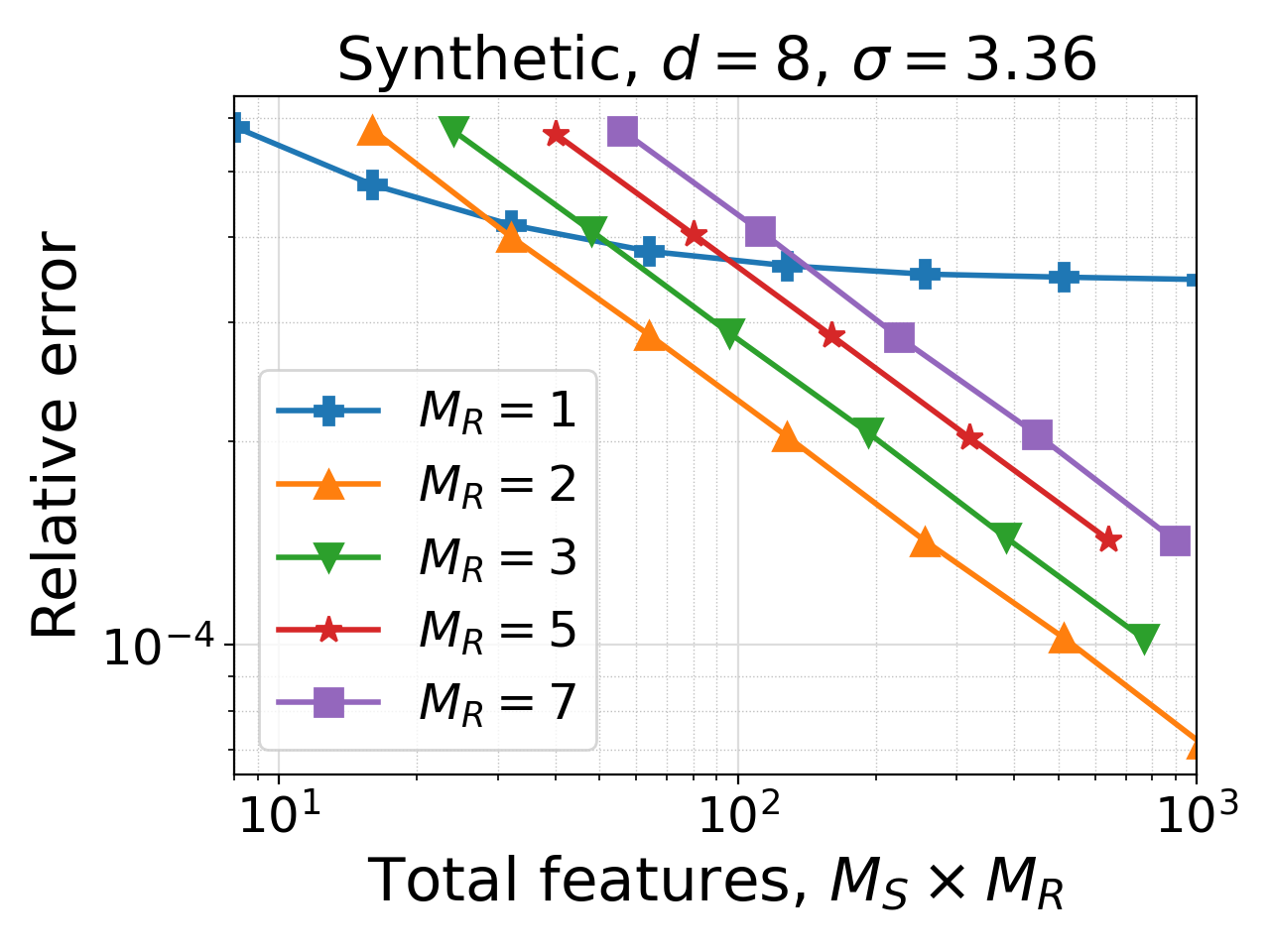}
\includegraphics[width=\linewidth]{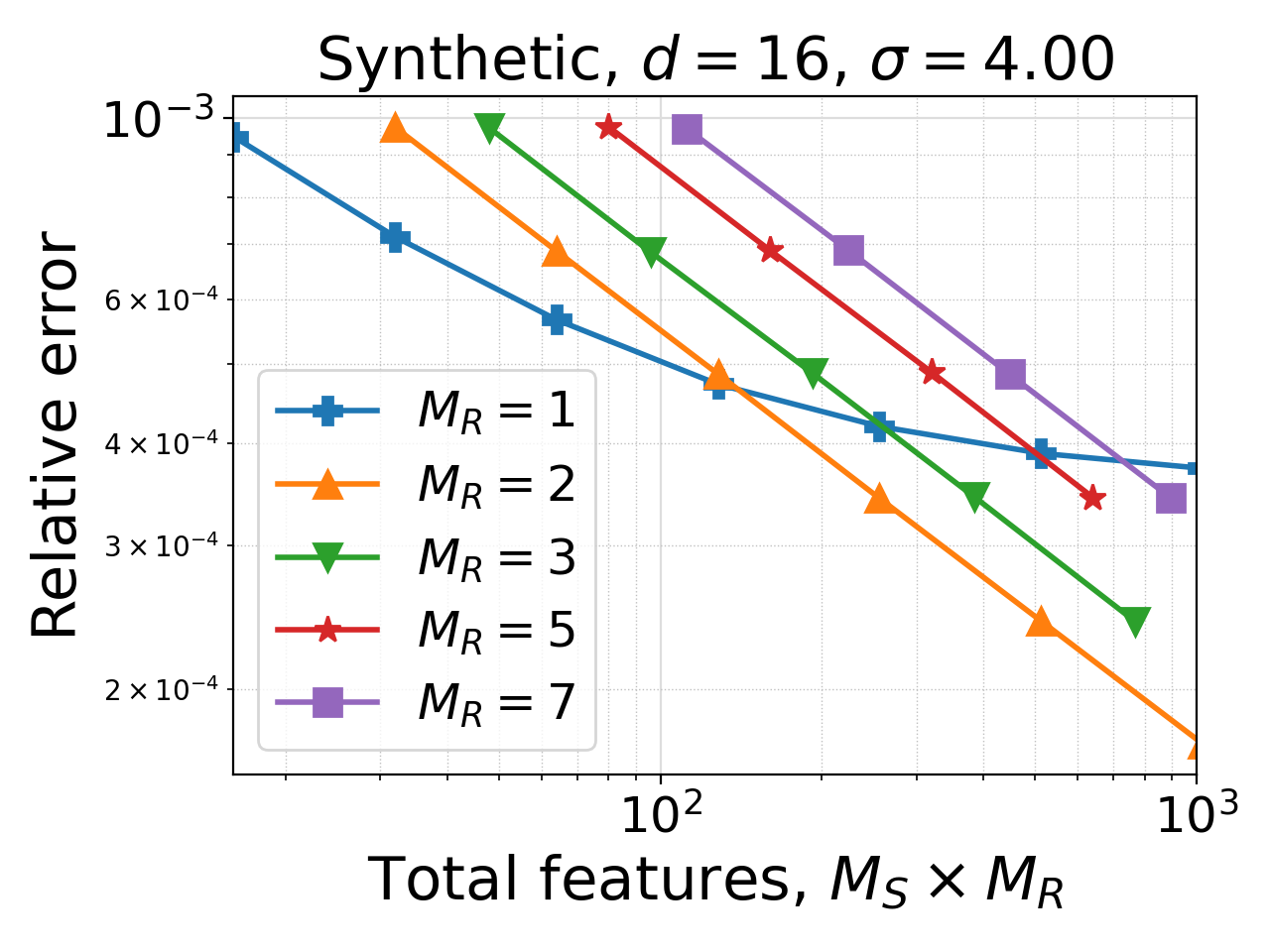}
\includegraphics[width=\linewidth]{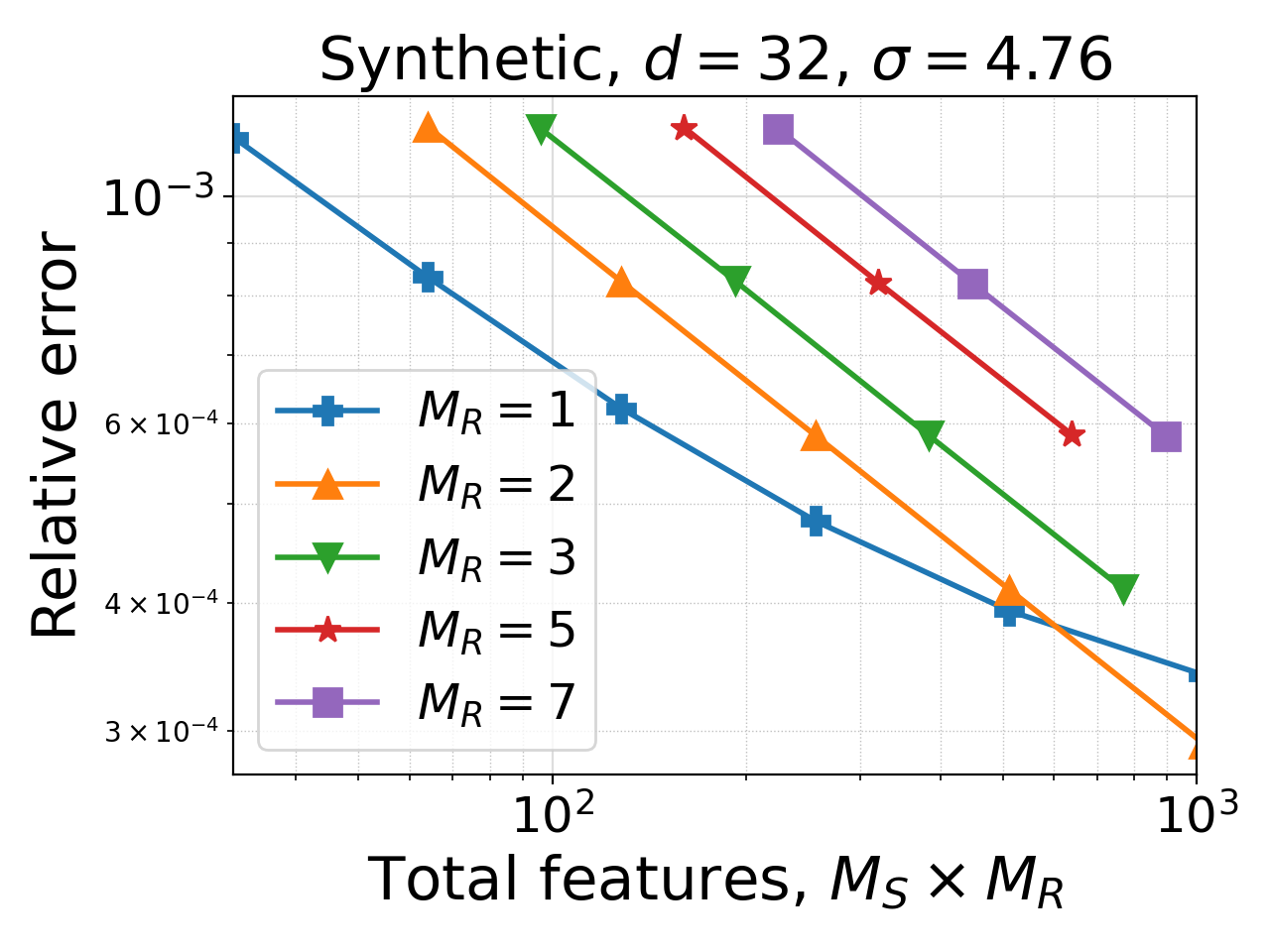}
\end{minipage}\hfill
\begin{minipage}{0.33\textwidth}
\includegraphics[width=\linewidth]{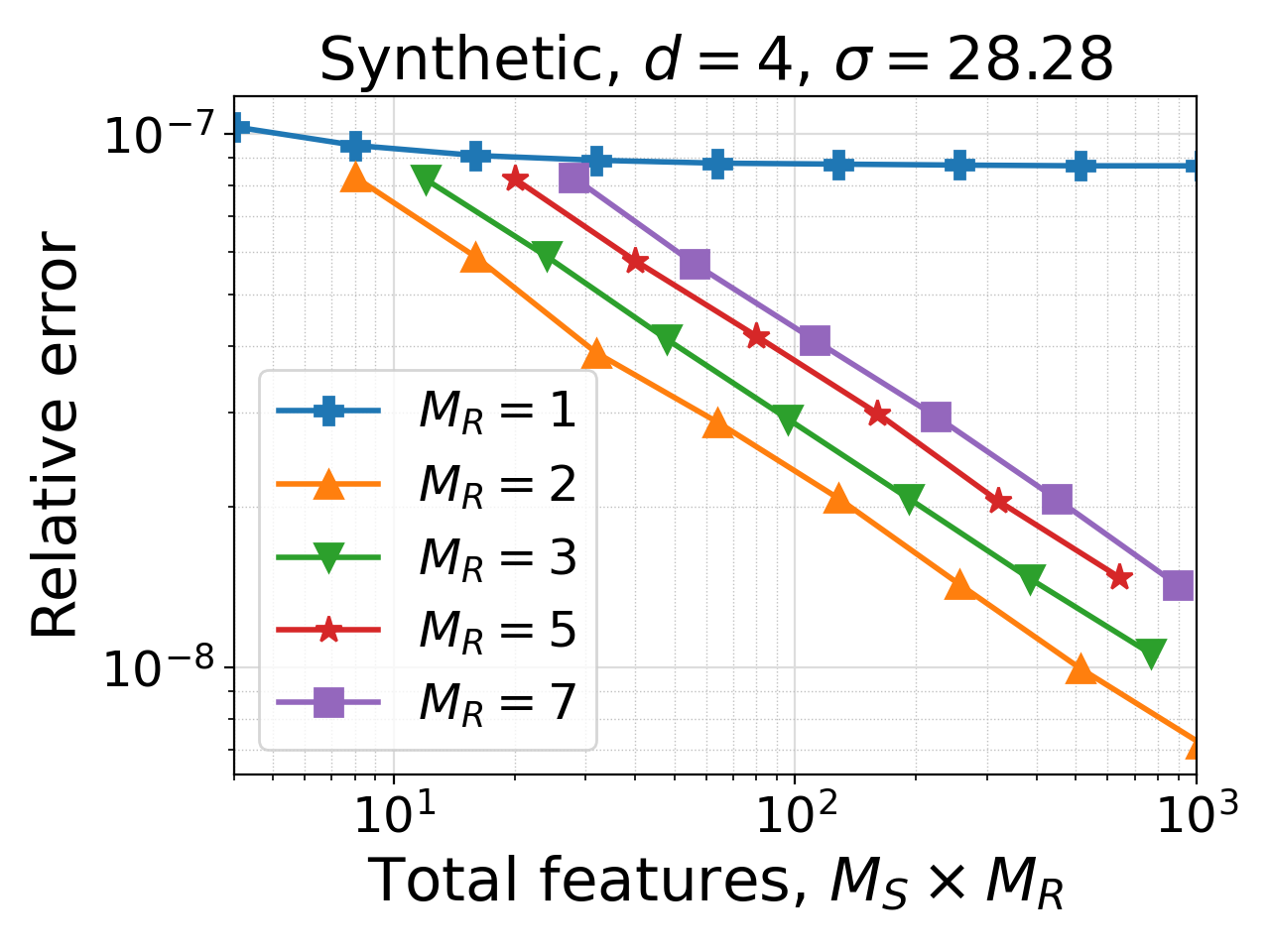}
\includegraphics[width=\linewidth]{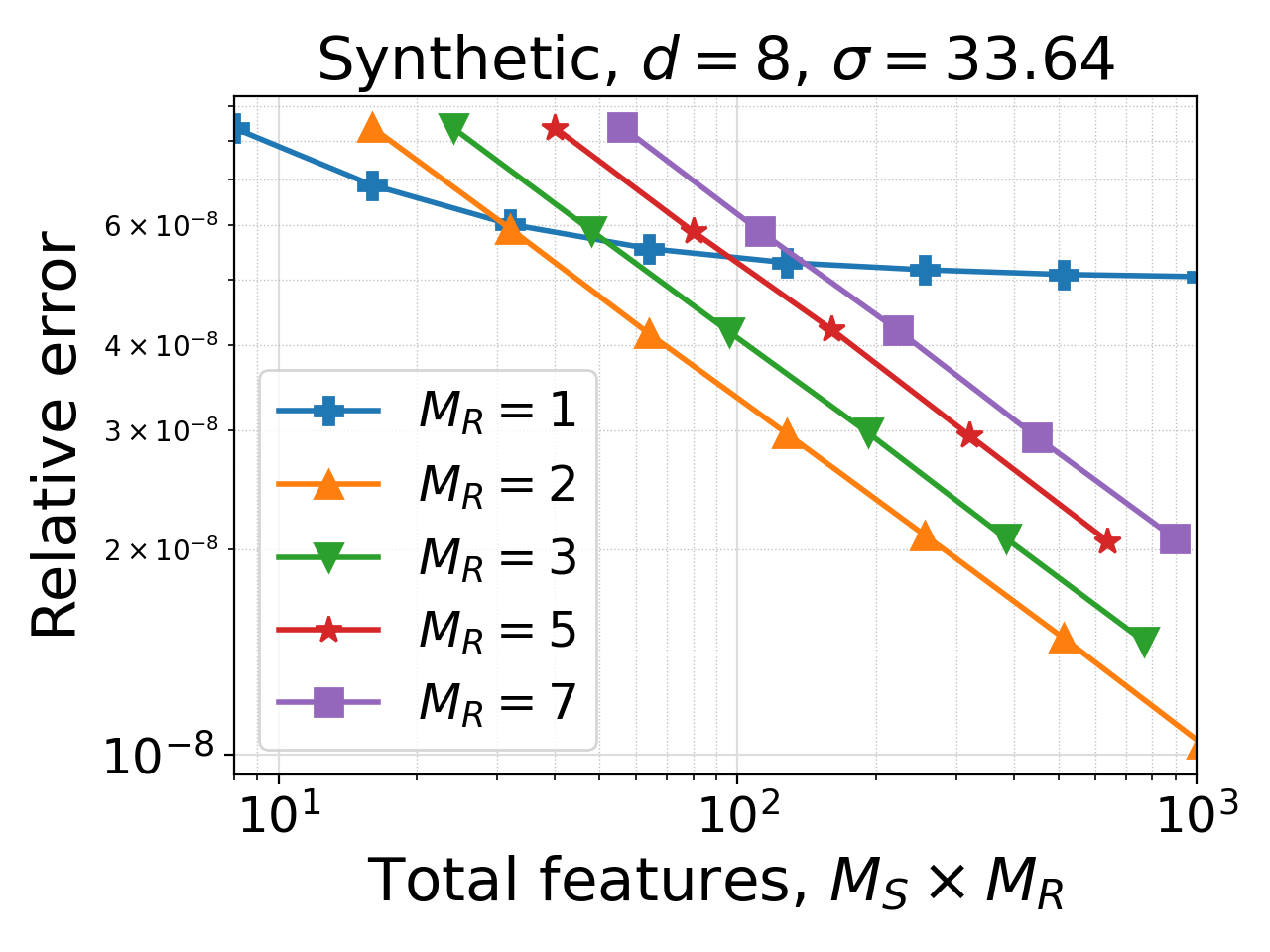}
\includegraphics[width=\linewidth]{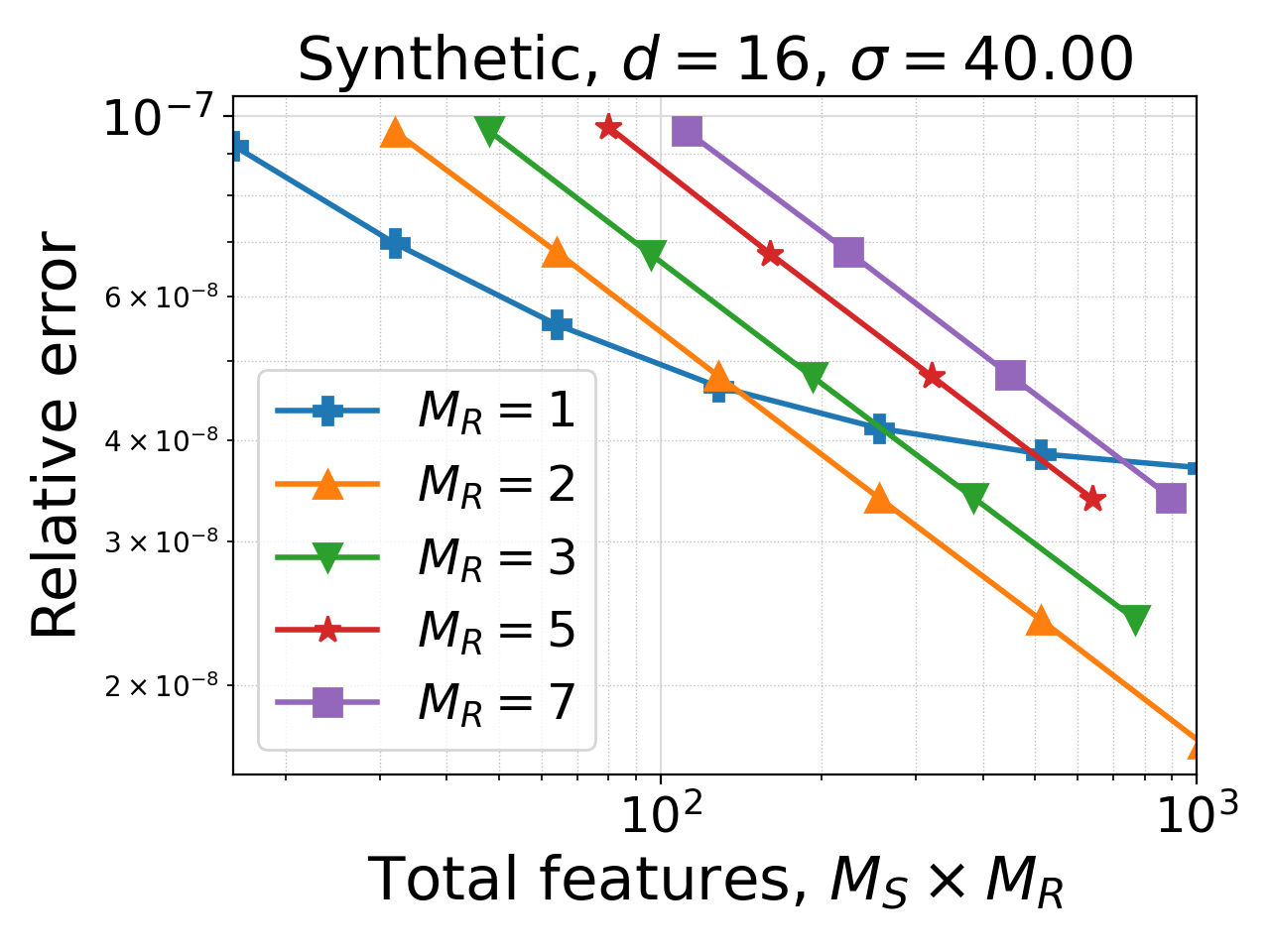}
\includegraphics[width=\linewidth]{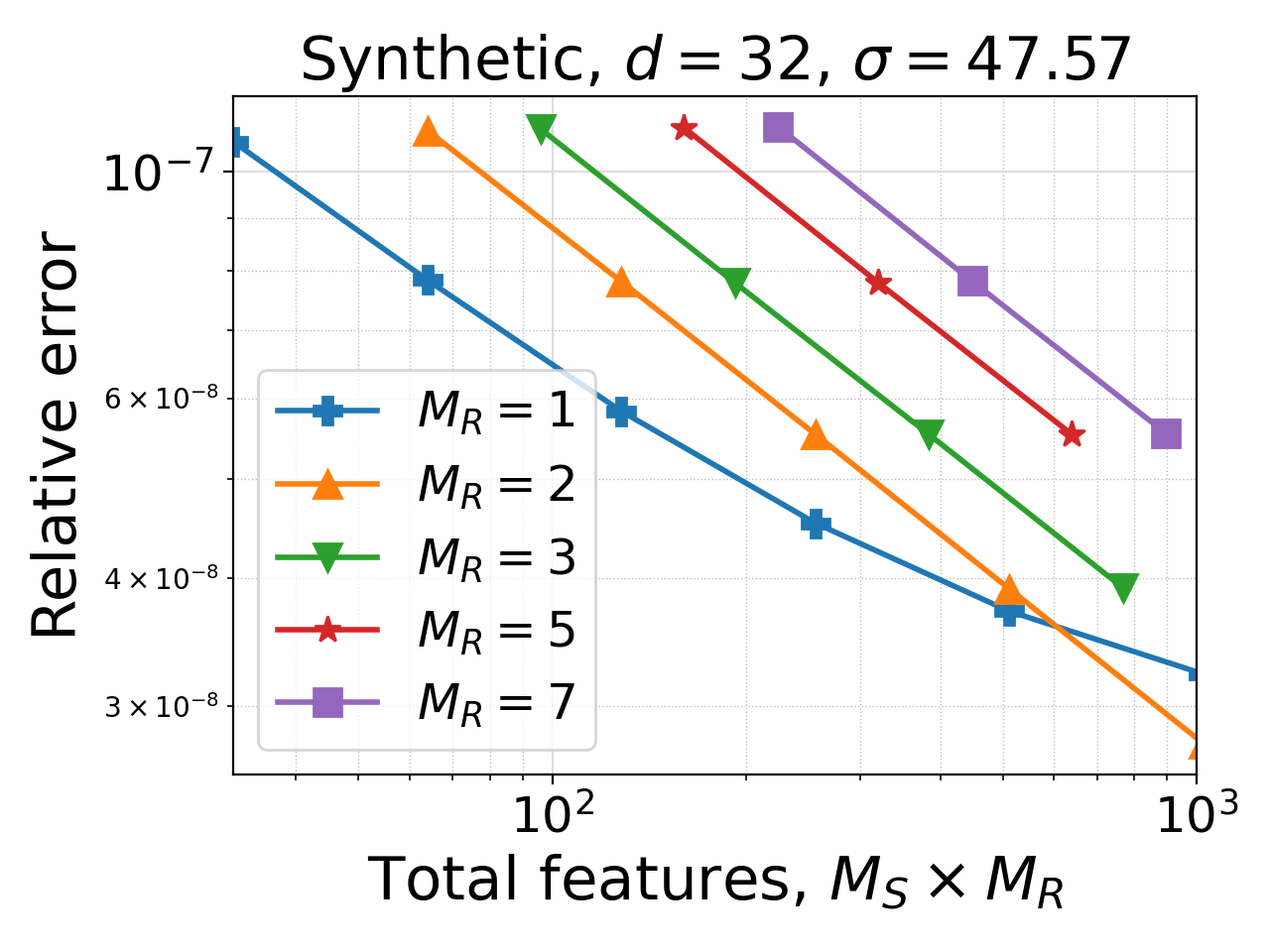}
\end{minipage}
\caption{The approximation error w.r.t. total number of features under different combination of kernel bandwidth and dataset dimension. From top to bottom, dimension increases from 4 to 32; from left to right, kernel bandwidth increases from $0.4*d^{1/4}$ to $20*d^{1/4}$.}
\label{fig:radial_nodes_analysis}
\end{figure}

\begin{figure} 
\hfill small $\sigma$ \hfill\hfill medium $\sigma$ \hfill\hfill large $\sigma$ \hfill{} % fake header row 
\medskip

\begin{minipage}{0.33\textwidth}
\includegraphics[width=\linewidth]{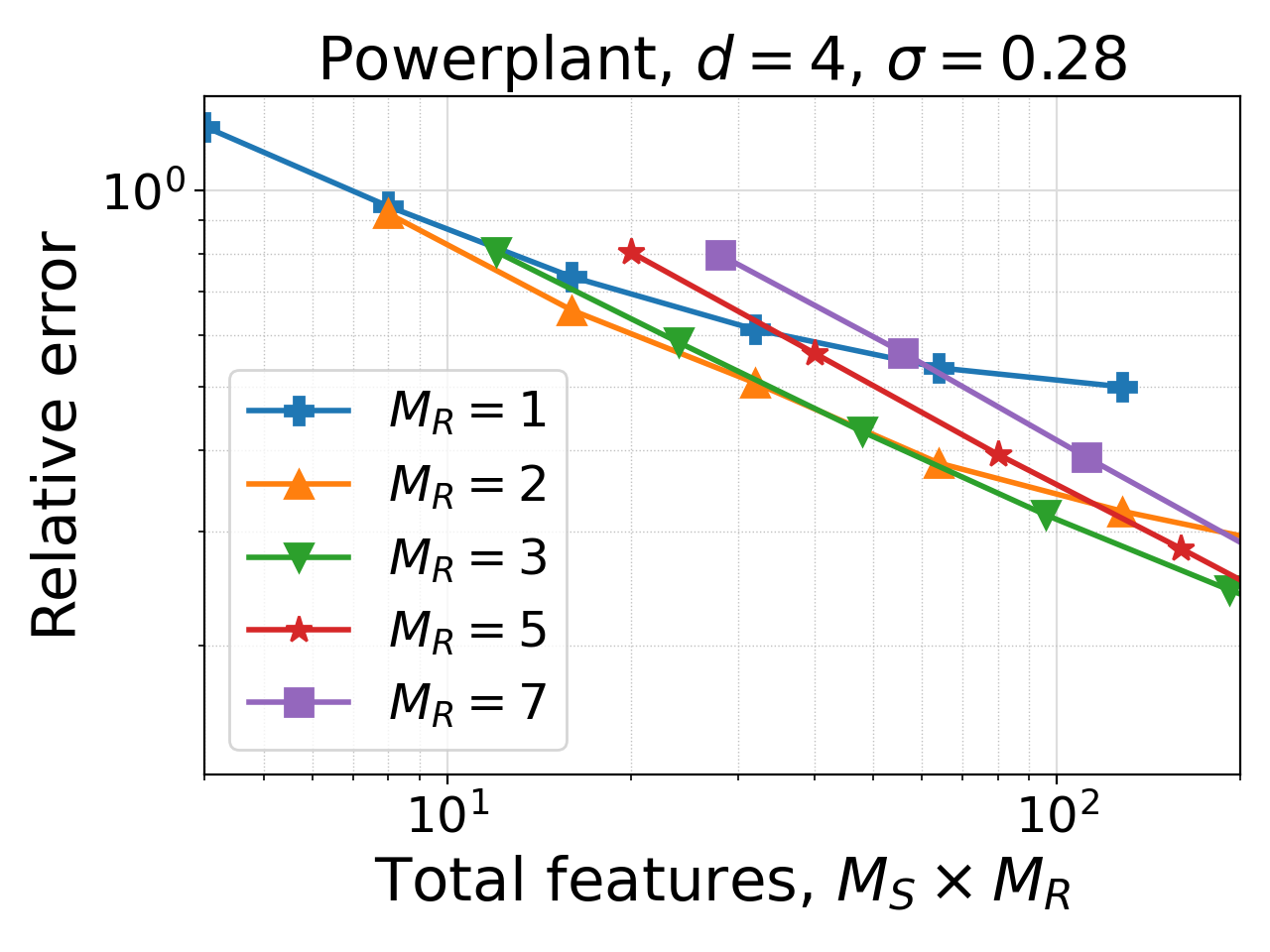} 
\includegraphics[width=\linewidth]{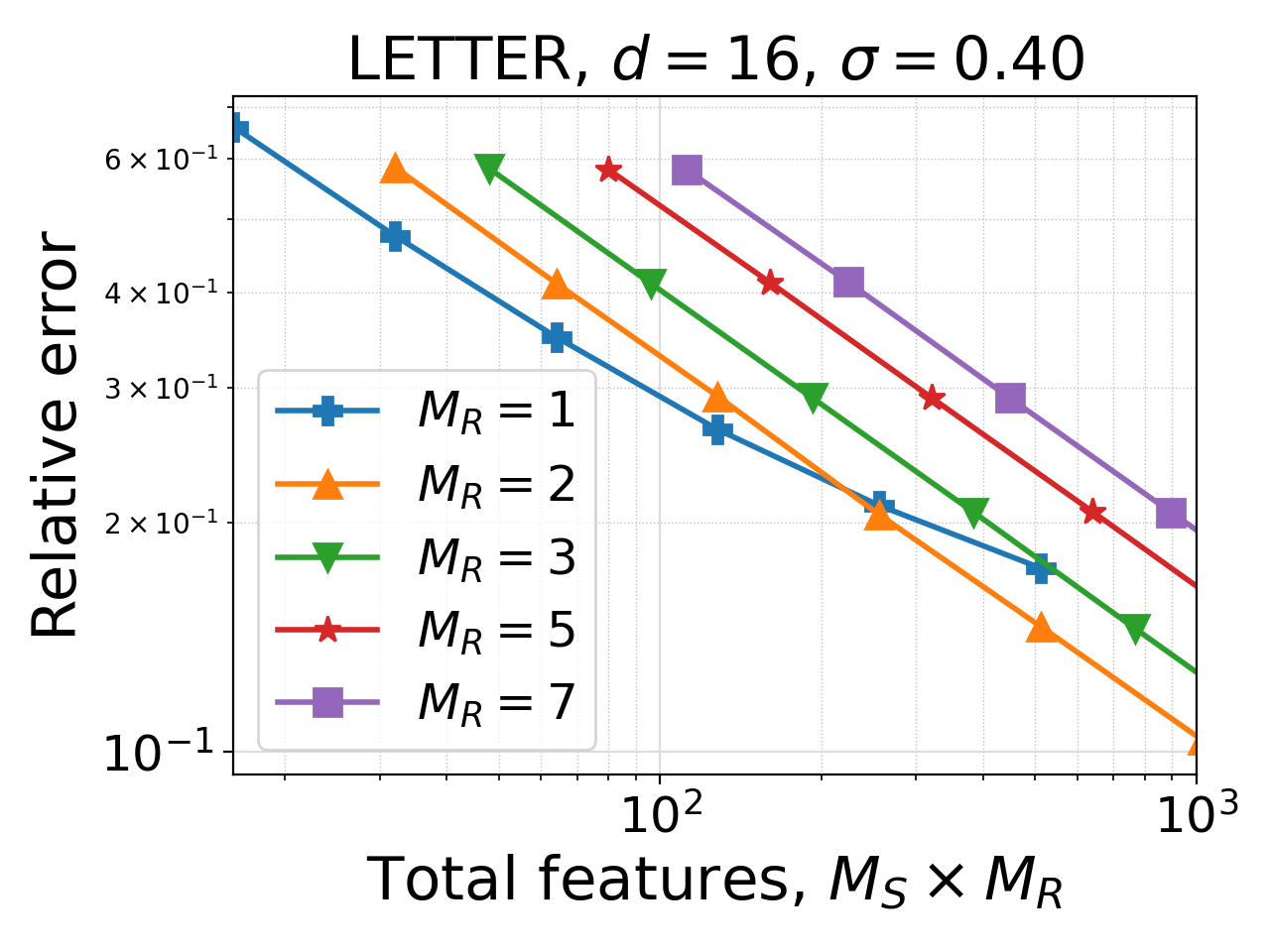} 
\includegraphics[width=\linewidth]{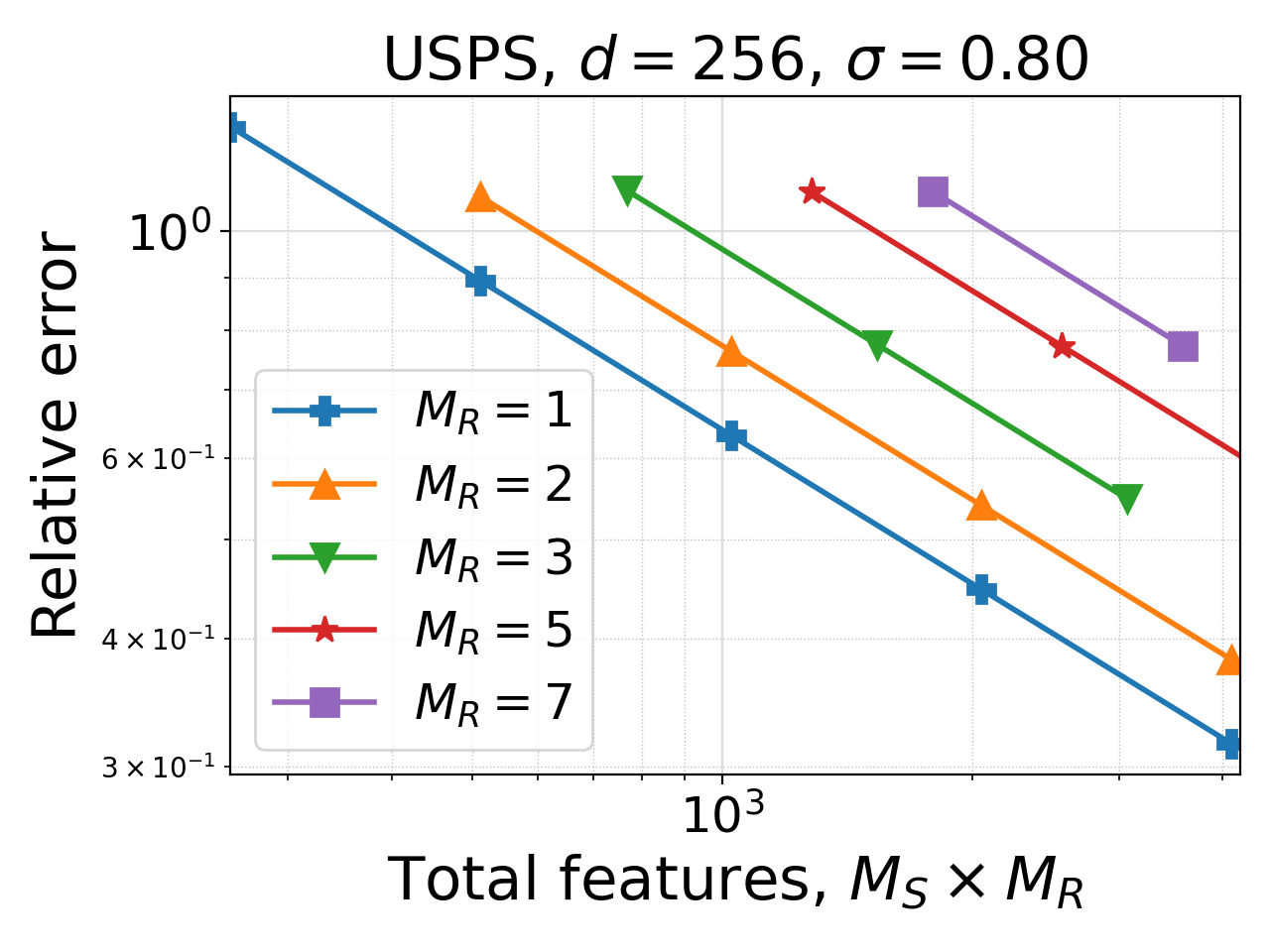} 
\includegraphics[width=\linewidth]{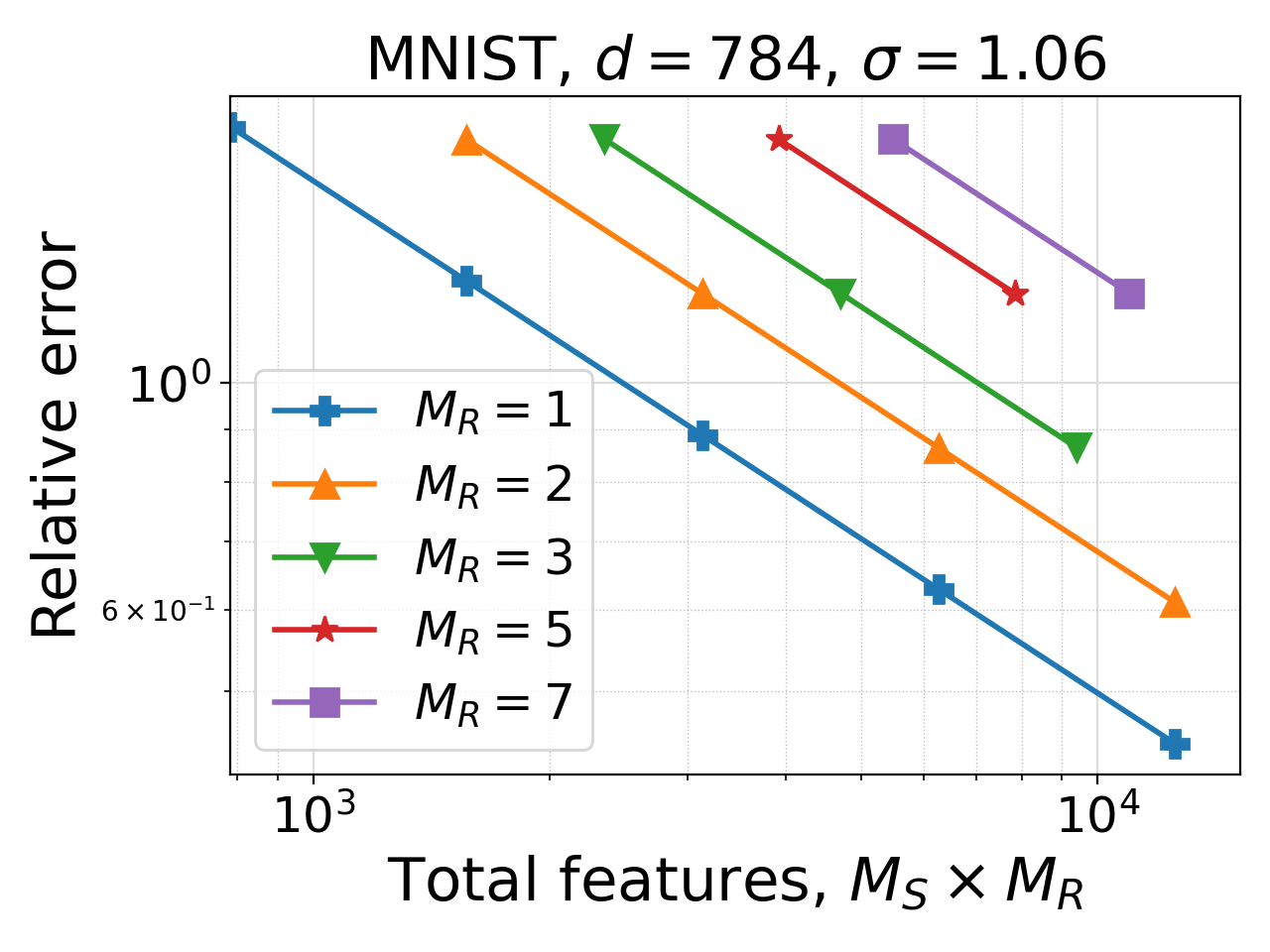}
\end{minipage}\hfill
\begin{minipage}{0.33\textwidth}
\includegraphics[width=\linewidth]{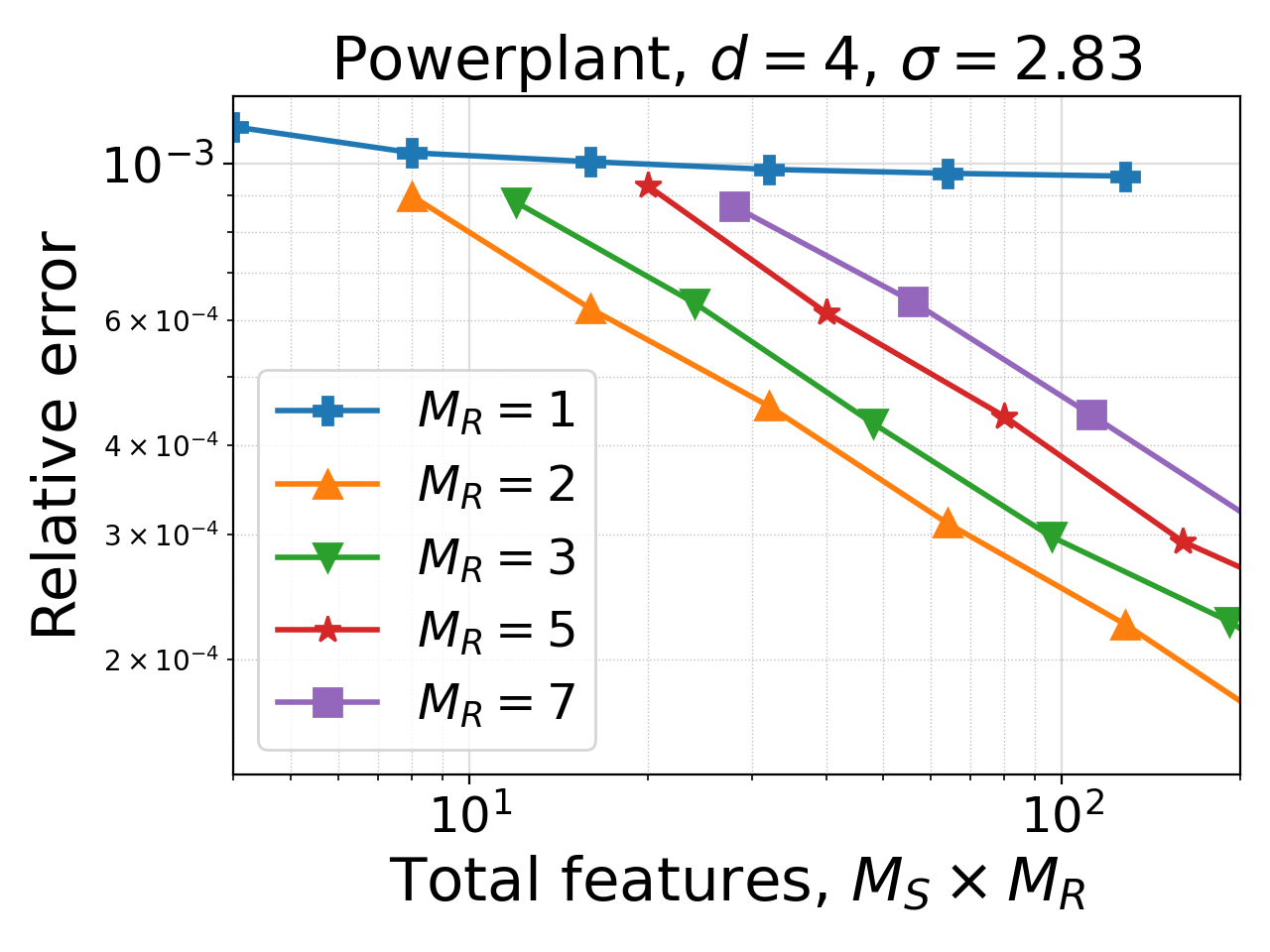}
\includegraphics[width=\linewidth]{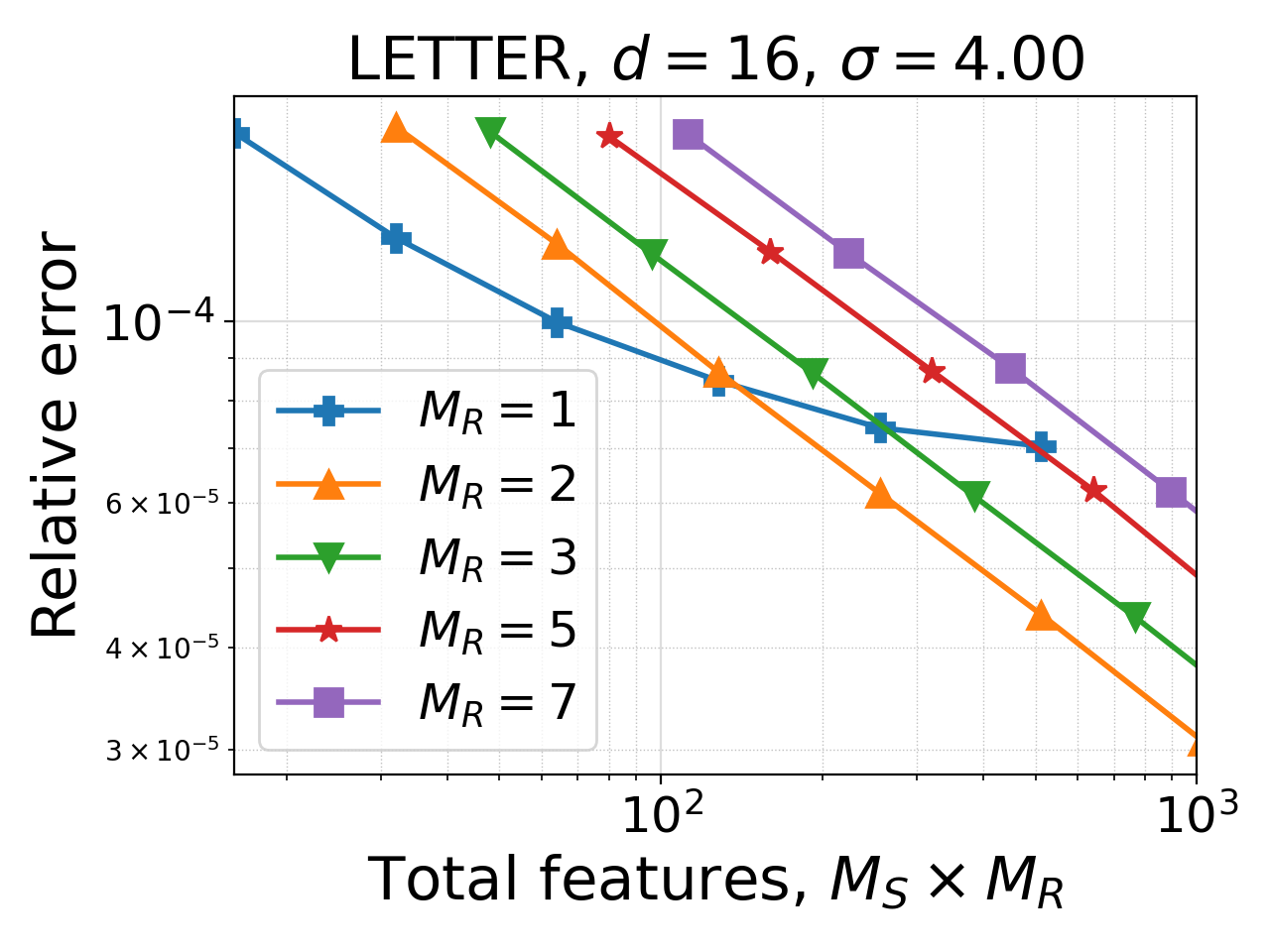}
\includegraphics[width=\linewidth]{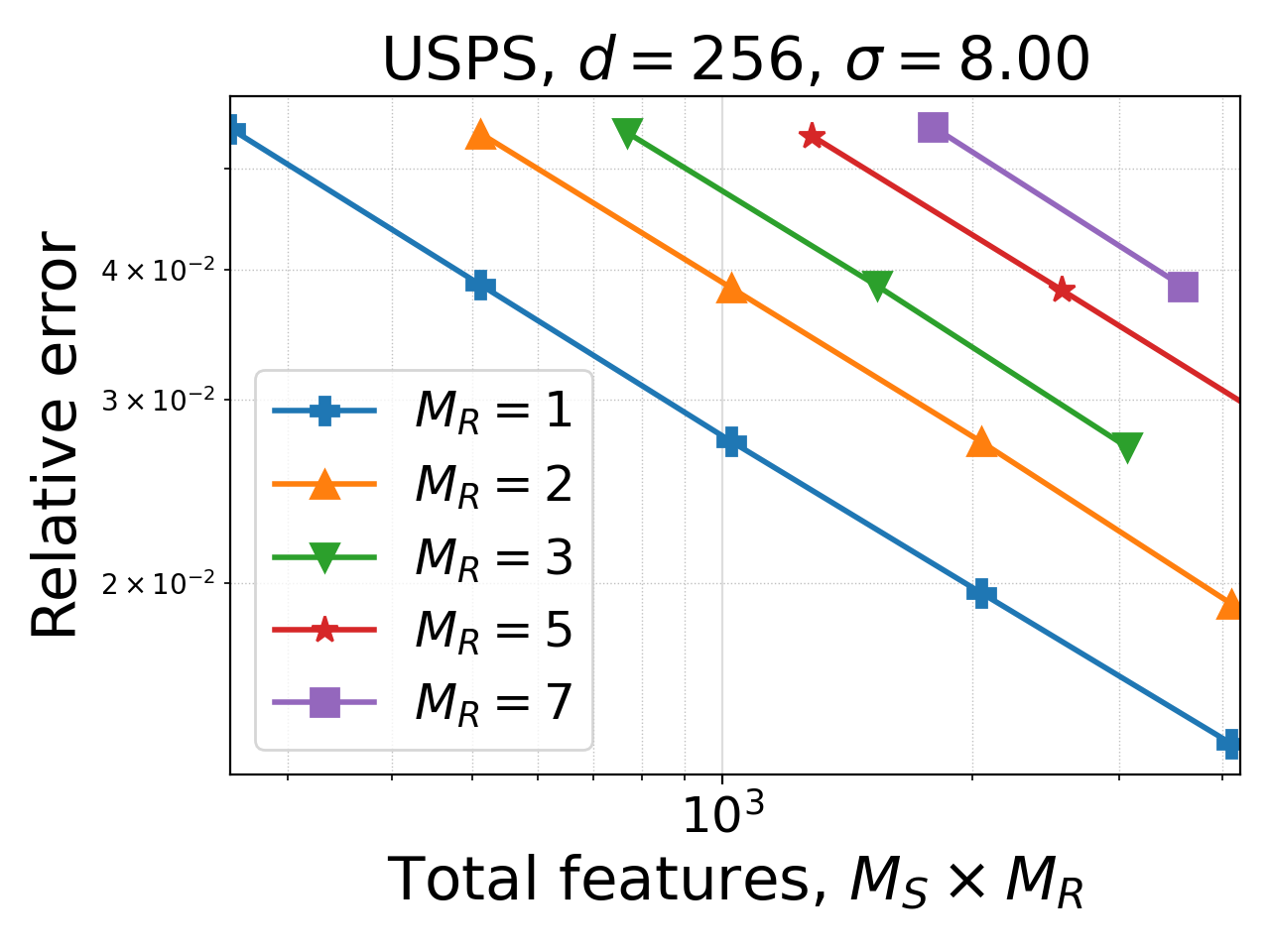}
\includegraphics[width=\linewidth]{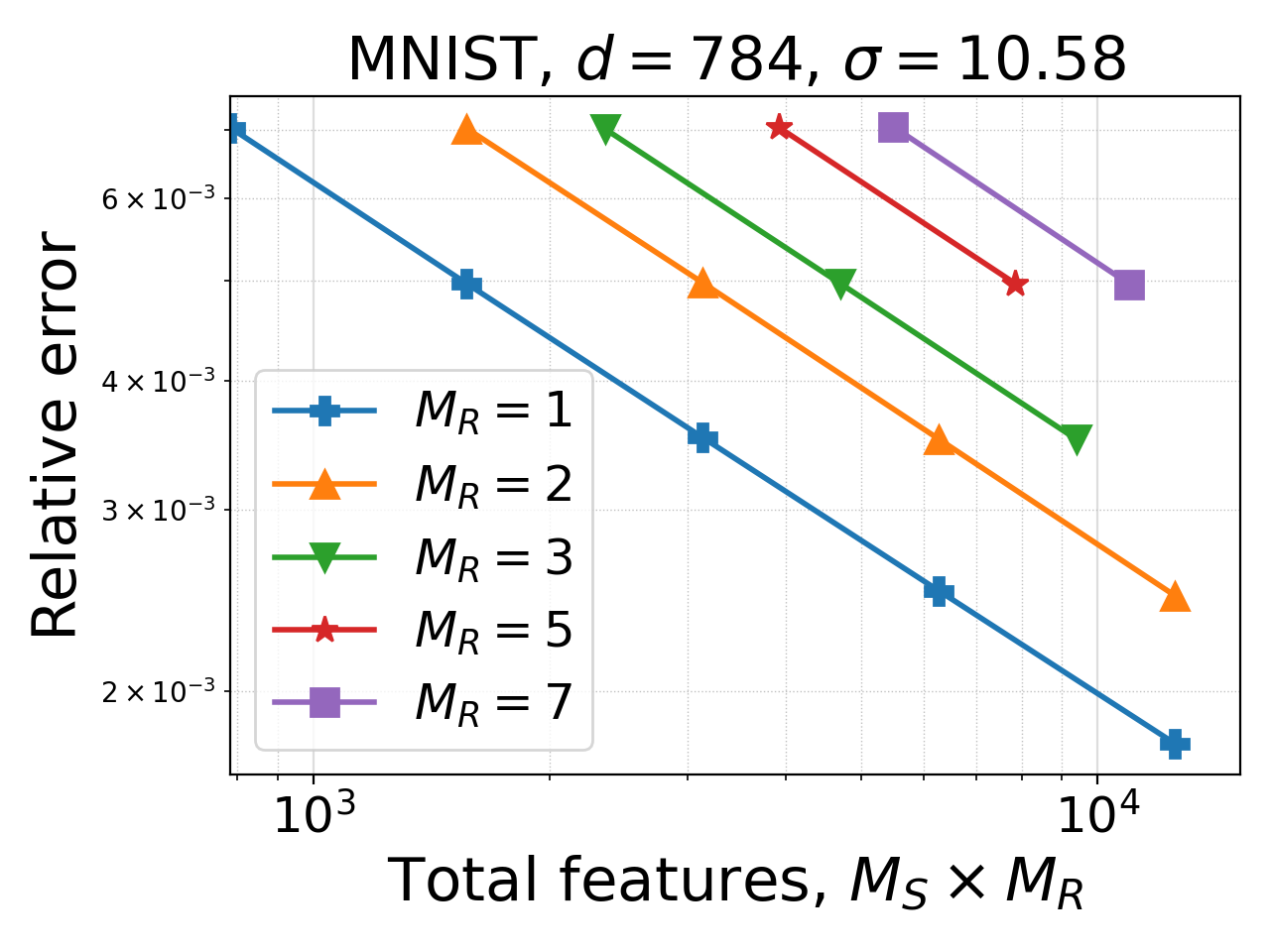}
\end{minipage}\hfill
\begin{minipage}{0.33\textwidth}
\includegraphics[width=\linewidth]{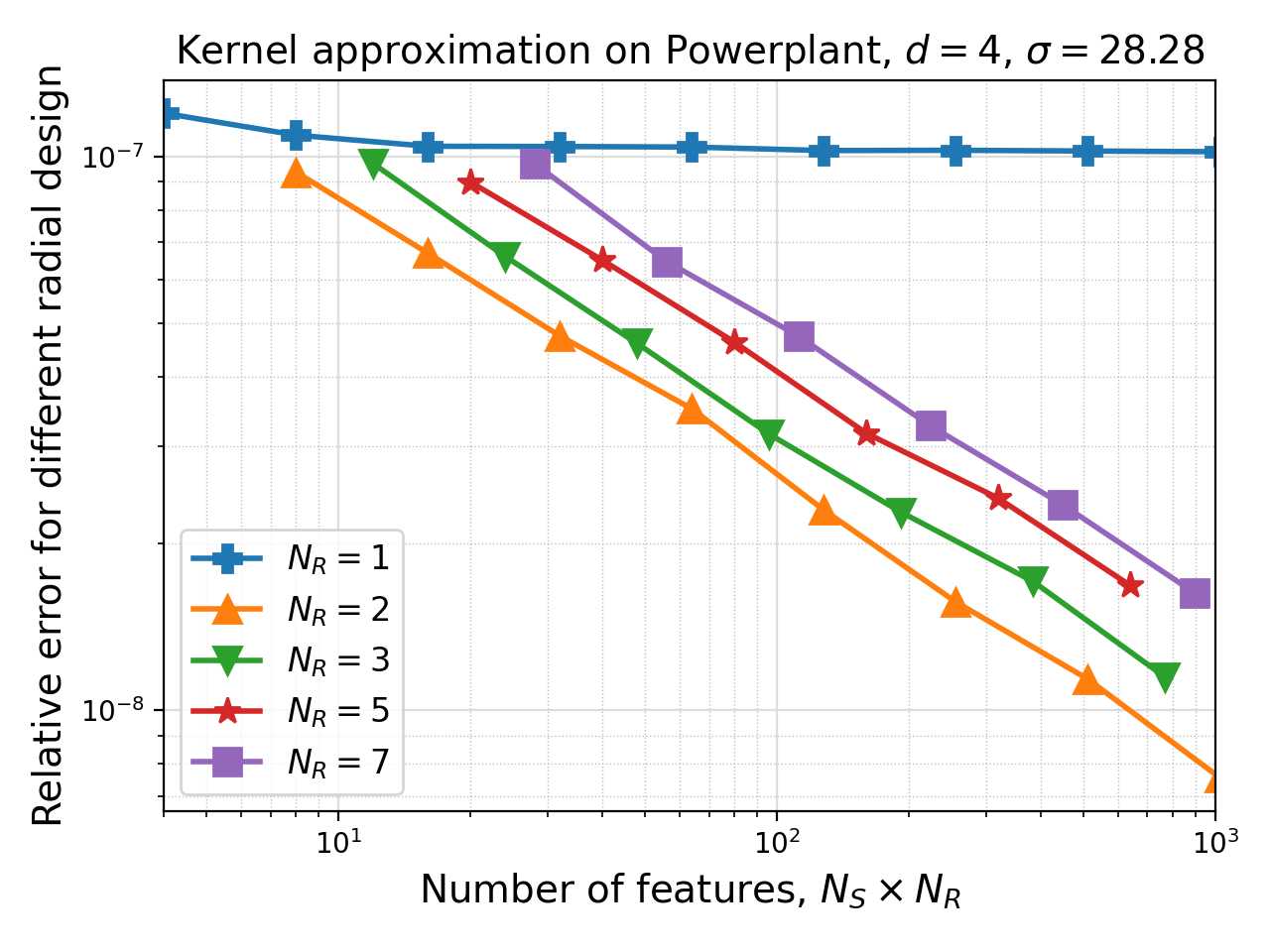}
\includegraphics[width=\linewidth]{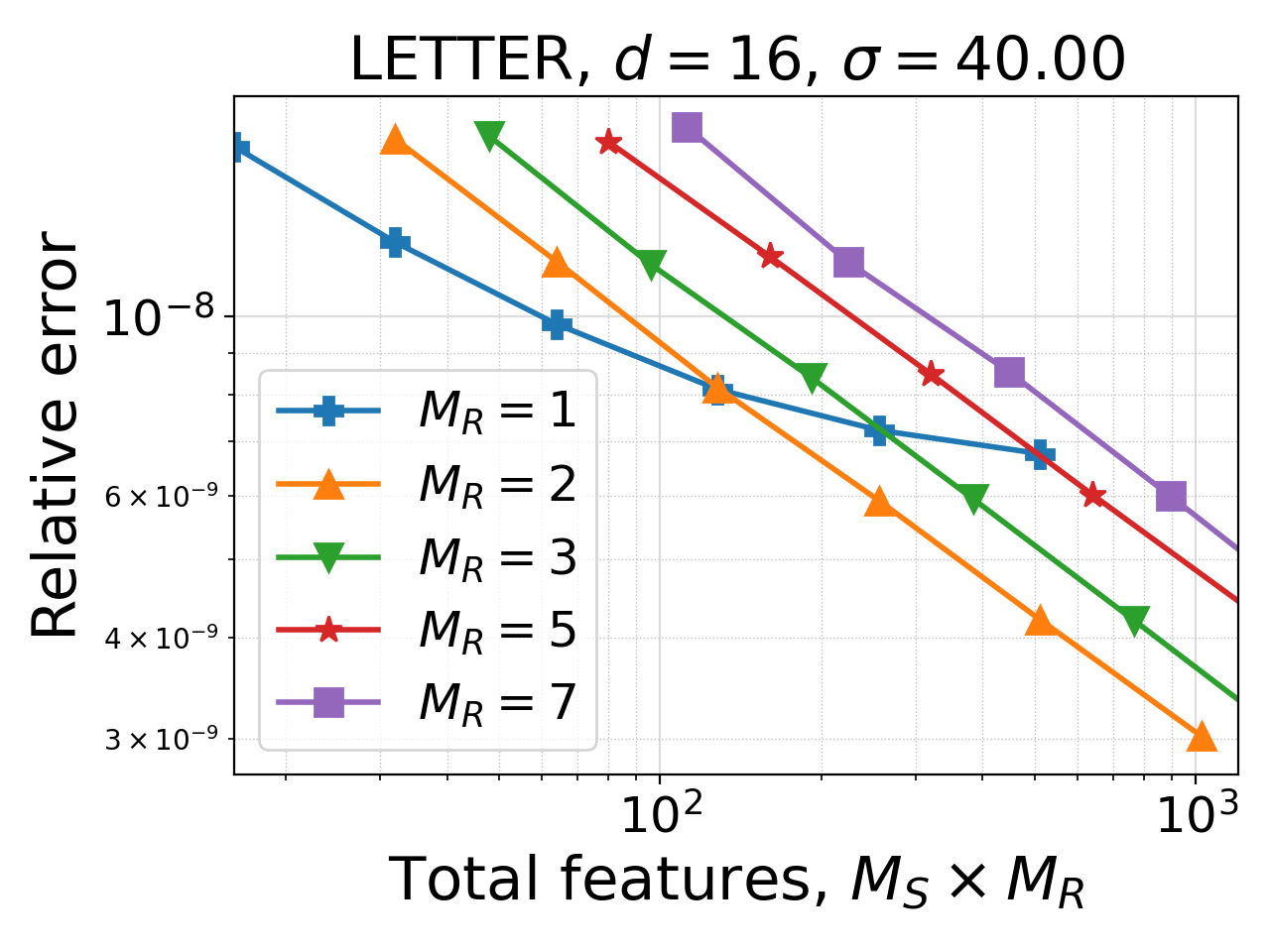}
\includegraphics[width=\linewidth]{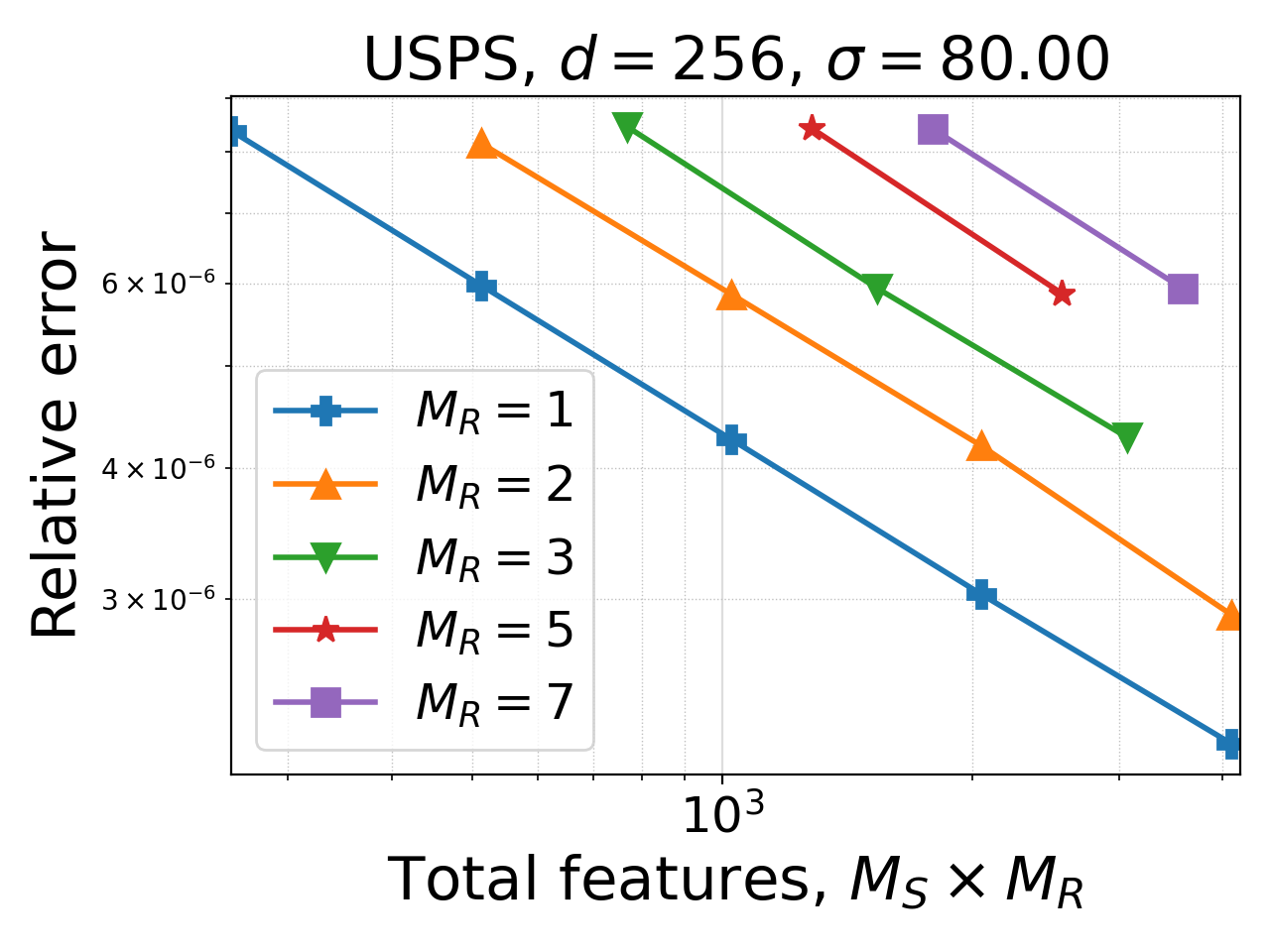}
\includegraphics[width=\linewidth]{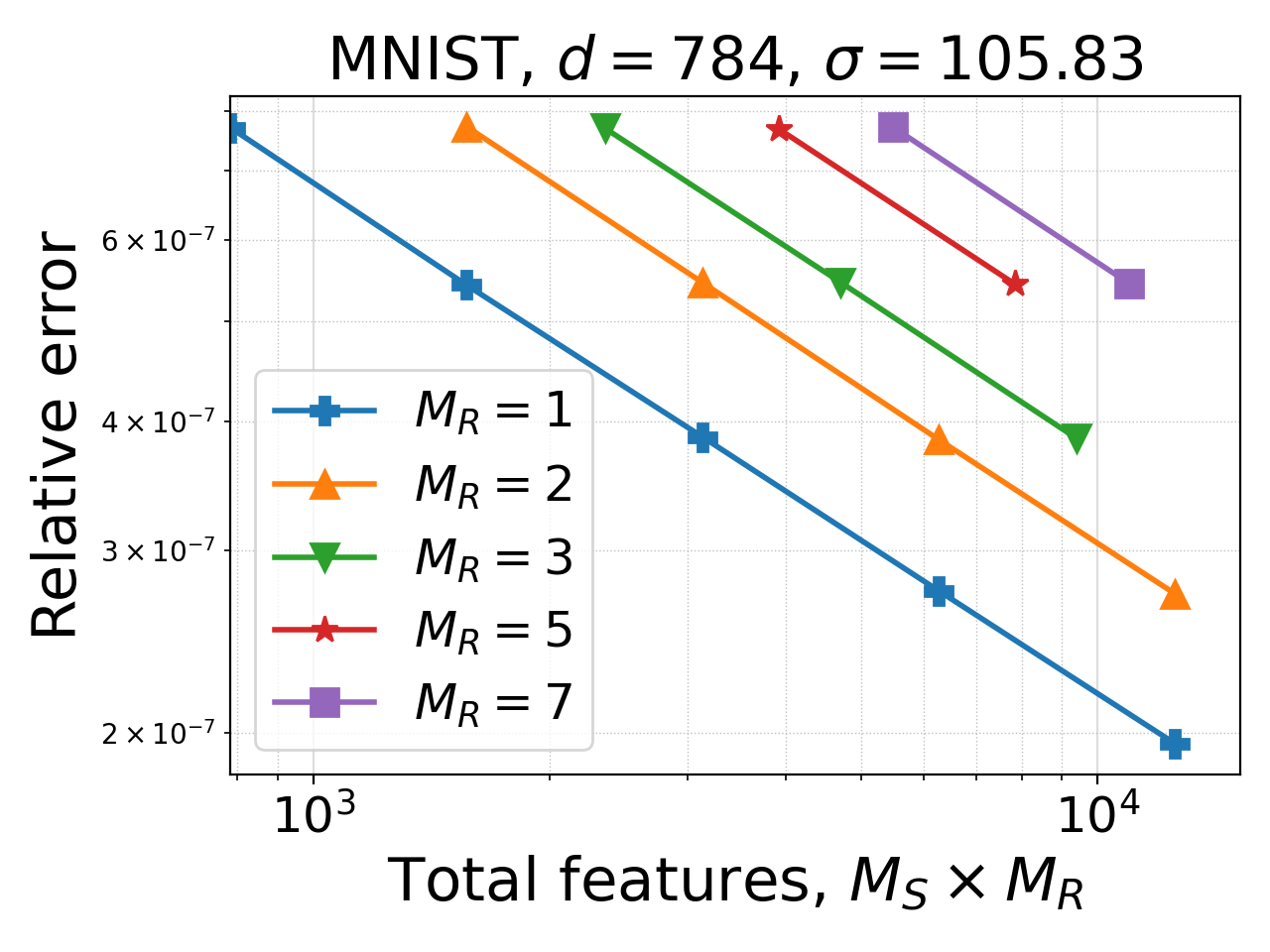}
\end{minipage}
\caption{The approximation error w.r.t. total number of features under different combination of kernel bandwidth and dataset dimension. From top to bottom are 4 real datasets.}
\label{fig:radial_nodes_analysis_real}
\end{figure}

\subsection{Maximum error and dataset diameter}
Figure~\ref{fig:diameter_analysis} shows the maximum error of a dataset of diameter $D$. The synthetic dataset consists of 1k uniform samples from the hypersphere of radius $D/2$. As we can see from the experiment, the error given by our spherical-radial algorithm (SR-OKQ-SOMC) with optimal kernel quadrature and symmetrized orthogonal Monte Carlo nodes (details explained in \Cref{sec:num_sim_1}) is smaller than other methods by a large margin and admits modest increase with dataset diameter when $d=4$. Without optimal kernel quadrature, our method SR-OMC behaves similarly to SSR, followed by QMC and ORF. The vanilla RFF has much bigger error.

In higher dimension, SR-OMC and RFF have the smallest error, followed by SR-OKQ-SOMC and ORF. Although QMC has $1/N$ convergence rate in theory, it has inferior behavior compared with other methods until the number of features is beyond a threshold, which hard to achieve in practice.

In the experiment we set the spherical kernel bandwidth $\sigma_{\mathbb{S}} = 1$ in SR-OKQ-SOMC for $d=4$ and $d=16$. This parameter can be tuned bigger in high dimension to improve the algorithm performance, and we leave the optimal selection of $\sigma_{\mathbb{S}}$ for future work.

\begin{figure}[h]
    \begin{subfigure}[b]{.49\linewidth}
  \centering
    \includegraphics[width=.99\linewidth]{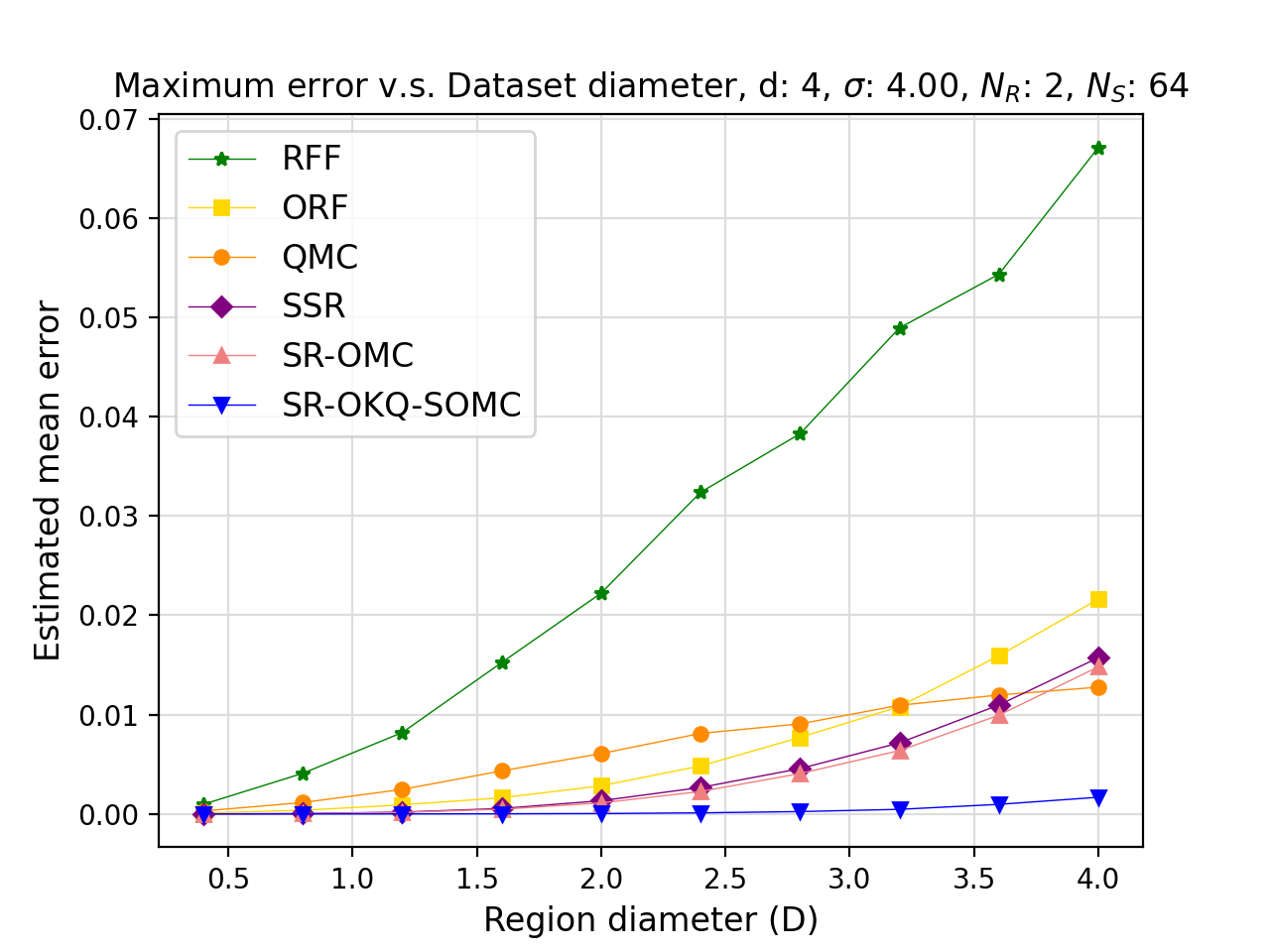}
    \vspace{-10pt}
    \caption{$d = 4$}
    \label{fig:max_err_d=4}
  \end{subfigure} 
  \begin{subfigure}[b]{0.49\linewidth}
  \centering
    \includegraphics[width=.99\linewidth]{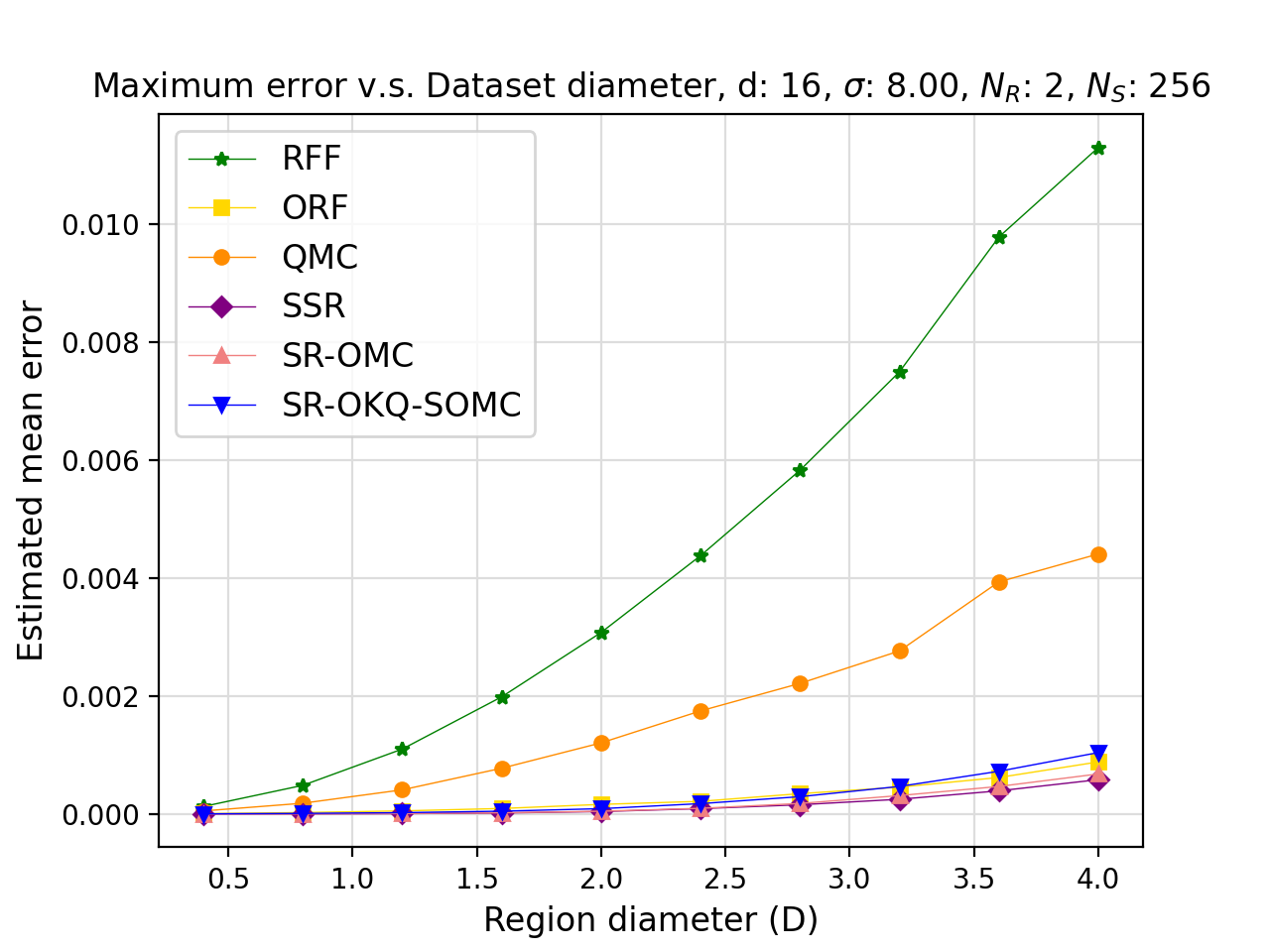}
    \vspace{-10pt}
    \caption{$d = 16$}
    \label{fig:max_err_d=16}
  \end{subfigure} 
\caption{Relation between maximum error and dataset diameter for different kernel approximation schemes. The maximum approximation error is computed over $10^2$ uniformly-ditsributed samples on the sphere of radius $D$. \label{fig:diameter_analysis}}
\end{figure}
\newpage

\end{document}